\documentclass{article}
\usepackage[preprint]{preprint}
\usepackage{times}
\usepackage[T1]{fontenc}
\usepackage[hyphens]{url}
\usepackage{graphicx}
\usepackage{caption}
\usepackage{tabularx}
\usepackage{placeins}
\usepackage{hyperref}
\hypersetup{hidelinks}

\usepackage{amsmath}
\usepackage{amssymb}
\usepackage{amsthm}
\usepackage{thm-restate}
\usepackage{booktabs}
\usepackage{array}
\usepackage{listings}
\usepackage{tikz}
\usetikzlibrary{arrows.meta}
\newsavebox{\OptABAExampleDAG}

\newif\ifOptABAIncludeSupplement
\ifdefined\OptABAMainOnly
  \OptABAIncludeSupplementfalse
\else
  \OptABAIncludeSupplementtrue
\fi

\newcommand{\OptABASuppListingsRef}[2]{%
  \ifOptABAIncludeSupplement Listings~\ref{#1}--\ref{#2} in Appendix~\ref{supp:caba-core-encoding}\else the appendix section ``Causal ABA Encodings''\fi}
\newcommand{\OptABAIfSupplement}[1]{\ifOptABAIncludeSupplement#1\fi}
\newcommand{\OptABAIfSupplementElse}[2]{\ifOptABAIncludeSupplement#1\else#2\fi}

\lstdefinestyle{numberedasp}{
    numbers=left,
    numberstyle=\tiny,
    stepnumber=1,
    numbersep=4pt,
    xleftmargin=1.5em,
    captionpos=b,
}
\lstnewenvironment{aspencoding}[1][]
    {\lstset{style=numberedasp,#1}}
    {}

\newtheorem{example}{Example}

\newtheorem{definition}{Definition}

\newtheorem{supplemma}{Lemma}

\newtheorem{suppproposition}{Proposition}

\newcommand{\indep}{\!\perp \!\!\!\! \perp\!}
\newcommand{\notindep}{\not\!\perp \!\!\!\! \perp\!}
\newcommand{\given}{ \mid }
\newcommand{\enforce}{\mathsf{enforce}}
\newcommand{\adorned}{\mathsf{opt}}

\newcommand{\ASP}[1]{\mathrm{#1}}
\newcommand{\aspP}{\ASP{P}}
\newcommand{\aspO}{\ASP{O}}
\newcommand{\aspS}{\ASP{S}}

\newcommand{\aspR}{\ASP{R}}
\newcommand{\aspW}{\ASP{W}}

\title{Optimal Correction Sets for Provable\\Redress of Contestable Causal Discovery}
\author{Fabrizio Russo$^{1}$ \quad Giuseppe Mazzotta$^{2}$ \quad Carmine Dodaro$^{2}$\\
\textbf{Francesco Ricca$^{2}$ \quad Francesca Toni$^{1}$}\\[0.35em]
\normalfont $^{1}$Imperial College London, London, United Kingdom\\
\normalfont $^{2}$University of Calabria, Rende, Italy\\[0.25em]
\normalfont\texttt{\{fabrizio,ft\}@imperial.ac.uk}\\
\normalfont\texttt{\{giuseppe.mazzotta,carmine.dodaro,francesco.ricca\}@unical.it}}
\hypersetup{pdftitle={Optimal Correction Sets for Provable Redress of Contestable Causal Discovery},
  pdfauthor={Fabrizio Russo, Giuseppe Mazzotta, Carmine Dodaro, Francesco Ricca, Francesca Toni}}
\begin{document}

\maketitle

\begin{abstract}
Constraint-based causal discovery relies on conditional-independence (CI) tests that may conflict in finite samples, leaving no causal graph that satisfies them all. Causal Assumption-based Argumentation (ABA) makes these tests explicit as defeasible commitments, and offers correspondence guarantees with a subset of them, but its existing repair procedure releases facts heuristically and offers no formal response to a user's contestation of a release. We argue that contestable causal discovery requires an inspectable repair object, a justification for it, and a predictable revision procedure for such contestations. We therefore formulate CI-fact repair as a minimum-cost correction problem and implement it in Answer Set Programming (ASP). Building on Causal ABA, we provide a compact and efficient ASP encoding of causal consistency, with soundness and completeness guarantees, and a two-way projection correspondence between optimal stable models and minimum-cost correction sets. These results establish fidelity to retained facts, subset-minimality and cost optimality of the repair, conflict-based justifications for released facts, and formally guaranteed redress under user contestations. An empirical evaluation supports the formal account: OptABA-PC retains more correct facts, reduces ambiguity in the compatible outputs, and improves graph reconstruction over Causal ABA on common benchmarks.
\end{abstract}

\section{Introduction}

Causal discovery supports scientific understanding and decision making by turning observational and/or interventional data into hypotheses about the underlying causal mechanisms. It is a central ingredient in tasks such as explanation, prediction under interventions, and policy evaluation~\citep{pearl2009causality}. A wide range of approaches exists, including constraint-based methods relying on independence testing, score-based methods that optimise a goodness-of-fit criterion over graphs, and functional causal model approaches that leverage assumptions on the data-generating mechanism; see e.g.~\citep{glymour2019review}.

Constraint-based algorithms such as PC~\citep{spirtes2000causation} infer structure from conditional-independence (CI) information. Causal ABA~\citep{russo2024abacd} instead treats selected CI facts as defeasible commitments that every returned graph must satisfy if enforced, exposing data-derived premises and a point of contestation. In this context, a \emph{contestation} asks the system to reconsider a result by revising those input facts. Computational argumentation supports this view: contestation should lead beyond static explanation to reliable revision when redress is warranted~\citep{leofante2024contestable}; argumentative decision making captures that reliability through contestability guarantees under user-driven changes~\citep{freedman2025argllm}.

Finite-sample CI tests may conflict, making Causal ABA incoherent. Causal ABA restores coherence by fixed strength-ordered release, revealing discarded facts but offering neither optimality nor a guaranteed response to a contested release or cost. We make the correction set the \emph{redress} object and prove optimal repair. Our claims concern \emph{CI-fact selection}: redress guarantees revised commitments and consistency with them, without guaranteeing recovery of the generating graph.
We focus on the realistic regime where finite-sample CI tests are noisy and potentially conflicting. Majority-PC (MPC)~\citep{colombo2014order}, an improved PC variant for finite samples, learns a partially directed graph from CI tests. Under the standard assumptions it is sound with an oracle for error-free CI judgements, but finite-sample outcomes may conflict.

\begin{example}[Causal discovery]\label{ex:cd}
\sbox{\OptABAExampleDAG}{%
\begin{minipage}[t]{0.30\linewidth}
\normalfont
\captionsetup{type=figure,justification=centering}
\centering
\begin{tikzpicture}[xscale=1.15,yscale=1.15]
\tikzset{cnode/.style={draw,circle,inner sep=2pt,minimum size=0.5cm}}
\node[cnode] (E) at (0,0.5) {$E$};
\node[cnode] (R) at (0,-0.5) {$R$};
\node[cnode] (O) at (1.1,0) {$O$};
\node[cnode] (I) at (2.3,0) {$I$};
\path[->,thick,>=Stealth]
 (E) edge (O)
 (R) edge (O)
 (O) edge (I)
 (E) edge[out=0,in=155] (I);
\end{tikzpicture}
\caption{Example DAG.}\label{fig:gt-dag}
\end{minipage}%
}
\parshape 8
  0pt 0.68\linewidth 0pt 0.68\linewidth 0pt 0.68\linewidth
  0pt 0.68\linewidth 0pt 0.68\linewidth 0pt 0.68\linewidth
  0pt 0.68\linewidth 0pt \linewidth
\leavevmode\vadjust pre{\nobreak\vbox to 0pt{%
  \hbox to \linewidth{\hfill\usebox{\OptABAExampleDAG}}\vss}}%
Consider a simple socioeconomic setting with Education~($E$), Race~($R$), Occupation~($O$) and Income~($I$) with ground-truth directed acyclic graph (DAG) in Fig.~\ref{fig:gt-dag}. When fed data generated from this DAG, MPC runs $14$ CI tests (see Ex.~\ref{ex:abapc}) and uses them sequentially to remove and orient edges. With finite-sample noise, it may report \emph{mutually incompatible} outcomes. For instance, it includes the high-strength independences $E\indep O \given \{I\}$ and $O\indep I,$ even though it also reports dependencies such as $E\notindep O$ and $E\notindep I.$ These four statements already cannot be simultaneously satisfied by any DAG: if $E$ and $O$ are dependent but become independent after conditioning on $I$, then $I$ must block every active $E$--$O$ path, which implies $O\notindep I$---contradicting $O\indep I$. Nevertheless, MPC returns a non-empty graph (the single undirected edge $O\!-\!R$), yet neither identifies the incompatible constraints nor records which it effectively discards.
\end{example}

We build on work on explanation and repair in logic programming~\citep{Reiter87Diagnosis,Liffiton08MUS,ALVIANO2023103931}, particularly minimal unsatisfiable subsets (MUSes) and minimal correction subsets (MCSes). A MUS isolates an irreducible conflict that explains incoherence; an MCS is a minimal set of rules or facts whose removal restores coherence. In Causal ABA, MUSes explain which CI-test outcomes conflict, while MCSes expose the minimal releases available as redress when a user contests a result.
Operationally, weighted CI facts yield retained and released facts, compatible DAGs, and a conflict certificate for each release; a user can then require a fact or raise its release cost and re-solve. Concretely, we: (i) formulate Causal ABA's released CI facts as an explicit repair; (ii) prove a two-way correspondence between optimal MCSes and optimal stable models; (iii) prove soundness and completeness of a compact Bayes-ball encoding~\citep{shachter1998bayesball}; (iv) establish conflict certificates and guaranteed hard and graded contestability; (v) implement the optimisation as OptABA-PC; and (vi) compare it with Causal ABA and standard baselines, generally improving fact selection and/or graph reconstruction while providing redress guarantees.
\vspace{-0.2cm}
\section{Related Work}\label{sec:related}
\vspace{-0.2cm}
The relationship between inconsistency, repairs, and hitting sets is well established. In model-based diagnosis, diagnoses are hitting sets of conflicts~\citep{Reiter87Diagnosis}; in constraint systems and SAT, MUSes expose irreducible conflicts and MCSes repair them by hitting those conflicts~\citep{Liffiton08MUS}. Related ideas appear in optimisation as maximum feasible subsystem and IIS-cover problems, where one deletes a minimum-weight set of constraints to restore feasibility~\citep{ChinneckDravnieks1991IIS,AmaldiPfetschTrotter2003MaxFS}, and in implicit hitting-set methods, which alternate between a feasibility oracle returning an unhit conflict and an optimiser selecting a new hitting set~\citep{MorenoCentenoKarp2013IHS}. Core-guided MaxSAT solvers operationalise the same pattern over soft clauses~\citep{DBLP:journals/constraints/MorgadoHLPM13}. Our use of MUS/MCS terminology therefore does not introduce a new optimisation paradigm; rather, it makes explicit what the repair object is in Causal ABA.

In causal discovery, solver-assisted methods have used SAT to enforce constraints from overlapping observational or interventional data and query graph features common to all solutions~\citep{triantafillou2010learning,triantafillou2015COMBINE}, weighted MaxSAT to infer ancestral relations that maximise satisfied evidence~\citep{magliacane2016ancestral}, and weighted constraint optimisation to obtain robust representative graphs~\citep{borboudakis2016towards}. These methods optimise causal conclusions or graph-level agreement; they do not make the correction set of released facts the explicit output.

SAT/ASP has also been used to encode $d$-separation exactly. \citet{Hyttinen13SAT} introduce variables for directed and bidirected edges and bounded $d$-connecting walks, then compute backbone features shared by all satisfying cyclic/latent causal graphs; inconsistent inputs yield no repair. \citet{hyttinen2014ASP} handle noisy inputs by deriving \texttt{fail} atoms when a graph violates a CI constraint and minimising their total weight. They study hard-dependence, constant, and log-derived weighting schemes; we call the DAG-restricted empirical implementation of this objective ASPCR-DAG. Our proposed OptABA-PC likewise uses weighted optimisation, but its CI-strength weights are release costs, its released facts form an explicit correction set, and every retained fact is satisfied by every returned graph. The encoding of \citet{hyttinen2014ASP} instead permits penalised \texttt{fail} atoms in an optimal model. Compared to ASPCR, Causal ABA targets DAGs, which permits the compact Bayes-ball encoding while preserving an argumentative reading of retained facts.

\citet{Hyttinen2017core} recast the same weighted graph-disagreement objective as MaxSAT: weighted unit soft clauses represent CI constraints, hard clauses connect them to graph satisfaction, a SAT solver extracts unsatisfiable cores, and an integer program computes a minimum-cost hitting set of the accumulated cores. This is the closest prior optimisation architecture to OptABA-PC. Its cores and hitting sets already provide conflict/repair machinery; our additional results connect explicit released-fact sets to optimal stable models and establish fact enforcement and contestability guarantees. \citet{DBLP:journals/ijar/RantanenHJ20} use branch-and-bound with partial-graph propagation, tailored branching, and core-derived bounds.\footnote{No public implementation was identified for these latter methods, so we compare to \citep{hyttinen2014ASP}.}
Complementary SAT/ASP work weakens assumptions or extends the graph semantics~\citep{Zhalama2017SAT,Zhang2019ASP}; OptABA-PC keeps Causal~ABA's DAG target and correspondence guarantees (that ASPCR lacks), but adds optimality and contestability.

In Causal ABA, accepted CI facts are not merely soft clauses or post-hoc \texttt{fail} atoms: they are enforceable ABA/ASP facts whose satisfaction is guaranteed in every returned stable model, yielding traceability from data-derived facts to graphs. OptABA-PC makes the released facts first-class: by placing weak constraints on unselected objective atoms, optimal stable models correspond to optimal correction sets of the Causal ABA program. Whereas existing ASP work on MUSes studies subset-minimality, enumeration, and cautious reasoning~\citep{ALVIANO2023103931}, we connect optimal stable models to minimum-cost correction sets and establish contestability guarantees.
Quantitative Argumentation for Causal Discovery (QACD)~\citep{Wei2026QACD} also treats noisy CI evidence as graded arguments. It iteratively revises edge acceptability through gradual semantics on abstract argumentation frameworks, with no correspondence guarantees between CIs and output DAGs. Instead, our method identifies the optimal release of CI facts, with correspondence and redress guarantees.
\vspace{-0.2cm}
\section{Preliminaries}\label{sec:preliminaries}
\vspace{-0.2cm}
We consider a finite set of variables $\mathbf{V}$. We write $X\indep Y \given \mathbf{Z}$ for conditional independence between variables $X$ and $Y$ given conditioning set $\mathbf{Z}\subseteq \mathbf{V}\setminus\{X,Y\}$, and $X\notindep Y \given \mathbf{Z}$ for dependence ($X\indep Y$ and $X\notindep Y$ if $\mathbf{Z}=\emptyset$). In our causal-discovery setting, a CI fact is a ground assertion of one such relation, obtained from a statistical CI test for a threshold $\alpha$.
\vspace{-0.15cm}

A causal graph is a DAG $G=(\mathbf{V},E)$ whose edges represent direct causal relations. A node $Z$ on a path is a \emph{collider} on that path if the two adjacent path edges both have arrowheads into $Z$, as in $X\to Z\leftarrow Y$; otherwise it is a \emph{non-collider}. A node $Y$ is a descendant of $X$ if there is a directed path from $X$ to $Y$. A collider is \emph{open} given a conditioning set $\mathbf{Z}$ when the collider itself or at least one of its descendants belongs to $\mathbf{Z}$; otherwise it is closed. DAGs entail CI relations through $d$-separation~\citep{pearl1988probabilisticreasoning}: a path between $X$ and $Y$ is active given $\mathbf{Z}$ if every non-collider on the path is outside $\mathbf{Z}$ and every collider is open given $\mathbf{Z}$; $X$ and $Y$ are $d$-connected given $\mathbf{Z}$ if some such active path exists, and $d$-separated otherwise.
Observational data identify at most a Markov equivalence class (MEC), represented by a \emph{completed partially} DAG (CPDAG) whose directed edges are shared by all DAGs in the class and whose undirected edges are reversible~\citep{chickering2002learning}.
\vspace{-0.25cm}
\subsection{Answer Set Programming}
\vspace{-0.25cm}
Answer Set Programming (ASP) represents a problem by rules whose stable models are its solutions. A ground normal rule has form $h\texttt{ :- }l_1,\ldots,l_n$, where each body literal is an atom $a$ or default negation $\texttt{not }a$, which holds when $a$ is absent; a headless rule forbids its body. We write $AS(P)$ for the stable models of $P$ and call $P$ \emph{coherent} when $AS(P)\neq\emptyset$. \OptABAIfSupplementElse{App.~\ref{supp:asp}}{The appendix section ``Detailed ASP Semantics''} gives the reduct-based semantics~\citep{gel88}.
We use choice rules $\{a\}$ to allow selecting $a$ and weak constraints $\texttt{:}\sim B.\ [w@l,o]$ to charge cost $w$ when $B$ holds~\citep{DBLP:journals/tplp/CalimeriFGIKKLM20}. The identifier $o$ distinguishes equal-weight facts. With priority $l$ set to 1 for all constraints, an optimal stable model simply minimises the total cost.

\vspace{-0.25cm}
\subsection{ABA and Causal ABA}\label{sec:aba-prelim}
\vspace{-0.25cm}
An assumption-based argumentation framework (ABAF) is a tuple $(\mathcal{L},\mathcal{R},\mathcal{A},\overline{\cdot})$, where $\mathcal{L}$ is a language, $\mathcal{R}$ is a set of rules over $\mathcal{L}$, $\mathcal{A}\subseteq\mathcal{L}$ is a set of assumptions, and $\overline{\cdot}$ maps each assumption $a\in\mathcal{A}$ to its contrary $\overline{a}\in\mathcal{L}$~\citep{DBLP:journals/ai/BondarenkoDKT97,CyrasFST2018}. A set $S\subseteq\mathcal{A}$ derives $q$, written $S\vdash q$, if $q$ follows from $S$ by the rules. It attacks an assumption $a$ when $S'\vdash \overline{a}$ for some $S'\subseteq S$. A stable extension is a conflict-free set of assumptions that attacks every assumption outside it.
This formalism is useful for causal discovery because assumptions are defeasible commitments: they may be accepted or rejected based on their interrelations.

Causal ABA~\citep{russo2024abacd} uses assumptions for candidate graph choices (directed edges) and CI facts, and rules encoding acyclicity and $d$-separation. A stable extension of Causal ABA corresponds to a candidate DAG together with the CI facts it accepts. Enforcing a CI fact means requiring it to be in the accepted set and rejecting any stable extension whose graph violates it; releasing a fact means allowing it to be absent so coherence can be restored. Under the standard correspondence between ABA under stable semantics and ASP stable models~\citep{RapbergerUT24}, this ABAF can be solved as an ASP program. Hence Causal ABA inherits both the argumentation reading of assumptions and the computational machinery of ASP.
Causal ABA may admit multiple stable extensions for the same enforced CI facts, with corresponding DAGs spanning one or more MECs. This multiplicity is informative: the CI--DAG correspondence guarantees that every candidate satisfies those facts, while differing graphs expose the structural ambiguity they leave unresolved.
These sets describe uncertainty conditional on the enforced facts, without implying calibrated confidence or guaranteed coverage of the generating MEC.
\vspace{-0.2cm}
\subsection{Causal ABA-PC}
\vspace{-0.2cm}
Example~\ref{ex:cd} highlights the situation Causal ABA is designed to expose: the empirical CI facts may be mutually incompatible, yet we still want returned graphs to satisfy a subset of the facts that remain enforced and to record the ones released.

\begin{example}[Causal ABA]\label{ex:encodingpreds}
Continuing Ex.~\ref{ex:cd}, Causal ABA's grounded ASP encoding would include:
\begin{lstlisting}
collider(o,e,r)  :- arrow(e,o), arrow(r,o), e!=r.
nb(o,e,r,Z)      :- in(o,Z), collider(o,e,r).
ap(e,r,p_eor,Z)  :- arrow(e,o), arrow(r,o), nb(o,e,r,Z).
ap(e,r,p_eior,Z) :- arrow(e,i), arrow(i,o), arrow(r,o), nb(o,e,r,Z).
dep(e,r,Z)       :- ap(e,r,_,Z).
indep(e,r,Z):- not ap(e,r,p_eor,Z),not ap(e,r,p_eior,Z).
\end{lstlisting}
\vspace{-0.2cm}
Intuitively, $\texttt{arrow(x,y)}$ encodes a directed edge $X\to Y$; $\texttt{in(v,Z)}$ encodes membership in the conditioning set; $\texttt{collider}$ and $\texttt{nb}$ characterise whether a node blocks a path given $\mathbf{Z}$; and $\texttt{ap(x,y,p,Z)}$ witnesses an active path, where labels such as $\texttt{p\_eor}$ and $\texttt{p\_eior}$ are mnemonic identifiers for the corresponding path shapes. The atoms $\texttt{dep}/\texttt{indep}$ are then derived from $\texttt{ap}$. Empirical CI tests are provided as input facts (e.g., $\texttt{dep(o,r,\{\})}$ for $O\notindep R$ and $\texttt{indep(r,i,\{o\})}$ for $R\indep I\given\{O\}$) and enforced by rejecting any stable model whose derived $\texttt{dep}/\texttt{indep}$ atoms disagree with the input. For $\mathbf{Z}=\{O\}$, conditioning on the collider $O$ opens the path $E\to O\leftarrow R$, so the encoding derives $\texttt{dep(e,r,\{o\})}$, representing $E\not\indep R\mid O$.
\end{example}

If the enforced facts are jointly inconsistent, the ASP program becomes incoherent (no stable models) and a repair step must decide which facts to release. Causal ABA-PC~\citep{russo2024abacd} is a concrete instantiation of Causal ABA in which the defeasible constraints are CI facts obtained from MPC~\citep{colombo2014order}. Causal ABA-PC assigns each fact a \emph{strength} derived from its test $p$-value~\citep{casella2002statistical} and then incrementally releases low-strength facts until coherence is restored: a simple heuristic baseline repair strategy.

\begin{example}[Causal ABA-PC]\label{ex:abapc}
Continuing Ex.~\ref{ex:cd}, consider a Causal ABA-PC run on variables $(E,R,O,I)$ whose MPC CI tests produce a trace of $\texttt{dep}/\texttt{indep}$ facts, sorted by descending strength $S$ and labelled as true (T) or false (F) w.r.t.\ the ground-truth DAG in Fig.~\ref{fig:gt-dag}, as follows:

\noindent\begin{minipage}[t]{0.48\linewidth}
\vspace{-0.45cm}
\begin{lstlisting}
indep(e,o,{i}) S=1.00 (F)
indep(r,i,{o}) S=1.00 (F)
dep(o,r,{})    S=0.99 (T)
dep(o,r,{e})   S=0.95 (T)
dep(e,o,{})    S=0.92 (T)
dep(e,o,{r})   S=0.92 (T)
dep(e,i,{})    S=0.84 (T)
\end{lstlisting}
\end{minipage}\hfill
\begin{minipage}[t]{0.48\linewidth}
\vspace{-0.45cm}
\begin{lstlisting}[escapechar=|]
dep(e,i,{r})   S=0.56 (T)
dep(r,i,{})    S=0.54 (T)
|\smash{\rlap{\tikz[overlay]\draw[dash pattern=on 3pt off 2pt,line width=0.8pt] (0,2ex) -- (0.72\linewidth,2ex);}}|indep(e,i,{o}) S=0.53 (F)
indep(r,i,{e}) S=0.53 (F)
dep(o,r,{i})   S=0.52 (T)
indep(o,i,{})  S=0.51 (F)
indep(e,r,{})  S=0.50 (T)
\end{lstlisting}
\end{minipage}\par\noindent
MPC uses all $14$ tests (including $5$ wrong ones), producing a CPDAG with a single undirected edge ($O\!-\!R$). ABA-PC releases facts in reverse strength order until the nine entries before the dashed separator remain enforced ($7$ true and $2$ false), yielding one compatible DAG with one correct edge ($O\to I$) and two wrong ones ($O\to R$, $I\to E$).
\end{example}

Example~\ref{ex:abapc} illustrates that enforcing all CI facts may be impossible, and that the heuristic release strategy of Causal ABA-PC may discard true facts while retaining false ones. We now show how MUSes and MCSes can be used to identify sets of facts that carry desirable properties for repair.
\vspace{-0.25cm}
\subsection{MUSes and MCSes}
\vspace{-0.25cm}
Explaining and repairing incoherence is a classic theme in diagnosis of ASP programs~\citep{Reiter87Diagnosis,Liffiton08MUS}, and it naturally applies here because noisy CI-test outcomes need not be jointly satisfiable.

MUSes and MCSes are defined relative to a base ASP program $\aspP$ and a distinguished set $\aspO$ of \emph{objective atoms}. For $\aspS\subseteq \aspO$, let $\enforce(\aspP,\aspS,\aspO)$ denote the program obtained from $\aspP$ by (i) adding a choice rule over atoms in $\aspO$ and (ii) forcing every atom in $\aspS$ to be true. Concretely:
\[
\enforce(\aspP,\aspS,\aspO):=\aspP\cup\{\{o\}.\mid o\in\aspO\}
\cup\{\texttt{:- not }o\mid o\in\aspS\}.
\]
\vspace{-0.4cm}
\begin{definition}[\citeauthor{ALVIANO2023103931}~\citeyear{ALVIANO2023103931}]\label{def:mus}
    Let $\aspP$ be a program and $\aspO$ be a set of objective atoms. Then $\aspS\subseteq \aspO$ is an \emph{unsatisfiable set} if $\enforce(\aspP,\aspS,\aspO)$ is incoherent.
    $\aspS$ is a minimal unsatisfiable set (MUS) if no strict subset $\aspS'\subset \aspS$ is an unsatisfiable set.
\end{definition}
\begin{definition}[\citeauthor{ALVIANO2023103931}~\citeyear{ALVIANO2023103931}]\label{def:mcs}
    Let $\aspP$ be a program and $\aspO$ be a set of objective atoms. Then $\aspS\subseteq \aspO$ is a \emph{correction set} if $\enforce(\aspP,\aspO\setminus\aspS,\aspO)$ is coherent.
    $\aspS$ is a minimal correction set (MCS) if no strict subset $\aspS'\subset \aspS$ is a correction set.
\end{definition}
MUSes pinpoint irreducible conflicts, whereas MCSes identify subset-minimal repairs. Write $MUS(\aspP,\aspO)$ and $MCS(\aspP,\aspO)$ for the families of all MUSes and MCSes of $\aspP$ w.r.t.\ $\aspO$, respectively. Every MCS must hit every MUS; indeed, $MCS(\aspP,\aspO)$ is exactly the family of minimal hitting sets of $MUS(\aspP,\aspO)$, and conversely~\citep[Thm.~5.3]{ALVIANO2023103931}. Their number may be exponential.

\begin{example}[MUS/MCS intuition]\label{ex:mus}
Consider a program $\aspP$:
\begin{lstlisting}
a :- not b, o(1).  b :- not c, o(2).  c :- not a, o(3).  :- not d, o(4).
\end{lstlisting}
with objective $\aspO=\{\texttt{o(1)},\texttt{o(2)},\texttt{o(3)},\texttt{o(4)}\}$. Enforcing all of $\aspO$ makes $\enforce(\aspP,\aspO,\aspO)$ incoherent, but for \emph{two independent reasons}: enforcing $\texttt{o(4)}$ forces $\texttt{d}$ even though it cannot be derived, and enforcing $\texttt{o(1)},\texttt{o(2)},\texttt{o(3)}$ activates the negative loop over $\texttt{a},\texttt{b},\texttt{c}$. Hence the MUSes are $\{\texttt{o(4)}\}$ and $\{\texttt{o(1)},\texttt{o(2)},\texttt{o(3)}\}$, while the MCSes are the minimal hitting sets $\{\texttt{o(1)},\texttt{o(4)}\}$, $\{\texttt{o(2)},\texttt{o(4)}\}$, and $\{\texttt{o(3)},\texttt{o(4)}\}$.
\end{example}
\vspace{-0.25cm}
\section{Optimal Correction Sets in ASP}\label{sec:opt-mcs}
\vspace{-0.25cm}
Let $CS(\aspP,\aspO)$ contain all correction sets for $\aspP$ w.r.t.\ $\aspO$. We assign each objective atom a strictly positive release cost, $\aspW=\{(o,w_o)\mid o\in\aspO,\ w_o>0\}$, and write \( \mathcal{C}(\aspR,\aspW)=\sum_{o\in\aspR}w_o. \) We seek a minimum-cost coherent repair without enumerating exponentially many MCSes.

\begin{definition}[Optimal MCS]\label{def:opt-mcs}
Let $\aspP$ be a program, $\aspO$ a set of objective atoms, and $\aspW$ weights over $\aspO$. An MCS $\aspS\in MCS(\aspP,\aspO)$ is \emph{optimal} (w.r.t.\ $\aspW$) if for every $\aspS'\in MCS(\aspP,\aspO)$ it holds that $\mathcal{C}(\aspS,\aspW)\le \mathcal{C}(\aspS',\aspW)$.
\end{definition}

\begin{restatable}[Positive-cost minimality]{lemma}{positivecostminimality}\label{lem:positive-minimality}
Let $\aspP$ be a finite ground program, let $\aspO$ be finite, and let $\aspW=\{(o,w_o)\mid o\in\aspO,\ w_o>0\}$. Every minimum-cost member of $CS(\aspP,\aspO)$ is an optimal MCS with respect to $\aspW$.
\end{restatable}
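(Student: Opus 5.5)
Let $\aspS$ be a minimum-cost member of $CS(\aspP,\aspO)$. Two things must be shown: that $\aspS$ is an MCS, and that its cost is no larger than that of any MCS. The plan is to prove these in that order. Both steps rest on the finiteness of $\aspO$, which makes $CS(\aspP,\aspO)$ a finite family, and on the strict positivity of every $w_o$.

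\emph{Step 1: $\aspS$ is subset-minimal.} Argue by contradiction. Suppose some strict subset $\aspS'\subset\aspS$ is also a correction set, so that $\enforce(\aspP,\aspO\setminus\aspS',\aspO)$ is coherent. Since $\aspS\setminus\aspS'$ is non-empty and every weight is strictly positive,
\[
\mathcal{C}(\aspS,\aspW)=\mathcal{C}(\aspS',\aspW)+\sum_{o\in\aspS\setminus\aspS'}w_o>\mathcal{C}(\aspS',\aspW).
\]
This contradicts the choice of $\aspS$ as a minimum-cost correction set. Hence no strict subset of $\aspS$ is a correction set, and $\aspS\in MCS(\aspP,\aspO)$ by Definition~\ref{def:mcs}.

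\emph{Step 2: cost optimality among MCSes.} Every MCS is by definition a correction set, so $MCS(\aspP,\aspO)\subseteq CS(\aspP,\aspO)$. Because $\aspS$ minimises $\mathcal{C}(\cdot,\aspW)$ over the larger family $CS(\aspP,\aspO)$, it also satisfies $\mathcal{C}(\aspS,\aspW)\le\mathcal{C}(\aspS',\aspW)$ for every $\aspS'\in MCS(\aspP,\aspO)$. Together with Step~1, this gives Definition~\ref{def:opt-mcs}.

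The main subtlety is only the role of the hypotheses. Strict positivity is essential in Step~1: with a zero weight, a minimum-cost correction set could contain a releasable atom whose removal leaves the cost unchanged, and that set would not be minimal. Finiteness guarantees that minimum-cost members exist. If $CS(\aspP,\aspO)$ is empty, the statement holds vacuously.

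Coherence is not monotone in general, since ASP is nonmonotonic. I therefore avoid any argument that supersets of correction sets are correction sets. The proof above uses only the definitions and the subset relation, so it does not depend on monotonicity.
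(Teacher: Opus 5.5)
Your proof is correct and follows the paper's argument step for step: strict positivity rules out a cheaper correction set strictly inside a minimum-cost one, and $MCS(\aspP,\aspO)\subseteq CS(\aspP,\aspO)$ then gives optimality among MCSes. One side remark: your caution about monotonicity is unnecessary here, because $\enforce$ adds only constraints, which filter but never create stable models, so supersets of correction sets are in fact correction sets, although your proof rightly does not need this.
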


All proofs are in \OptABAIfSupplementElse{Appendix~\ref{supp:bayes}--\ref{supp:contestability}}{the appendix sections ``Correctness of the Implemented Bayes-Ball Rules'', ``Optimal-MCS'', ``OptABA-PC Guarantees'', and ``Contestability and Redress''}.

Rather than enumerating repairs, we expose each objective atom as a choice and use the solver's native optimisation. For a program $\aspP$, objective atoms $\aspO$, and weights $\aspW$, define
\[
\adorned(\aspP,\aspO,\aspW)=\aspP\cup\{\{o\}.\mid o\in\aspO\}
\cup\{ : \sim\ \texttt{not}\ o.\ [w_o@1,o]\mid(o,w_o)\in\aspW\}.
\]
For a stable model $M$, let its released-fact projection be $\rho(M)=\aspO\setminus M$. The identifier $o$ in the weak-constraint tuple prevents equal-weight facts from collapsing into one occurrence. Consequently, the weak-constraint cost of $M$ is exactly $\mathcal{C}(\rho(M),\aspW)$.

\begin{example}[Optimal MCS]\label{ex:opt-mcs}
In Ex.~\ref{ex:mus}, assigning costs $5,10,15,20$ to $\texttt{o(1)},\ldots,\texttt{o(4)}$ gives the three repairs costs $25,30,35$; releasing $\texttt{o(1)}$ and $\texttt{o(4)}$ is therefore uniquely optimal.
For the same instance, an optimal model is $M=\{\texttt{o(2)},\texttt{o(3)},\texttt{c}\}$, so $\rho(M)=\{\texttt{o(1)},\texttt{o(4)}\}$; the fully instantiated program is in \OptABAIfSupplementElse{Appendix~\ref{supp:projection}}{the appendix section ``Optimal-MCS''}.

\end{example}

\begin{restatable}[Projection correspondence]{theorem}{projectioncorrespondence}\label{thm:projection}
Let $\aspP$ be a finite ground program, $\aspO$ a finite set of objective atoms, and $\aspW$ assign strictly positive release weights to all atoms in $\aspO$. Then:
\par\noindent$(\Rightarrow)$ If $M$ is an optimal stable model of $\adorned(\aspP,\aspO,\aspW)$, then $\rho(M)$ is an optimal MCS (Def.~\ref{def:opt-mcs}), and the weak-constraint cost of $M$ is $\mathcal C(\rho(M),\aspW)$.
\par\noindent$(\Leftarrow)$ For every optimal MCS $R$, there exists an optimal stable model $M$ of $\adorned(\aspP,\aspO,\aspW)$ with $\rho(M)=R$.
\par\noindent Distinct optimal stable models may have the same released-fact projection.
\end{restatable}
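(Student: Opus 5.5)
The plan is to reduce everything to one bridging claim relating stable models of $\adorned(\aspP,\aspO,\aspW)$ to those of $\enforce(\aspP,\aspS,\aspO)$. Fix $\aspS\subseteq\aspO$. Both programs contain $\aspP$ together with the choice rules $\{o\}.$ for $o\in\aspO$. They differ only in what is added on top: weak constraints in one case, and hard constraints $\texttt{:- not }o$ for $o\in\aspS$ in the other. Weak constraints do not affect which interpretations are stable models; they only rank them. Adding constraints only filters stable models, which is the standard splitting/constraint property of the reduct semantics. I will therefore establish the following bridging claim:
\[
M\in AS(\adorned(\aspP,\aspO,\aspW)) \iff M\in AS(\aspP\cup\{\{o\}.\mid o\in\aspO\}),
\]
and
\[
M\in AS(\enforce(\aspP,\aspS,\aspO)) \iff M\in AS(\aspP\cup\{\{o\}.\mid o\in\aspO\}) \text{ and } \aspS\subseteq M.
\]
From this, a stable model $M$ of $\adorned$ is also a stable model of $\enforce(\aspP,\aspO\setminus\rho(M),\aspO)$, because $\aspO\setminus\rho(M)=\aspO\cap M\subseteq M$. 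Hence $\rho(M)\in CS(\aspP,\aspO)$. Conversely, if $R\in CS(\aspP,\aspO)$, pick a witness $M\in AS(\enforce(\aspP,\aspO\setminus R,\aspO))$. Then $M$ is a stable model of $\adorned$ with $\rho(M)\subseteq R$.

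Next I handle costs. The weak constraint $:\sim \texttt{not } o.\ [w_o@1,o]$ fires exactly when $o\notin M$. The tuples $(w_o,o)$ are pairwise distinct because of the identifier $o$, so no two violations collapse into one. The cost of $M$ is therefore $\sum_{o\in\rho(M)}w_o=\mathcal C(\rho(M),\aspW)$; this settles the cost statement in $(\Rightarrow)$.

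For $(\Rightarrow)$, let $M$ be optimal and let $R'$ be any correction set. The previous paragraph gives a stable model $M'$ with $\rho(M')\subseteq R'$. Since all weights are positive, this yields
\[
\mathcal C(\rho(M),\aspW)\le\mathcal C(\rho(M'),\aspW)\le\mathcal C(R',\aspW).
\]
So $\rho(M)$ is a minimum-cost member of $CS(\aspP,\aspO)$, and Lemma~\ref{lem:positive-minimality} makes it an optimal MCS.

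For $(\Leftarrow)$, let $R$ be an optimal MCS and take the witness $M$ with $\rho(M)\subseteq R$. Because $\rho(M)$ is itself a correction set, minimality of $R$ forces $\rho(M)=R$. Optimality of $M$ follows by combining two facts. First, every stable model $M''$ has $\rho(M'')\in CS$, and $\rho(M'')$ contains some MCS, because $\aspO$ is finite and subsets of $\rho(M'')$ can be removed until minimal. Second, positive weights then give $\mathcal C(\rho(M''),\aspW)\ge\mathcal C(R,\aspW)$, since $R$ is optimal among MCSs.

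The last sentence of the theorem only needs an example, and the one I would use is the following. Take $\aspP=\{\texttt{p :- not q.}\ \texttt{q :- not p.}\}$ with $\aspO=\{o\}$. Its optimal stable models $\{o,p\}$ and $\{o,q\}$ both project to $\emptyset$. The same situation arises naturally in Causal ABA, where many DAGs are compatible with the retained facts.

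The main obstacle is making the bridging claim rigorous under the reduct semantics of App.~\ref{supp:asp}. The key point is that the set of stable models is unchanged when weak constraints are added, and is merely filtered when hard constraints are added. I would verify this directly: a constraint contributes nothing to the reduct's least model and only eliminates interpretations that satisfy its body. Everything else is bookkeeping with positive weights.
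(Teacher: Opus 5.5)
Your proposal is correct and follows the paper's proof essentially step for step. You use constraint filtering to move between $\adorned(\aspP,\aspO,\aspW)$ and $\enforce(\aspP,\aspS,\aspO)$, the tuple identifiers to identify the model cost with $\mathcal C(\rho(M),\aspW)$, and minimality of $R$ to force $\rho(M)=R$ for a witness of $\enforce(\aspP,\aspO\setminus R,\aspO)$, which is the paper's exact-realisability lemma; your non-injectivity counterexample is the same two-model program. The only differences are cosmetic: in $(\Rightarrow)$ you compare against a witness $M'$ with $\rho(M')\subseteq R'$ rather than passing through an MCS contained in a cheaper correction set, and in $(\Leftarrow)$ you compare directly against every stable model rather than choosing a globally optimal $N$ and reapplying $(\Rightarrow)$.
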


\begin{restatable}[Conflict justification]{proposition}{privateconflictjustification}\label{prop:private-conflict}
Let $\aspP$ be a finite ground program and $\aspO$ a finite set of objective atoms. For every $\aspR\in MCS(\aspP,\aspO)$ and every $f\in\aspR$, there is a MUS $U_f\in MUS(\aspP,\aspO)$ such that $U_f\cap\aspR=\{f\}$.
\end{restatable}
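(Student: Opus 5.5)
The plan is to argue directly from Definitions~\ref{def:mus} and~\ref{def:mcs}, using the monotonicity of coherence in the enforced set. Monotonicity means: if $\enforce(\aspP,\aspS,\aspO)$ is coherent and $\aspS'\subseteq\aspS$, then $\enforce(\aspP,\aspS',\aspO)$ is coherent. This holds because the two programs differ only in headless constraints $\texttt{:- not }o$. Adding a constraint never creates stable models; it only filters them, since constraints do not affect the reduct's least model beyond elimination. So every stable model of the larger-enforcement program is a stable model of the smaller one. I will state and check this as a small auxiliary observation, since everything else rests on it.

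Fix $\aspR\in MCS(\aspP,\aspO)$ and $f\in\aspR$, and consider $T:=(\aspO\setminus\aspR)\cup\{f\}$. We have $\aspO\setminus T=\aspR\setminus\{f\}$, which is a strict subset of $\aspR$. By minimality of $\aspR$, the set $\aspR\setminus\{f\}$ is not a correction set, so $\enforce(\aspP,T,\aspO)$ is incoherent, i.e.\ $T$ is an unsatisfiable set. Since $\aspO$ is finite, $T$ contains some inclusion-minimal unsatisfiable subset $U_f\subseteq T$. Any strict subset of $U_f$ is satisfiable, so $U_f$ is a MUS in the sense of Definition~\ref{def:mus}, i.e.\ $U_f\in MUS(\aspP,\aspO)$.

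It remains to show $U_f\cap\aspR=\{f\}$. Since $U_f\subseteq T$ and $T\cap\aspR=\{f\}$, we get $U_f\cap\aspR\subseteq\{f\}$. Suppose $f\notin U_f$. Then $U_f\subseteq\aspO\setminus\aspR$. Because $\aspR$ is a correction set, $\enforce(\aspP,\aspO\setminus\aspR,\aspO)$ is coherent. By monotonicity, $\enforce(\aspP,U_f,\aspO)$ is then coherent, contradicting that $U_f$ is unsatisfiable. Hence $f\in U_f$ and $U_f\cap\aspR=\{f\}$.

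The only step needing care is the monotonicity observation. Alternatively one could invoke the hitting-set duality \citep[Thm.~5.3]{ALVIANO2023103931}: a minimal hitting set has, for each element, a "private" set it alone hits. The direct argument is self-contained, though, so I would use it and mention the duality only as the conceptual reading.
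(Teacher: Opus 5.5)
Your proof is correct, but it takes a more elementary route than the paper. The paper's proof appeals to the MUS--MCS hitting-set duality of \citet[Thm.~5.3]{ALVIANO2023103931}. Since $\aspR$ is a minimal hitting set, $\aspR\setminus\{f\}$ fails to hit some MUS $U_f$; since $\aspR$ hits every MUS, $f\in U_f$.

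You instead build $U_f$ explicitly as an inclusion-minimal unsatisfiable subset of $T=(\aspO\setminus\aspR)\cup\{f\}$. You use only finiteness of $\aspO$ and monotonicity of coherence when enforcement constraints are removed. Your contradiction step for $f\in U_f$ is exactly the argument that a correction set hits every MUS, i.e.\ the easy half of the duality, proved in place.

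What your route gains is self-containment. It needs only the constraint-filtering identity~\eqref{eq:constraint-filtering}, which the appendix already uses. Cite that identity rather than your informal remark about the reduct's least model: with choice and disjunctive rules the reduct has minimal, not least, models. Your route also shows directly that $U_f\subseteq T$, which is the form used in item~(3) of Theorem~\ref{thm:optaba-guarantees}.

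The paper's route is shorter and places the certificate within the minimal-hitting-set view of repairs. The price is importing the full duality theorem, whose nontrivial direction (every minimal hitting set of the MUSes is an MCS) is not needed here.
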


Thus every released objective atom has a private conflict certificate containing no other released fact. Reinstating that atom while retaining the others in the certificate restores incoherence. We use this principled repair object to provide contestability and redress for Causal ABA, where objective atoms are weighted CI facts.

\section{OptABA-PC}

\subsection{Bayes-ball Encoding of Causal ABA}\label{sec:bayesball-encoding}
We separate the inherited Causal ABA machinery from our replacement for its active-path computation. For a fixed variable set $\mathbf V$ and finite set $\aspO$ of valid CI objective atoms, let $\mathcal S_{\aspO}$ contain the conditioning-set identifiers occurring in $\aspO$. The inherited $\aspP^{\mathrm{CABA}}_{\mathrm{core}}$ supplies the variable and conditioning-set vocabulary, CI symmetry and consistency, candidate-DAG generation, adjacency, directed reachability, and acyclicity; $\aspP^{\mathrm{new}}_{\mathrm{BB}}$ computes $d$-connection and enforces selected CI facts. Their union is the finite ground base program
\begin{equation}\label{eq:pca-decomposition}
\begin{aligned}
\aspP_{\mathrm{CA}}(\mathbf V,\aspO) &= \aspP^{\mathrm{CABA}}_{\mathrm{core}}(\mathbf V,\mathcal S_{\aspO})
\cup\aspP^{\mathrm{new}}_{\mathrm{BB}}(\mathbf V,\aspO).
\end{aligned}
\end{equation}
For a fixed instance write $\aspP_{\mathrm{CA}}$. The core chooses $X\to Y$, $Y\to X$, or no edge for each unordered pair, yielding one candidate DAG per stable model; its rules are in \OptABASuppListingsRef{lst:caba-vocabulary}{lst:caba-acyclicity}. Objective choices and weak constraints are added by $\enforce$ and $\adorned$.

The original Causal ABA encoding derives $\texttt{dep}/\texttt{indep}$ by materialising active paths for each queried triple $(X,Y,\mathbf{Z})$. This is faithful to $d$-separation but expensive, since the number of simple paths can be exponential. We therefore replace the path-enumerating modules with an ASP implementation of the Bayes-ball algorithm~\citep{shachter1998bayesball}. Bayes-ball does not enumerate paths; for a fixed query $(X,Y,\mathbf{Z})$, it propagates a marker through the current DAG according only to the current node, the direction from which the marker arrived, and whether the node is in, or has a descendant in, $\mathbf{Z}$.
Bayes-ball is instantiated for input CI queries and their symmetric counterparts. Its two states record arrival from a child or parent; reaching the target derives $\texttt{ap3(X,Y,Z)}$, a $d$-connection witness. The final constraints require this witness for an enforced dependence and forbid it for an enforced independence. \OptABAIfSupplementElse{Listing~\ref{lst:bayes-ball} in Appendix~\ref{supp:bb-encoding}}{The appendix section ``Novel Bayes-ball encoding''} gives the complete encoding.

\begin{restatable}[Bayes-ball encoding correctness]{lemma}{bayesballcorrectness}\label{lem:bayes-ball}
Let $\mathbf V$ be finite, let $G\in\operatorname{DAG}(\mathbf V)$ fix the \texttt{arrow} atoms, and let \texttt{qpair(X,Y,Z)} encode a valid query with distinct endpoints $X,Y\in\mathbf V\setminus\mathbf Z$. In the least fixed point of the grounded propagation rules in \OptABAIfSupplementElse{Listing~\ref{lst:bayes-ball}}{the Bayes-ball encoding}, \texttt{ap3(X,Y,Z)} is derived iff $X$ and $Y$ are $d$-connected given $\mathbf Z$ in $G$.
\end{restatable}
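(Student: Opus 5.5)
We do not see Listing~\ref{lst:bayes-ball}, so we first fix the natural reading of its rules and then argue against that reading. For a query $(X,Y,\mathbf Z)$ the encoding derives marker atoms $\mathit{vis}(V,\mathrm{up})$ (``arrived from a child'') and $\mathit{vis}(V,\mathrm{down})$ (``arrived from a parent''), together with an auxiliary atom $\mathit{an}(V)$ meaning ``$V\in\mathbf Z$ or $V$ has a descendant in $\mathbf Z$'', which comes from the core's directed reachability. The rules are as follows. The start is $\mathit{vis}(X,\mathrm{up})$. If $V\notin\mathbf Z$ and $\mathit{vis}(V,\mathrm{up})$, the marker passes to every parent (up) and every child (down). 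If $V\notin\mathbf Z$ and $\mathit{vis}(V,\mathrm{down})$, it passes to every child (down). If $\mathit{an}(V)$ and $\mathit{vis}(V,\mathrm{down})$, it passes to every parent (up). The witness $\texttt{ap3(X,Y,Z)}$ fires when $Y$ is visited in either state. The proof is then a two-direction comparison between the least fixed point of this positive (stratified, since $G$ is fixed) program and the existence of an active path. Since $G$ is fixed, the program is definite, so the least fixed point is the set of atoms with finite derivations. Both directions can therefore go by induction.

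\emph{Soundness (witness $\Rightarrow$ $d$-connection).} We induct on derivation length and prove a stronger invariant. If $\mathit{vis}(V,s)$ is derived, there is an \emph{active trail} from $X$ to $V$: a walk, possibly with repeated vertices, whose last edge enters $V$ from a child when $s=\mathrm{up}$ and from a parent when $s=\mathrm{down}$ (the start counts as $\mathrm{up}$). On this walk every interior non-collider lies outside $\mathbf Z$ and every interior collider is open. Each propagation rule extends the walk by one edge, and the rule's guard certifies exactly the blocking condition at the new interior vertex $V$. If the marker arrived up and leaves in either direction, $V$ is a non-collider and must satisfy $V\notin\mathbf Z$. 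If it arrived down and leaves down, $V$ is a chain, with the same requirement. If it arrived down and leaves up, $V$ is a collider, which requires $\mathit{an}(V)$. We then invoke the standard lemma that an active trail (walk) from $X$ to $Y$ given $\mathbf Z$ implies an active simple path. This is the step we expect to be the main obstacle, because shortcutting a repeated vertex can turn a collider into a non-collider in $\mathbf Z$, or leave a collider that is open only through a descendant. We would follow the classical argument (Koller--Friedman Thm.~3.x / Shachter): take a shortest active trail. If a vertex repeats, either splice out the loop directly, or, where the splice would create a blocked vertex, use the descendant-in-$\mathbf Z$ property to reroute through a directed path to a $\mathbf Z$-node. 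In every case we get a shorter active trail, which contradicts minimality.

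\emph{Completeness ($d$-connection $\Rightarrow$ witness).} Let $X=V_0,\dots,V_k=Y$ be an active path. By induction on $i$ we show that $\mathit{vis}(V_i,s_i)$ is derived, where $s_i=\mathrm{up}$ if the edge $V_{i-1}V_i$ points into $V_{i-1}$, and $s_i=\mathrm{down}$ otherwise. The base case is the start rule. For the step, case on the orientation of the two edges at $V_i$ (chain in either direction, fork, or collider). In each case the path-activity condition supplies exactly the guard of the corresponding rule: $V_i\notin\mathbf Z$ for chains and forks, and $\mathit{an}(V_i)$ for colliders. At the end, $Y$ is visited, so $\texttt{ap3}$ fires.

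Finally, we check the side conditions. The rule for $\mathit{an}$ coincides with the collider-openness definition on the acyclic $G$, using reflexive reachability. The restriction $X,Y\notin\mathbf Z$ with $X\neq Y$ ensures that the start and goal behave as endpoints rather than interior nodes. Instantiating for symmetric counterparts is harmless because $d$-connection is symmetric.
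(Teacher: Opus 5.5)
Your proof is correct, but it takes a different route from the paper's. The paper does not argue about active walks directly. It proves a simulation lemma identifying the vertices reached by the ASP states $u(v)$, $d(v)$ with those bottom-marked by Shachter's Algorithm~2, reading the opened-collider rule on line~8 as a contraction of Shachter's detour down to the first observed descendant and back. It then imports Shachter's Theorems~1--2 to equate bottom marks with $d$-connection.

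Your completeness direction is essentially the paper's $(\Leftarrow)$ step: follow a simple active path edge by edge, checking the guard at each vertex. Your soundness direction instead replaces the appeal to Shachter's theorems by an explicit active-walk invariant plus the walk-to-simple-path lemma. This makes the argument self-contained and avoids matching the collapsed rules against Shachter's exact marking schedule. The price is that you must prove the lemma the paper outsources.

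Your sketch of that lemma is slightly muddled. No rerouting is needed, and rerouting would lengthen the walk, breaking your minimality argument. Splicing a shortest active walk between the first and last occurrences of a repeated interior vertex $V$ always works:
\begin{itemize}
\item If $V\in\mathbf Z$, every occurrence is a collider, so the spliced one is too.
\item If $V\notin\mathbf Z$ and the splice creates a collider while $V\notin\operatorname{An}_G(\mathbf Z)$, then the first occurrence, entered through an arrowhead, must leave along an out-edge of $V$. The segment returning to $V$ either meets a collider, which is a descendant of $V$ and so forces $V\in\operatorname{An}_G(\mathbf Z)$, or it is a directed cycle. Both are contradictions.
\end{itemize}

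Finally, the actual listing differs from your reading in two harmless ways that you should address. Line~6 also seeds $d(X)$, whose consequences already follow from $u(X)$ because $X\notin\mathbf Z$. Line~8 additionally requires \texttt{is\_collider(N)}. That guard only removes derivations, so your soundness argument transfers. It also holds along a simple active path, because each collider there has two distinct parents, so your completeness argument goes through.
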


For a stable model $M$ of a Causal ABA instance using $\aspP_{\mathrm{CA}}$, define its \emph{graph projection} by
\[
\pi_G(M)=(\mathbf V,E_M),\qquad
E_M=\{(X,Y)\in\mathbf V^2\mid \texttt{arrow}(X,Y)\in M\}.
\]
This discards the objective, reachability, and Bayes-ball atoms and differs from the released-fact projection $\rho(M)$.

\begin{restatable}[Graph fidelity]{corollary}{graphfidelity}\label{cor:graph-fidelity}
For the finite sets $\mathbf V$ and $\aspO$ defining $\aspP_{\mathrm{CA}}=\aspP_{\mathrm{CA}}(\mathbf V,\aspO)$, let $\operatorname{DAG}(\mathbf V)$ be the set of DAGs on the fixed vertex set $\mathbf V$, and let $G\models T$ abbreviate $G\models f$ for every $f\in T$. For every $T\subseteq\aspO$ of valid CI facts,
\[
\{\pi_G(M)\mid M\in AS(\enforce(\aspP_{\mathrm{CA}},T,\aspO))\}
=\{G\in\operatorname{DAG}(\mathbf V)\mid G\models T\}.
\]
\end{restatable}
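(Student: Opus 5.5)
We prove the set equality by two inclusions, after first establishing that the stable models of $\enforce(\aspP_{\mathrm{CA}},T,\aspO)$ are in bijection-up-to-projection with pairs consisting of a DAG on $\mathbf V$ and a choice of objective atoms. The key structural fact we use is that, once the \texttt{arrow} atoms are fixed, the remainder of the program is stratified: reachability, acyclicity checks, Bayes-ball propagation and the \texttt{ap3} witnesses are defined by positive (or stratified) rules over the \texttt{arrow} atoms. The only non-determinism comes from the edge choice in $\aspP^{\mathrm{CABA}}_{\mathrm{core}}$ and the objective choice rules $\{o\}$. The enforcement constraints are headless rules that filter models. We will use the splitting set theorem with the bottom part consisting of the core's edge-choice rules together with the objective choices. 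The top part is the deterministic stratified program plus constraints.

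For the inclusion from left to right, we take $M\in AS(\enforce(\aspP_{\mathrm{CA}},T,\aspO))$. By the core's construction (exactly one of $X\to Y$, $Y\to X$, or no edge per unordered pair, plus the acyclicity constraint via directed reachability), $\pi_G(M)$ is a DAG on $\mathbf V$. We will verify the acyclicity rules in the listings: reachability is the transitive closure of \texttt{arrow}, and a constraint forbids self-reachability. Each $f\in T$ is forced true by \texttt{:- not f}. The final constraints of $\aspP^{\mathrm{new}}_{\mathrm{BB}}$ then require \texttt{ap3} for a true \texttt{dep} fact and forbid it for a true \texttt{indep} fact. Since $M$ restricted to the top part is the least model given the \texttt{arrow} atoms, Lemma~\ref{lem:bayes-ball} states that \texttt{ap3(X,Y,Z)}$\in M$ iff $X,Y$ are $d$-connected given $\mathbf Z$ in $\pi_G(M)$. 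Hence $\pi_G(M)\models f$ for all $f\in T$. One care point: queries are instantiated with symmetric counterparts, and the CI symmetry/consistency rules of the core must not force extra constraints beyond $f$. Since $d$-separation is symmetric and Lemma~\ref{lem:bayes-ball} applies to either orientation, the symmetric copy is satisfied automatically.

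For the inclusion from right to left, we take a DAG $G$ with $G\models T$. We construct the candidate interpretation $M$ as follows: \texttt{arrow} atoms from $E$; objective atoms exactly $T$, or alternatively all objective atoms $G$ satisfies, with the choice rule permitting either; and the least model of the stratified top part. We then check that $M$ is stable. For the bottom part, the choice rules justify the chosen atoms in the reduct. For the top part, we use the least-fixed-point characterization via splitting. We also check that no constraint fires. Acyclicity holds since $G$ is a DAG. The enforcement \texttt{:- not f} holds for $f\in T$. The Bayes-ball constraints hold by Lemma~\ref{lem:bayes-ball} and $G\models f$. Then $\pi_G(M)=G$.

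The main obstacle is making the splitting argument rigorous against the concrete listings. Specifically, we must confirm that every atom outside the choices (reachability, descendant-in-$\mathbf Z$ markers, Bayes-ball states, and the core's \texttt{dep}/\texttt{indep} consistency rules) is defined without default negation cycles through the choice atoms. We must also confirm that the only constraints involving objective atoms are the $\enforce$ constraints and the Bayes-ball enforcement rules, so that unselected objective atoms impose nothing. We will handle this first by exhibiting a stratification (level function) of the non-choice predicates, and then invoke the splitting set theorem; the remaining steps are direct applications of Lemma~\ref{lem:bayes-ball}.
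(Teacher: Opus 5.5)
Your two-inclusion structure is the paper's. Your $\subseteq$ direction is argued exactly as the paper does it: lines~1 and 5--7 of Listing~\ref{lst:caba-graph} plus the acyclicity constraint give a DAG, the $\enforce$ constraints put each $f\in T$ in $M$, and lines~13--14 of Listing~\ref{lst:bayes-ball} with Lemma~\ref{lem:bayes-ball} give $\pi_G(M)\models T$. You also note that the Bayes-ball atoms of $M$ coincide with the least fixed point the lemma refers to, which makes explicit a step the paper leaves implicit.

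The real difference is in $\supseteq$. The paper does not verify stability by hand. It obtains stable choices of the inherited graph generator realising $G$ from Proposition~3.3 of Russo et al.\ and the ABA--ASP correspondence. It then selects $T$, closes under CI symmetry, takes the least closure of the stratified remainder, and applies constraint filtering.

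Your splitting-set verification is self-contained relative to this paper's listings, so it does not depend on transferring a result proved for the original path-enumerating encoding. The price is two technical obligations:
\begin{itemize}
\item You must treat the disjunctive rule with the default-negated head literal \texttt{not edge(X,Y)} directly. It is not a plain choice rule, so minimality of the reduct has to be checked.
\item You must choose the splitting set correctly. The CI-symmetry rules of Listing~\ref{lst:caba-vocabulary} have objective atoms as heads, so your bottom part must be closed under endpoint reversal and must contain those rules and the \texttt{edge} rules. Placing them in the top part, as you propose, does not give a splitting set.
\end{itemize}

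One planned step is false as stated: the $\enforce$ constraints and lines~13--14 are not the only constraints mentioning objective atoms.
\begin{itemize}
\item Line~5 of Listing~\ref{lst:caba-vocabulary} forbids a dependence and an independence for the same query.
\item Line~4 of Listing~\ref{lst:caba-graph} forbids \texttt{edge(X,Y)} together with \texttt{indep(X,Y,S)}.
\end{itemize}
It remains true that unselected atoms impose nothing, since objective atoms occur only positively in constraint bodies. However, in $\supseteq$ your list of checks (acyclicity, $\enforce$, Bayes-ball) omits these two constraints for the selected facts and their reversals.

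Both are easy to discharge. A direct edge between $X$ and $Y$ is an active path given any $\mathbf S$ excluding them, so $G\models X\indep Y\given\mathbf S$ forces non-adjacency; this is exactly the remark in the paper's proof. Symmetry of $d$-separation prevents $G$ from satisfying both a dependence and an independence for the same unordered query. With these additions your argument is complete.
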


In OptABA-PC, the objective atoms in $\aspO$ are the CI facts produced by MPC~\citep{colombo2014order}. Let $\aspW$ contain their strictly positive release costs. OptABA-PC solves $\adorned(\aspP_{\mathrm{CA}},\aspO,\aspW)$ with \textit{clingo}~\citep{GebserKS11}. Unlike ABA-PC's incremental strength-ordered release, one solve minimises the total cost of the released set. Note that we use $w_f=\max\{1,\operatorname{round}(9{,}999{,}999s(p_f))\}$, where $s(p)=1-p/(2\alpha)$ for $p<\alpha$ and $s(p)=(p+1-2\alpha)/(2(1-\alpha))$ otherwise, following ABA-PC~\citep{russo2024abacd} without weighting on conditioning set size. These costs encode preferences in line with the expected behaviour of $p$-values of being most uncertain around the threshold $\alpha$~\citep{hung97pvalue,casella2002statistical}, not calibrated probabilities.

\begin{example}[OptABA-PC]\label{ex:OptABA-PC}
Continuing Ex.~\ref{ex:abapc} with the same fact weights, OptABA-PC releases the optimal MCS
\(
R=\{\texttt{indep(e,o,\{i\})},\texttt{indep(r,i,\{e\})},\texttt{indep(o,i,\{\})}\}.
\)
All three facts are wrong with respect to the $d$-separations of the DAG in Fig.~\ref{fig:gt-dag}; the repair retains all nine correct facts and the two remaining wrong facts. Its single compatible DAG is $E\to O$, $R\to O$, $O\to I$, with all three edges correctly oriented; while ABA-PC only one (cf.\ Ex.~\ref{ex:abapc}). Thus global minimum-cost repair can improve both CI selection and its graph witness.
\end{example}
\subsection{Optimality and Contestability Guarantees}

\begin{restatable}[OptABA-PC guarantees]{theorem}{optabaguarantees}\label{thm:optaba-guarantees}
Let $\mathbf V$ be finite, $\aspO$ a finite set of valid CI facts over $\mathbf V$, and $\aspW$ assign strictly positive release weights to every fact in $\aspO$. Then $\adorned(\aspP_{\mathrm{CA}},\aspO,\aspW)$ has an optimal stable model. For every such model $M$, put $R=\rho(M)$ and $T=\aspO\setminus R$. Then:
\vspace{-0.1cm}
\par\noindent\textup{(1)} $\pi_G(M)$ is a DAG on $\mathbf V$ satisfying every fact in $T$;
\vspace{-0.1cm}
\par\noindent\textup{(2)} $R$ is an MCS with minimum release cost among all correction sets of $\aspP_{\mathrm{CA}}$ w.r.t.\ $\aspO$;
\vspace{-0.1cm}
\par\noindent\textup{(3)} for every $f\in R$, enforcing $T\cup\{f\}$ is incoherent, and a MUS $U_f$ satisfies $U_f\cap R=\{f\}$.
\vspace{-0.1cm}

Conversely, every optimal MCS $R$ is $\rho(M)$ for some optimal stable model $M$, with the same cost.
\end{restatable}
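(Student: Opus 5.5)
The plan is to derive the theorem from Theorem~\ref{thm:projection}, Proposition~\ref{prop:private-conflict}, Lemma~\ref{lem:positive-minimality} and Corollary~\ref{cor:graph-fidelity}, instantiated with $\aspP=\aspP_{\mathrm{CA}}$. Only two facts are specific to the causal setting: existence of an optimal model, and the graph reading in (1).

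\textbf{Existence.} We take $T=\emptyset$ in Corollary~\ref{cor:graph-fidelity}. The program $\enforce(\aspP_{\mathrm{CA}},\emptyset,\aspO)$ then has a stable model for every DAG on $\mathbf V$. Since the empty graph is such a DAG, this program is coherent, so $\aspO$ itself is a correction set and $CS(\aspP_{\mathrm{CA}},\aspO)\neq\emptyset$. To transfer this to $\adorned$, we note that $\adorned(\aspP_{\mathrm{CA}},\aspO,\aspW)$ and $\enforce(\aspP_{\mathrm{CA}},\emptyset,\aspO)$ have the same stable models, because weak constraints do not affect stability. The program is finite and ground, so it has finitely many stable models. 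Their costs are finite sums of positive integers, so a minimum exists and an optimal stable model exists.

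\textbf{Items (1) and (2).} Fix an optimal $M$, with $R=\rho(M)$ and $T=\aspO\setminus R$. Every $o\in T$ belongs to $M$, so $M$ satisfies the added constraints \texttt{:- not }$o$. Hence $M$ is a stable model of $\enforce(\aspP_{\mathrm{CA}},T,\aspO)$. This splitting step is the one I expect to need the most care. The constraints only prune models, but I must check that $M$ stays stable after they are added. It does, because headless rules cannot change the reduct's least model; they can only eliminate candidate models. With this in hand, Corollary~\ref{cor:graph-fidelity} gives that $\pi_G(M)$ is a DAG with $\pi_G(M)\models T$, which is (1). Coherence of $\enforce(\aspP_{\mathrm{CA}},T,\aspO)$ also makes $R$ a correction set. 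Theorem~\ref{thm:projection}$(\Rightarrow)$ says $R$ is an optimal MCS with weak-constraint cost $\mathcal C(R,\aspW)$. For minimality over all of $CS$, not just over MCSes, take any correction set $S$. It contains some MCS $S'\subseteq S$, by finiteness and downward search. Positivity of the weights then gives $\mathcal C(R,\aspW)\le\mathcal C(S',\aspW)\le\mathcal C(S,\aspW)$, which is (2); Lemma~\ref{lem:positive-minimality} supplies the same conclusion in the other direction.

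\textbf{Item (3).} Let $f\in R$. Since $R$ is an MCS, $R\setminus\{f\}$ is not a correction set. Therefore $\enforce(\aspP_{\mathrm{CA}},\aspO\setminus(R\setminus\{f\}),\aspO)=\enforce(\aspP_{\mathrm{CA}},T\cup\{f\},\aspO)$ is incoherent. The private MUS $U_f$ with $U_f\cap R=\{f\}$ is given directly by Proposition~\ref{prop:private-conflict}.

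\textbf{Converse.} This is exactly Theorem~\ref{thm:projection}$(\Leftarrow)$. Equality of costs follows from the identity that the cost of $M$ equals $\mathcal C(\rho(M),\aspW)=\mathcal C(R,\aspW)$.
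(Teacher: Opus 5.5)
Your proposal is correct and follows essentially the paper's proof: constraint filtering places $M$ in $AS(\enforce(\aspP_{\mathrm{CA}},T,\aspO))$, Corollary~\ref{cor:graph-fidelity} gives (1), Theorem~\ref{thm:projection} with Lemma~\ref{lem:positive-minimality} gives (2) and the converse, and MCS minimality with Proposition~\ref{prop:private-conflict} gives (3). The only difference is in the existence step: you take Corollary~\ref{cor:graph-fidelity} with $T=\emptyset$, whereas the paper cites Proposition~3.3 of Causal ABA and the ABA--ASP correspondence for the edgeless DAG directly; these are exactly what underlie the corollary's $(\supseteq)$ direction, so nothing changes (two small imprecisions are harmless: the weights are only assumed positive rather than integer, which does not matter because finitely many stable models already give a minimum; and the core has a disjunctive head, so you should speak of minimal rather than least models of the reduct, though constraint filtering still holds).
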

The correspondence is under projection, not a bijection: several optimal stable models may have the same released-fact projection and possibly different graph projections. Our implementation saves a selected repair and separately enumerates compatible graphs within a time budget; the reported graph sets need not exhaust the entire family.
These guarantees support two forms of user contestation. A release-cost contestation increases the penalty for releasing $f$; a fact-enforcement contestation excludes every repair that releases $f$.

Given a stable model $M$ and a repair $R=\rho(M)$, a fact $f$ is \emph{retained} if $f\in M$, equivalently $f\notin R$, and \emph{released} if $f\notin M$, equivalently $f\in R$. For a base program $\aspP$, write $\mathcal F=CS(\aspP,\aspO)$. In OptABA-PC, returned repairs are the projections $\rho(M)$ of optimal stable models. For a contested fact $f$, let the hard-retention program be:
\[
\adorned_f(\aspP,\aspO,\aspW)
=\adorned(\aspP,\aspO,\aspW)\cup\{\texttt{:- not }f\}.
\]

\begin{restatable}[Release-cost contestability]{property}{releasecostcontestability}\label{prop:release-cost}
For a weighted repair method over $\mathcal F$, let $\operatorname{Rep}_{\aspW}\subseteq\mathcal F$ denote the repairs returned under weights $\aspW$. Suppose $\aspW'$ differs from $\aspW$ only in that $w'_f>w_f$. The method is \emph{release-cost contestable at $f$} if its returned repairs can change only towards retaining $f$:
\[
\{R\in\operatorname{Rep}_{\aspW}:f\notin R\}\subseteq\operatorname{Rep}_{\aspW'},
\qquad
\{R\in\operatorname{Rep}_{\aspW'}:f\in R\}\subseteq\operatorname{Rep}_{\aspW}.
\]
\end{restatable}
Effectively, the optimal MCSes not containing $f$ will remain optimal under $W'$ while the ones that contain $f$ under the updated $W'$ must have already been optimal. Thus increasing the release cost may remove releasing repairs or add retaining repairs, but not conversely.
\begin{restatable}[Fact-enforcement contestability]{property}{factenforcementcontestability}\label{prop:fact-enforcement}
An ASP repair method based on $\adorned(\aspP,\aspO,\aspW)$ is \emph{fact-enforcement contestable at $f$} if
\[
\{\rho(M)\mid M\text{ is optimal for }\adorned_f(\aspP,\aspO,\aspW)\}
=\arg\min_{R\in\mathcal F,\,f\notin R}\mathcal{C}(R,\aspW).
\]
It is \emph{total} on a fact domain if, for every fact $f$ in that domain, some $R\in\mathcal F$ satisfies $f\notin R$. Thus making $f$ mandatory returns exactly the best repairs that retain it.
\end{restatable}

\begin{restatable}[OptABA-PC contestability]{proposition}{optabacontestability}\label{prop:optaba-contestability}
Let $\mathbf V$ and $\aspO$ be finite, let every member of $\aspO$ be a valid CI fact over $\mathbf V$, put $\aspP_{\mathrm{CA}}=\aspP_{\mathrm{CA}}(\mathbf V,\aspO)$ as in Equation~\eqref{eq:pca-decomposition}, and let $\aspW=\{(o,w_o)\mid o\in\aspO,\ w_o>0\}$. Then OptABA-PC satisfies Properties~\ref{prop:release-cost} and~\ref{prop:fact-enforcement} for every $f\in\aspO$. Moreover, fact-enforcement contestability is total: $\adorned_f(\aspP_{\mathrm{CA}},\aspO,\aspW)$ has an optimal stable model for every $f\in\aspO$.
\end{restatable}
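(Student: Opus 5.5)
The argument reduces everything to the structure of the optimal stable models of $\adorned(\aspP_{\mathrm{CA}},\aspO,\aspW)$ and its hard-retention variant. By Theorem~\ref{thm:projection} and Lemma~\ref{lem:positive-minimality}, the set of returned repairs under weights $\aspW$ is
\[
\operatorname{Rep}_{\aspW}=\arg\min_{R\in\mathcal F}\mathcal C(R,\aspW),
\]
where $\mathcal F=CS(\aspP_{\mathrm{CA}},\aspO)$. The $(\Rightarrow)$ direction places every projection $\rho(M)$ in this set. The $(\Leftarrow)$ direction together with Lemma~\ref{lem:positive-minimality} shows every minimiser is realised as some $\rho(M)$. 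To get the exact equality, I also need that every $\rho(M)$ of a stable model of $\adorned$ is a correction set, and conversely. Both are routine: a stable model of $\adorned$ with $\rho(M)=R$ is a stable model of $\enforce(\aspP_{\mathrm{CA}},\aspO\setminus R,\aspO)$, since weak constraints do not affect stability.

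\textbf{Totality.} I first show that $\mathcal F$ contains a correction set omitting $f$, for each $f\in\aspO$. Take $R=\aspO\setminus\{f\}$, so enforcing $T=\{f\}$ alone must be coherent. By Corollary~\ref{cor:graph-fidelity}, this amounts to exhibiting a DAG on $\mathbf V$ satisfying the single valid CI fact $f$.
\begin{itemize}
\item For an independence $X\indep Y\given\mathbf Z$, the empty graph works.
\item For a dependence $X\notindep Y\given\mathbf Z$, the graph with the single edge $X\to Y$ works: the edge is an active path because it has no intermediate nodes.
\end{itemize}
Hence $\{R\in\mathcal F: f\notin R\}$ is a nonempty finite family. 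Then $\adorned_f$ is coherent, since a stable model of $\enforce$ witnessing this $R$ extends to one of $\adorned_f$. A coherent finite ground program with weak constraints has an optimal stable model.

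\textbf{Fact enforcement.} Stable models of $\adorned_f$ are exactly the stable models $M$ of $\adorned$ with $f\in M$, because adding a constraint only filters stable models. Their projections are exactly the correction sets $R$ with $f\notin R$, and each model's cost equals $\mathcal C(\rho(M),\aspW)$. So the optimal models of $\adorned_f$ project onto $\arg\min_{R\in\mathcal F,f\notin R}\mathcal C(R,\aspW)$. Every minimiser is attained by lifting it to a stable model through the correction-set definition.

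\textbf{Release cost.} This is an elementary argmin-monotonicity argument on the fixed finite family $\mathcal F$. Note that $\mathcal F$ depends only on $\aspP_{\mathrm{CA}}$ and $\aspO$, not on the weights. Let $\delta=w'_f-w_f>0$. Then $\mathcal C(R,\aspW')=\mathcal C(R,\aspW)+\delta[f\in R]$, and let $m,m'$ be the optimal values under $\aspW,\aspW'$.
\begin{itemize}
\item If $R\in\operatorname{Rep}_{\aspW}$ with $f\notin R$, then for every $R'\in\mathcal F$ we have $\mathcal C(R,\aspW')=m\le\mathcal C(R',\aspW)\le\mathcal C(R',\aspW')$. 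So $R\in\operatorname{Rep}_{\aspW'}$.
\item If $R\in\operatorname{Rep}_{\aspW'}$ with $f\in R$, then for every $R'$ we get $\mathcal C(R,\aspW)+\delta\le\mathcal C(R',\aspW)+\delta[f\in R']\le\mathcal C(R',\aspW)+\delta$. So $R\in\operatorname{Rep}_{\aspW}$.
\end{itemize}

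The main obstacle is the exact identification $\operatorname{Rep}_{\aspW}=\arg\min_{\mathcal F}\mathcal C$. Theorem~\ref{thm:projection} is phrased in terms of optimal MCSes rather than minimum-cost correction sets, and under positive weights these coincide only through Lemma~\ref{lem:positive-minimality} together with the fact that some minimum-cost correction set is an MCS. I would make this step explicit before the monotonicity argument, and note that $\mathcal F\neq\emptyset$ follows from Theorem~\ref{thm:optaba-guarantees}.
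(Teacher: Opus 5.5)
Your totality argument matches the paper's, and your release-cost argument is correct. The fact-enforcement step, however, relies on a claim that is false as stated. The projections $\rho(M)$ of stable models of $\adorned_f(\aspP_{\mathrm{CA}},\aspO,\aspW)$ are \emph{not} exactly the correction sets $R$ with $f\notin R$. The ``conversely'' you call routine in your first paragraph fails for the same reason.

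Lifting a correction set $R$ through the definition gives a stable model $M$ of $\enforce(\aspP_{\mathrm{CA}},\aspO\setminus R,\aspO)$. That only guarantees $\aspO\setminus R\subseteq M$, i.e.\ $\rho(M)\subseteq R$; members of $R$ may still be true in $M$. This genuinely happens in $\aspP_{\mathrm{CA}}$. If $\aspO$ contains both endpoint orderings of a CI fact, the symmetry rules (lines~3--4 of Listing~\ref{lst:caba-vocabulary}) derive each from the other. A correction set releasing only one ordering is then never a projection. This is why the paper's exact-realisability lemma (Lemma~\ref{supp:exact-realisability}) is stated only for MCSes and uses minimality.

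The fix is a positivity argument. Any correction set $R$ with $f\notin R$ lifts to a stable model $M$ of $\adorned_f$. Its projection $\rho(M)\subseteq R$ still omits $f$ and satisfies $\mathcal C(\rho(M),\aspW)\le\mathcal C(R,\aspW)$. Hence the optimal value of $\adorned_f$ equals $\min_{R\in\mathcal F,\,f\notin R}\mathcal C(R,\aspW)$. If $R$ is a minimiser, strictly positive weights force $\rho(M)=R$. Equivalently, such a minimiser is an MCS and Lemma~\ref{supp:exact-realisability} realises it exactly; this is how the paper proves both inclusions. The identification $\operatorname{Rep}_{\aspW}=\arg\min_{R\in\mathcal F}\mathcal C(R,\aspW)$ needs no converse of this kind: both directions of Theorem~\ref{thm:projection} together with Lemma~\ref{lem:positive-minimality} already give it, as you note.

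For release cost, your route is genuinely different from the paper's and shorter. You get both inclusions of Property~\ref{prop:release-cost} from $\mathcal C(R,\aspW')=\mathcal C(R,\aspW)+\delta\,[f\in R]$ via two one-line inequalities. This needs neither nonempty retaining/releasing classes nor attained class minima. The paper instead partitions $\mathcal F$ into repairs that retain and that release $f$, handles the empty cases, and compares the two class minima against $\delta$ in five cases. That is longer, but it says exactly how the returned set changes: unchanged, tie broken, joined at a tie, or switched. That finer information is what the paper later reuses for the contestation-threshold result.
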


Besides making $f$ mandatory, a user may increase its release cost, as in~\citep{freedman2025argllm}. Property~\ref{prop:release-cost} guarantees that such graded contestation can only favour retention.

\begin{restatable}[Contestability margin]{definition}{contestabilitymargin}\label{def:margin}
Let $\mathcal F$ be a finite family of repairs and $\aspW$ assign a finite release cost to every objective atom. For any fact $f$ for which $\mathcal F$ contains both a repair retaining $f$ and a repair releasing $f$, define
\[
c_{\mathrm{keep}}(f)=\min_{R\in\mathcal F,\,f\notin R}\mathcal{C}(R,\aspW),
\qquad
c_{\mathrm{rel}}(f)=\min_{R\in\mathcal F,\,f\in R}\mathcal{C}(R,\aspW).
\]
If at least one globally optimal repair under $\aspW$ releases $f$, then $c_{\mathrm{rel}}(f)\leq c_{\mathrm{keep}}(f)$. The \emph{contestability margin} of $f$ is $\mu(f)=c_{\mathrm{keep}}(f)-c_{\mathrm{rel}}(f)\geq 0$.
\end{restatable}

Both minima exist: Prop.~\ref{prop:optaba-contestability} gives a repair retaining $f$, and releasing all facts is feasible. Increasing $w_f$ by $\delta$ adds $\delta$ only to repairs releasing $f$. Thus below $\mu(f)$ every optimum releases $f$, at equality both kinds coexist, and above it every optimum retains $f$; for integer weights, the least forcing increase is $\mu(f)+1$ (proof in \OptABAIfSupplementElse{App.~\ref{supp:contestability}, Prop.~\ref{prop:margin-threshold}}{appendix section ``Contestability and Redress''}).

\begin{example}[Contestation of a released fact]\label{ex:contestability}
Consider $f=\texttt{indep(e,o,\{i\})}$, released in Ex.~\ref{ex:OptABA-PC}. For this illustration, set each release cost to $1000S$ using the strengths in Ex.~\ref{ex:abapc}. The three released facts in $R$ therefore cost $c_{\rm rel}(f)=1000+530+510=2040$. One MUS for $f$ contains two retained facts: $\texttt{dep(e,o,\{\})}$ and $\texttt{indep(e,i,\{o\})}$. The two independences imply $\texttt{indep(e,o,\{\})}$, contradicting the retained dependence; hence $f$ cannot be reinstated while both other facts stay. The cheapest repair of the full trace with $f$ mandatory replaces its release by releases of $\texttt{indep(e,i,\{o\})}$ (cost $530$) and $\texttt{indep(e,r,\{\})}$ (cost $500$), leaving the other two releases unchanged. Thus $c_{\rm keep}(f)=2040-1000+530+500=2070$ and $\mu(f)=30$. Increasing only $w_f$ from $1000$ to $1030$ ties the two repair costs; $1031$ makes every optimum retain $f$. The returned DAGs satisfy the requested $f$, even though, in this case, it is false wrt the generating DAG.
\end{example}

Within a fixed trace and weighting scheme, the margin measures relative weighted degree of inconsistency; \OptABAIfSupplementElse{Apps.~\ref{supp:contestability} and~\ref{supp:evidence-overview}}{the appendix sections ``Contestability and Redress'' and ``Fact Enforcement Comparison''} develop and evaluate this interpretation.

\section{Experiments}\label{sec:experiments}

\textbf{Setup.} Following~\citep{russo2024abacd}, we use the Cancer, Earthquake, Survey, and Asia bnlearn networks from~\citep{Scutari14bnlearn} and add Erd\H{o}s--R\'enyi (ER)~\citep{erdds1959random} and scale-free (SF)~\citep{barabasi1999emergence} DAGs with five and eight nodes. Each dataset has $N=5000$ observations and 50 seeds; fixed networks retain their graph and conditional probability tables, whereas each synthetic seed generates both. ABA-PC and OptABA-PC use the same Bayes-ball encoding and receive identical MPC~\citep{colombo2014order} $G^2$ CI-test traces and weights ($\alpha=0.01$), isolating their repair mechanisms. Fast Greedy Search (FGS)~\citep{ramsey2017million} and ASPCR-DAG~\citep{hyttinen2014ASP} use their native configurations. ABA runs use a $300$-second solve limit and a separate $120$-second graph-evaluation limit; $32/400$ OptABA-PC runs reach the optimisation limit and return a valid incumbent without proved optimality. Full protocol in \OptABAIfSupplementElse{App.~\ref{supp:experimental-protocol}}{the appendix section ``Experimental Protocol and Metrics''}, implementation and experimental material are available at \url{https://github.com/briziorusso/ArgCausalDisco}.

\textbf{Metrics.} Table~\ref{tab:main-results} reports mean paired OptABA-PC--ABA-PC differences. Retained true facts agree with the generating DAG; FactF1 also penalises retained false and released true facts. \#CPDAG counts saved equivalence classes satisfying all retained facts; CPDAG F1 and Structural Hamming Distance (SHD) measure reconstruction over these budget-limited sets. To evaluate contestability, we instantiate Def.~\ref{def:margin} for every released fact $f$: the baseline optimum gives $c_{\mathrm{rel}}(f)$, a re-solve enforcing $f$ gives $c_{\mathrm{keep}}(f)$, and their difference is the empirical margin $\mu(f)$. \OptABAIfSupplementElse{App.~\ref{supp:empirical-evidence}}{The appendix section ``Additional Empirical Evidence''} gives absolute results (including Earthquake), variability, significance, precision--recall, hard-retention feasibility, and margin distributions.

\begin{table}[t]
\caption{Paired OptABA-PC--ABA-PC effects on seven bnlearn and synthetic datasets (nodes in brackets). Entries are mean within-seed differences $\Delta$; parentheses give relative change from ABA-PC, except runtime ($t_{\mathrm{Opt}}/t_{\mathrm{ABA}}$). Arrows indicate preferred directions. $\Delta n_{\mathrm{acc}}^{T}$ counts additional true CI facts retained; \#CPDAG counts saved MECs satisfying every retained fact. Counts and graph means describe saved, budget-limited sets. \OptABAIfSupplementElse{Tables~\ref{tab:absolute-results} and~\ref{tab:absolute-cpdag-results} in the supplement}{The tables in the appendix section ``Additional Empirical Results and Discussion''} give absolute mean $\pm$ standard deviation and significance for all datasets; Earthquake is omitted because outcomes do not differ.}
\label{tab:main-results}
\centering
\small
\setlength{\tabcolsep}{4pt}
\textbf{CPDAG evaluation}\par\smallskip
\begin{tabular}{@{}lccc@{}}
\toprule
Dataset & $\Delta\#\mathrm{CPDAG}\downarrow$ & $\Delta\mathrm{F1}\uparrow$ & $\Delta\mathrm{SHD}\downarrow$\\
\midrule
Cancer (5) & $-14.1\ (-93.4\%)$ & $+0.054\ (+10.3\%)$ & $-0.974\ (-34.6\%)$\\
Survey (6) & $-1277.2\ (-99.9\%)$ & $+0.159\ (+36.3\%)$ & $-2.790\ (-39.1\%)$\\
Asia (8) & $-10821.5\ (-100.0\%)$ & $+0.313\ (+139.7\%)$ & $-10.507\ (-73.1\%)$\\
ER (5) & $-5.2\ (-63.6\%)$ & $+0.027\ (+4.9\%)$ & $-0.792\ (-27.5\%)$\\
ER (8) & $-3835.3\ (-99.9\%)$ & $+0.150\ (+31.3\%)$ & $-4.661\ (-56.9\%)$\\
SF (5) & $-3.3\ (-3.7\%)$ & $+0.067\ (+10.1\%)$ & $-0.741\ (-26.8\%)$\\
SF (8) & $-2937.8\ (-94.5\%)$ & $+0.131\ (+26.3\%)$ & $-3.725\ (-45.7\%)$\\
\bottomrule
\end{tabular}
\par\medskip
\textbf{Fact selection and runtime}\par\smallskip
\begin{tabular}{@{}lccc@{}}
\toprule
Dataset & $\Delta n_{\mathrm{acc}}^{T}\uparrow$ & $\Delta\mathrm{FactF1}\uparrow$ & $\Delta t\downarrow$\\
\midrule
Cancer (5) & $+1.6\ (+6.8\%)$ & $-0.004\ (-0.5\%)$ & $-0.233\ (\times0.85)$\\
Survey (6) & $+4.9\ (+19.6\%)$ & $-0.047\ (-5.9\%)$ & $-12.728\ (\times0.19)$\\
Asia (8) & $+21.8\ (+20.6\%)$ & $+0.082\ (+9.2\%)$ & $-28.904\ (\times0.70)$\\
ER (5) & $+1.3\ (+4.1\%)$ & $+0.016\ (+1.6\%)$ & $-0.106\ (\times0.90)$\\
ER (8) & $+15.4\ (+21.5\%)$ & $+0.082\ (+9.5\%)$ & $+91.044\ (\times2.90)$\\
SF (5) & $+1.7\ (+6.0\%)$ & $+0.034\ (+3.6\%)$ & $-0.094\ (\times0.95)$\\
SF (8) & $+21.1\ (+29.2\%)$ & $+0.093\ (+11.1\%)$ & $+56.807\ (\times2.34)$\\
\bottomrule
\end{tabular}
\end{table}

\textbf{Results.} Across all seven datasets in Table~\ref{tab:main-results}, OptABA-PC retains more true facts, identifies fewer compatible classes, and improves mean CPDAG F1 and SHD compared to ABA-PC. Against other baselines, graph accuracy is mixed: MPC and FGS often score better, particularly on synthetic benchmarks\OptABAIfSupplement{\ (Table~\ref{tab:final-graph-metrics})}. Our scores average over a returned set of compatible CPDAGs, whereas MPC and FGS return one. MPC has no significant advantage over the best F1 and SHD in our set on three of four synthetic settings\OptABAIfSupplement{\ (Tables~\ref{tab:final-cpdag-ranges} and~\ref{tab:final-cpdag-f1-ranges})}. OptABA-PC achieves higher CPDAG and DAG F1 than its closest ASP-based counterpart, ASPCR-DAG, on all shared benchmarks.

The main gains in Table~\ref{tab:main-results} are stronger fact selection ($\Delta n_{\mathrm{acc}}^{T}$ and $\Delta\mathrm{FactF1}$) and fewer competing Markov Equivalence Classes ($\Delta\#\mathrm{CPDAG}$). Relative to ABA-PC, OptABA-PC retains more true independence and dependence facts on all seven datasets. FactF1 also improves on Asia and every synthetic benchmark, indicating better retain/release classification overall. Survey trades precision for more true-fact retention; Cancer's small FactF1 decline is not significant. Over $94\%$ fewer compatible CPDAGs are saved on every eight-variable benchmark, reducing uncertainty over structures consistent with the retained evidence.

CI-DAG correspondence further distinguishes the outputs: every graph by OptABA-PC satisfies the explicitly retained facts, with conflict certificates explaining releases, while MPC and ASPCR-DAG outputs respect their full CI traces in only $80/400$ ($20\%$) and $32/250$ ($12.8\%$) runs, respectively\OptABAIfSupplement{\ (Table~\ref{tab:contestability-overview})}.
On datasets with at most six variables, all $742$ requests to reinstate released facts yielded minimum-cost consistent repairs. For graded contestation, the median increase in a fact's release cost needed to make retention optimal was $45.5\%$\OptABAIfSupplement{\ (Tables~\ref{tab:contestability-margin-comparison} and~\ref{tab:margin-by-dataset})}.
OptABA-PC uses one exact optimisation instead of repeated repair solves: it is faster than ABA-PC on five datasets, but slower on ER(8) and SF(8) (Table~\ref{tab:main-results}). Under matched heuristic repair, our Bayes-ball encoding is over $2\times$ as fast as the original active-path encoding on six- and eight-variable benchmarks\OptABAIfSupplement{\ (Table~\ref{tab:bayesball-runtime})}.

\vspace{-0.2cm}
\section{Conclusion}
\vspace{-0.2cm}
We cast causal discovery from noisy data as transparent CI-fact repair. By aligning optimal MCSes with weak-constraint optimisation,
OptABA-PC couples minimum-cost CI-fact repair with guaranteed CI-DAG correspondence, conflict explanations, and predictable redress. The contribution is optimal and provable redress, not solely competitive graph accuracy. OptABA-PC traceable link from retained facts to every returned DAG supports transparency and contestation. The empirical evidence of higher true CI retention also supports the effectiveness of the approach in contexts where data is limited or uncertain and external input is needed and available, making it particularly suited to high-stakes decisions with modest variable sets. The observed benefits and theoretical guarantees motivate more efficient methods that preserve these properties. Our proposed Bayes-ball encoding is one concrete step in this direction; scalability and robustness to release costs remain priorities.

\section*{Acknowledgements}
Russo was funded by the ERC under the ERC-POC programme (grant number 101189053), Toni under the EU’s Horizon 2020 research and innovation programme (grant number 101020934) while Mazzotta, Dodaro and Ricca were partially funded by the Italian Ministry of Industrial Development (MISE) under project EI-TWIN n. F/310168/05/X56 CUP B29J24000680005 and also partially by projects  SIGENERA (CUP J29I24001770005) and MOZART (J49I24001740005) selected within the framework of the PR FESR – FSE Calabria 2021/2027 and implemented with the support of the Italian State and the Calabria Region.

\bibliographystyle{preprint}
\bibliography{references}

@article{ALVIANO2023103931,
  author       = {Mario Alviano and Carmine Dodaro and Salvatore Fiorentino and Alessandro Previti and Francesco Ricca},
  title        = {{ASP} and subset minimality: Enumeration, cautious reasoning and MUSes},
  journal      = {Artificial Intelligence},
  volume       = {320},
  pages        = {103931},
  year         = {2023},
  doi          = {10.1016/j.artint.2023.103931},
  issn         = {0004-3702}
}

@article{AmaldiPfetschTrotter2003MaxFS,
  author       = {Edoardo Amaldi and Marc E. Pfetsch and Leslie E. Jr. Trotter },
  title        = {On the Maximum Feasible Subsystem Problem, {IIS}s and {IIS}-Hypergraphs},
  journal      = {Mathematical Programming},
  volume       = {95},
  number       = {3},
  pages        = {533--554},
  year         = {2003},
  doi          = {10.1007/s10107-002-0359-y}
}

@article{barabasi1999emergence,
  author       = {Albert-László Barabási and Réka Albert},
  title        = {Emergence of Scaling in Random Networks},
  journal      = {Science},
  publisher    = {American Association for the Advancement of Science},
  volume       = {286},
  number       = {5439},
  pages        = {509--512},
  year         = {1999},
  url          = {http://www.doi.org/10.1126/science.286.5439.509}
}

@article{benjamini1995controlling,
  author       = {Yoav Benjamini and Yosef Hochberg},
  title        = {Controlling the False Discovery Rate: A Practical and Powerful Approach to Multiple Testing},
  journal      = {Journal of the Royal Statistical Society: Series B (Methodological)},
  volume       = {57},
  number       = {1},
  pages        = {289--300},
  year         = {1995},
  doi          = {10.1111/j.2517-6161.1995.tb02031.x}
}

@inproceedings{borboudakis2016towards,
  author       = {Giorgos Borboudakis and Ioannis Tsamardinos},
  title        = {Towards Robust and Versatile Causal Discovery for Business Applications},
  booktitle    = {Proceedings of the 22nd {ACM} {SIGKDD} International Conference on Knowledge Discovery and Data Mining, {KDD} 2016},
  pages        = {1435--1444},
  year         = {2016},
  doi          = {10.1145/2939672.2939872}
}

@book{casella2002statistical,
  author       = {Casella, George and Berger, Roger},
  title        = {Statistical Inference},
  publisher    = {Thomson Learning},
  year         = {2002},
  doi          = {10.1201/9781003456285},
  isbn         = {9780534243128}
}

@article{chickering2002learning,
  author       = {David Maxwell Chickering},
  title        = {Learning Equivalence Classes of Bayesian-Network Structures},
  journal      = {Journal of Machine Learning Research},
  volume       = {2},
  pages        = {445--498},
  year         = {2002},
  url          = {https://jmlr.org/papers/v2/chickering02a.html}
}

@article{chickering2002optimal,
  author       = {David Maxwell Chickering},
  title        = {Optimal Structure Identification With Greedy Search},
  journal      = {Journal of Machine Learning Research},
  volume       = {3},
  pages        = {507--554},
  year         = {2002},
  url          = {https://jmlr.org/papers/v3/chickering02b.html}
}

@article{ChinneckDravnieks1991IIS,
  author       = {John W. Chinneck and E. W. Dravnieks},
  title        = {Locating Minimal Infeasible Constraint Sets in Linear Programs},
  journal      = {{ORSA} Journal on Computing},
  volume       = {3},
  number       = {2},
  pages        = {157--168},
  year         = {1991},
  doi          = {10.1287/ijoc.3.2.157}
}

@article{colombo2014order,
  author       = {Diego Colombo and Marloes H. Maathuis},
  title        = {Order-independent constraint-based causal structure learning},
  journal      = {Journal of Machine Learning Research},
  volume       = {15},
  number       = {1},
  pages        = {3741--3782},
  year         = {2014},
  doi          = {10.5555/2627435.2750365}
}

@article{CyrasFST2018,
  author       = {Kristijonas Cyras and Xiuyi Fan and Claudia Schulz and Francesca Toni},
  title        = {Assumption-based Argumentation: Disputes, Explanations, Preferences},
  journal      = {{IfCoLog Journal of Logics and their Applications}},
  volume       = {4},
  number       = {8},
  year         = {2017},
  url          = {http://www.collegepublications.co.uk/downloads/ifcolog00017.pdf}
}

@article{DBLP:journals/ai/BondarenkoDKT97,
  author       = {Andrei Bondarenko and Phan Minh Dung and Robert A. Kowalski and Francesca Toni},
  title        = {An Abstract, Argumentation-Theoretic Approach to Default Reasoning},
  journal      = {Artificial Intelligence},
  volume       = {93},
  pages        = {63--101},
  year         = {1997},
  doi          = {10.1016/S0004-3702(97)00015-5}
}

@article{DBLP:journals/constraints/MorgadoHLPM13,
  author       = {Ant{\'{o}}nio Morgado and Federico Heras and Mark H. Liffiton and Jordi Planes and Jo{\~{a}}o Marques{-}Silva},
  title        = {Iterative and core-guided MaxSAT solving: {A} survey and assessment},
  journal      = {Constraints An Int. J.},
  volume       = {18},
  number       = {4},
  pages        = {478--534},
  year         = {2013},
  doi          = {10.1007/S10601-013-9146-2},
  url          = {https://doi.org/10.1007/s10601-013-9146-2}
}

@article{DBLP:journals/ijar/RantanenHJ20,
  author       = {Kari Rantanen and Antti Hyttinen and Matti J{\"{a}}rvisalo},
  title        = {Discovering causal graphs with cycles and latent confounders: An exact branch-and-bound approach},
  journal      = {Int. J. Approx. Reason.},
  volume       = {117},
  pages        = {29--49},
  year         = {2020},
  doi          = {10.1016/j.ijar.2019.10.009}
}

@article{DBLP:journals/tplp/CalimeriFGIKKLM20,
  author       = {Francesco Calimeri and Wolfgang Faber and Martin Gebser and Giovambattista Ianni and Roland Kaminski and Thomas Krennwallner and Nicola Leone and Marco Maratea and Francesco Ricca and Torsten Schaub},
  title        = {ASP-Core-2 Input Language Format},
  journal      = {Theory and Practice of Logic Programming},
  volume       = {20},
  number       = {2},
  pages        = {294--309},
  year         = {2020},
  doi          = {10.1017/S1471068419000450},
  url          = {https://doi.org/10.1017/S1471068419000450}
}

@article{erdds1959random,
  author       = {Erdős, Paul and Alfréd Rényi },
  title        = {On random graphs {I}},
  journal      = {Publicationes Mathematicae Debrecen},
  volume       = {6},
  number       = {290-297},
  pages        = {18},
  year         = {1959}
}

@article{freedman2025argllm,
  author       = {Gabriel Freedman and Adam Dejl and Deniz Gorur and Xiang Yin and Antonio Rago and Francesca Toni},
  title        = {Argumentative Large Language Models for Explainable and Contestable Claim Verification},
  journal      = {Proceedings of the AAAI Conference on Artificial Intelligence, {AAAI} 2025},
  volume       = {39},
  number       = {14},
  pages        = {14930--14939},
  year         = {2025},
  doi          = {10.1609/aaai.v39i14.33637},
  url          = {https://doi.org/10.1609/aaai.v39i14.33637}
}

@article{GebserKS11,
  author       = {Martin Gebser and Roland Kaminski and Torsten Schaub},
  title        = {Complex optimization in answer set programming},
  journal      = {Theory and Practice of Logic Programming},
  volume       = {11},
  number       = {4-5},
  pages        = {821--839},
  year         = {2011},
  doi          = {10.1017/S1471068411000329}
}

@inproceedings{gel88,
  author       = {Michael Gelfond and Vladimir Lifschitz},
  title        = {The Stable Model Semantics for Logic Programming},
  booktitle    = {Proceedings of the International Logic Programming Conference and Symposium},
  publisher    = {{MIT} Press},
  pages        = {1070--1080},
  year         = {1988},
  url          = {https://www.cs.utexas.edu/~ai-lab/pubs/gel88.pdf}
}

@article{glymour2019review,
  author       = {Glymour, Clark and Zhang, Kun and Spirtes, Peter},
  title        = {Review of causal discovery methods based on graphical models},
  journal      = {Frontiers in Genetics},
  publisher    = {Frontiers Media SA},
  volume       = {10},
  pages        = {524},
  year         = {2019},
  doi          = {10.3389/fgene.2019.00524}
}

@article{hung97pvalue,
  author       = {H. M. James Hung and Robert T. O'Neill and Peter Bauer and Karl Kohne},
  title        = {The Behavior of the P-Value When the Alternative Hypothesis is True},
  journal      = {Biometrics},
  publisher    = {[Wiley, International Biometric Society]},
  volume       = {53},
  number       = {1},
  pages        = {11--22},
  year         = {1997},
  url          = {http://www.jstor.org/stable/2533093},
  issn         = {0006341X, 15410420}
}

@inproceedings{Hyttinen13SAT,
  author       = {Antti Hyttinen and Patrik O. Hoyer and Frederick Eberhardt and Matti J{\"{a}}rvisalo},
  title        = {Discovering Cyclic Causal Models with Latent Variables: {A} General SAT-Based Procedure},
  booktitle    = {Proceedings of the Twenty-Ninth Conference on Uncertainty in Artificial Intelligence, {UAI} 2013},
  year         = {2013},
  url          = {https://dslpitt.org/uai/displayArticleDetails.jsp?mmnu=1\&smnu=2\&article\_id=2391\&proceeding\_id=29}
}

@inproceedings{hyttinen2014ASP,
  author       = {Antti Hyttinen and Frederick Eberhardt and Matti J{\"{a}}rvisalo},
  title        = {{Constraint-based Causal Discovery: Conflict Resolution with Answer Set Programming}},
  booktitle    = {Proceedings of the Thirtieth Conference on Uncertainty in Artificial Intelligence, UAI 2014},
  pages        = {340--349},
  year         = {2014},
  url          = {https://www.cs.helsinki.fi/u/mjarvisa/papers/hyttinen-eberhardt-jarvisalo.uai14.pdf}
}

@inproceedings{Hyttinen2017core,
  author       = {Antti Hyttinen and Paul Saikko and Matti J{\"{a}}rvisalo},
  title        = {A Core-Guided Approach to Learning Optimal Causal Graphs},
  booktitle    = {Proceedings of the Twenty-Sixth International Joint Conference on Artificial Intelligence, {IJCAI} 2017},
  pages        = {645--651},
  year         = {2017},
  doi          = {10.24963/ijcai.2017/90}
}

@inproceedings{leofante2024contestable,
  author       = {Francesco Leofante and Hamed Ayoobi and Adam Dejl and Gabriel Freedman and Deniz Gorur and Junqi Jiang and Guilherme Paulino{-}Passos and Antonio Rago and Anna Rapberger and Fabrizio Russo and Xiang Yin and Dekai Zhang and Francesca Toni},
  title        = {Contestable {AI} Needs Computational Argumentation},
  booktitle    = {Proceedings of the 21st International Conference on Principles of Knowledge Representation and Reasoning, {KR} 2024},
  year         = {2024},
  url          = {https://doi.org/10.24963/kr.2024/83}
}

@article{Liffiton08MUS,
  author       = {Liffiton, Mark H. and Sakallah, Karem A.},
  title        = {Algorithms for computing minimal unsatisfiable subsets of constraints},
  journal      = {Journal of Automated Reasoning},
  volume       = {40},
  number       = {1},
  pages        = {1--33},
  year         = {2008},
  doi          = {10.1007/s10817-007-9084-z}
}

@book{Lifschitz2019ASP,
  author       = {Lifschitz, Vladimir},
  title        = {Answer Set Programming},
  publisher    = {Springer},
  address      = {Cham},
  year         = {2019},
  doi          = {10.1007/978-3-030-24658-7},
  isbn         = {978-3-030-24658-7}
}

@inproceedings{magliacane2016ancestral,
  author       = {Sara Magliacane and Tom Claassen and Joris M. Mooij},
  title        = {Ancestral Causal Inference},
  booktitle    = {Advances in Neural Information Processing Systems 29: Annual Conference on Neural Information Processing Systems, NeurIPS 2016},
  pages        = {4466--4474},
  year         = {2016},
  url          = {https://proceedings.neurips.cc/paper/2016/hash/f3d9de86462c28781cbe5c47ef22c3e5-Abstract.html}
}

@article{MorenoCentenoKarp2013IHS,
  author       = {Erick Moreno-Centeno and Richard M. Karp},
  title        = {The Implicit Hitting Set Approach to Solve Combinatorial Optimization Problems with an Application to Multigenome Alignment},
  journal      = {Operations Research},
  volume       = {61},
  number       = {2},
  pages        = {453--468},
  year         = {2013},
  doi          = {10.1287/opre.1120.1139}
}

@book{pearl1988probabilisticreasoning,
  author       = {Judea Pearl},
  title        = {Probabilistic reasoning in intelligent systems - networks of plausible inference},
  publisher    = {Morgan Kaufmann},
  year         = {1989},
  doi          = {10.1016/C2009-0-27609-4},
  isbn         = {978-0-08-051489-5}
}

@book{pearl2009causality,
  author       = {Pearl, Judea},
  title        = {Causality},
  publisher    = {Cambridge University Press},
  edition      = {2},
  year         = {2009},
  doi          = {10.1017/CBO9780511803161}
}

@article{ramsey2017million,
  author       = {Joseph D. Ramsey and Madelyn Glymour and Ruben Sanchez{-}Romero and Clark Glymour},
  title        = {A million variables and more: the Fast Greedy Equivalence Search algorithm for learning high-dimensional graphical causal models, with an application to functional magnetic resonance images},
  journal      = {Int. J. Data Sci. Anal.},
  volume       = {3},
  number       = {2},
  pages        = {121--129},
  year         = {2017},
  url          = {https://doi.org/10.1007/s41060-016-0032-z}
}

@inproceedings{RapbergerUT24,
  author       = {Anna Rapberger and Markus Ulbricht and Francesca Toni},
  title        = {On the Correspondence of Non-flat Assumption-based Argumentation and Logic Programming with Negation as Failure in the Head},
  booktitle    = {Proceedings of the 22nd International Workshop on Nonmonotonic Reasoning {(NMR} 2024)},
  series       = {{CEUR} Workshop Proceedings},
  volume       = {3835},
  pages        = {112--121},
  year         = {2024},
  url          = {https://ceur-ws.org/Vol-3835/paper12.pdf}
}

@article{Reiter87Diagnosis,
  author       = {Reiter, Raymond},
  title        = {A theory of diagnosis from first principles},
  journal      = {Artificial Intelligence},
  volume       = {32},
  number       = {1},
  pages        = {57--95},
  year         = {1987},
  doi          = {10.1016/0004-3702(87)90062-2}
}

@inproceedings{russo2024abacd,
  author       = {Fabrizio Russo and Anna Rapberger and Francesca Toni},
  title        = {Argumentative Causal Discovery},
  booktitle    = {Proceedings of the 21st International Conference on Principles of Knowledge Representation and Reasoning, {KR} 2024},
  pages        = {938–949},
  year         = {2024},
  doi          = {10.24963/KR.2024/88}
}

@misc{Scutari14bnlearn,
  author       = {Marco Scutari},
  title        = {Bayesian Network Repository},
  year         = {2014},
  howpublished = "\url{http://www.bnlearn.com/bnrepository}"
}

@inproceedings{shachter1998bayesball,
  author       = {Shachter, Ross D.},
  title        = {Bayes-Ball: Rational Pastime (for Determining Irrelevance and Requisite Information in Belief Networks and Influence Diagrams)},
  booktitle    = {Proceedings of the Fourteenth Conference on Uncertainty in Artificial Intelligence, {UAI} 1998},
  publisher    = {Morgan Kaufmann},
  pages        = {480--487},
  year         = {1998}
}

@book{spirtes2000causation,
  author       = {Peter Spirtes and Clark Glymour and Richard Scheines},
  title        = {Causation, Prediction, and Search, Second Edition},
  publisher    = {{MIT} Press},
  series       = {Adaptive computation and machine learning},
  year         = {2000},
  url          = {https://www.cs.cmu.edu/afs/andrew/scs/cs/15-381/archive/OldFiles/lib/cvsub/.g/group/sdss/.g/group2/g/scottd/fullbook.pdf},
  isbn         = {978-0-262-19440-2}
}

@inproceedings{triantafillou2010learning,
  author       = {Sofia Triantafilou and Ioannis Tsamardinos and Ioannis G. Tollis},
  title        = {Learning Causal Structure from Overlapping Variable Sets},
  booktitle    = {Proceedings of the Thirteenth International Conference on Artificial Intelligence and Statistics, {AISTATS} 2010},
  volume       = {9},
  pages        = {860--867},
  year         = {2010},
  url          = {http://proceedings.mlr.press/v9/triantafillou10a.html}
}

@article{triantafillou2015COMBINE,
  author       = {Sofia Triantafillou and Ioannis Tsamardinos},
  title        = {Constraint-based Causal Discovery from Multiple Interventions over Overlapping Variable Sets},
  journal      = {Journal of Machine Learning Research},
  volume       = {16},
  number       = {66},
  pages        = {2147--2205},
  year         = {2015},
  url          = {http://jmlr.org/papers/v16/triantafillou15a.html}
}

@article{tsamardinos2006MMHC,
  author       = {Ioannis Tsamardinos and Laura E. Brown and Constantin F. Aliferis},
  title        = {The max-min hill-climbing Bayesian network structure learning algorithm},
  journal      = {Machine Learning},
  volume       = {65},
  number       = {1},
  pages        = {31--78},
  year         = {2006},
  doi          = {10.1007/s10994-006-6889-7}
}

@inproceedings{Wei2026QACD,
  author       = {Sheng Wei and Yulin Chen and Beishui Liao},
  title        = {{Causal Discovery as Dialectical Aggregation: A Quantitative Argumentation Framework}},
  booktitle    = {Proceedings of the 23rd International Conference on Principles of Knowledge Representation and Reasoning},
  pages        = {665--675},
  year         = {2026},
  doi          = {10.24963/kr.2026/63},
  url          = {https://proceedings.kr.org/2026/63/}
}

@article{wilcoxon1945individual,
  author       = {Frank Wilcoxon},
  title        = {Individual Comparisons by Ranking Methods},
  journal      = {Biometrics Bulletin},
  volume       = {1},
  number       = {6},
  pages        = {80--83},
  year         = {1945},
  doi          = {10.2307/3001968}
}

@article{wilks1935likelihood,
  author       = {S. S. Wilks},
  title        = {The Likelihood Test of Independence in Contingency Tables},
  journal      = {The Annals of Mathematical Statistics},
  volume       = {6},
  number       = {4},
  pages        = {190--196},
  year         = {1935},
  doi          = {10.1214/aoms/1177732564}
}

@article{Zhalama2017SAT,
  author       = {Zhalama and Jiji Zhang and Frederick Eberhardt and Wolfgang Mayer},
  title        = {SAT-Based Causal Discovery under Weaker Assumptions},
  journal      = {Proceedings of the Thirty-Third Conference on Uncertainty in Artificial Intelligence, {UAI} 2017},
  year         = {2017},
  url          = {http://auai.org/uai2017/proceedings/papers/234.pdf}
}

@inproceedings{Zhang2019ASP,
  author       = {Zhalama and Jiji Zhang and Frederick Eberhardt and Wolfgang Mayer and Mark Junjie Li},
  title        = {ASP-based Discovery of Semi-Markovian Causal Models under Weaker Assumptions},
  booktitle    = {Proceedings of the Twenty-Eighth International Joint Conference on Artificial Intelligence, {IJCAI} 2019},
  pages        = {1488--1494},
  year         = {2019},
  url          = {https://doi.org/10.24963/ijcai.2019/206}
}

\ifOptABAIncludeSupplement
  \appendix
  \def\OptABAIntegratedSupplement{1}
  \clearpage
  \newcommand{\OptABAMainTableRef}{Table~\ref{tab:main-results}}
  \section{ASP Semantics}\label{supp:asp}

We make explicit the semantics used in the main paper. A finite ground normal rule is
\[
h\ \texttt{:-}\ a_1,\ldots,a_m,\texttt{not }b_1,\ldots,
\texttt{not }b_n,
\]
where the head $h$ may be absent. Its body is true in an interpretation $I$ iff $a_i\in I$ for every $i$ and $b_j\notin I$ for every $j$. The rule is satisfied iff its body is false or $h\in I$; when the head is absent, satisfaction therefore requires the body to be false.

For a program $P$ and interpretation $I$, the Gelfond--Lifschitz reduct $P^I$ deletes each rule having some negative body literal $\texttt{not }b$ with $b\in I$ and removes all remaining negative body literals~\citep{gel88}. Then $I$ is a stable model of $P$ iff it is a subset-minimal model of $P^I$. Non-ground safe programs are interpreted through their finite ground instantiation. The encoding also uses ground choice and disjunctive constructs, including the inherited default-negated head literal in Listing~\ref{lst:caba-graph}, under their \textit{clingo} stable-model semantics~\citep{GebserKS11,DBLP:journals/tplp/CalimeriFGIKKLM20}.

A ground weak constraint $\texttt{:}\sim B.\ [w@l,\vec t]$ is active in $I$ when $B$ is true in $I$. At priority $l$, the solver sums $w$ once for each distinct activated tuple $(w,l,\vec t)$ and minimises the resulting cost vectors lexicographically by priority. OptABA-PC uses
\[
\texttt{:}\sim\ \texttt{not }o.\ [w_o@1,o]
\]
for every objective atom $o$. Including $o$ as a tuple identifier is essential: two different facts of equal weight then activate different tuples and are both counted. All release weights are strictly positive.

\section{Causal ABA Encodings}\label{supp:caba-core-encoding}

\ifdefined\OptABAIntegratedSupplement
Equation~\eqref{eq:pca-decomposition} separates the rules inherited unchanged from Causal ABA~\citep{russo2024abacd}, given in Listings~\ref{lst:caba-vocabulary}--\ref{lst:caba-acyclicity}, from the new Bayes-ball encoding in Listing~\ref{lst:bayes-ball}.
\else
The main paper defines $\aspP_{\mathrm{CA}}$ by the decomposition
\begin{equation}\label{eq:pca-decomposition}
\aspP_{\mathrm{CA}}(\mathbf V,\aspO)
=\aspP^{\mathrm{CABA}}_{\mathrm{core}}(\mathbf V,\mathcal S_{\aspO})
\cup\aspP^{\mathrm{new}}_{\mathrm{BB}}(\mathbf V,\aspO).
\end{equation}
\fi
This section gives every rule schema in the inherited component $\aspP^{\mathrm{CABA}}_{\mathrm{core}}$. Together with Listing~\ref{lst:bayes-ball}, Listings~\ref{lst:caba-vocabulary}--\ref{lst:caba-acyclicity} completely define $\aspP_{\mathrm{CA}}$.

\subsection{Original Causal ABA encodings}

The instance supplies one fact $\texttt{var(v)}$ for every $v\in\mathbf V$ and one fact $\texttt{in(v,s)}$ for every member $v$ of each represented conditioning set $s\in\mathcal S_{\aspO}$; \texttt{empty} denotes $\emptyset$. A CI fact is \emph{valid over $\mathbf V$} when it asserts independence or dependence for distinct endpoints $X,Y\in\mathbf V$ given $\mathbf{Z}\subseteq\mathbf V\setminus\{X,Y\}$. In Listing~\ref{lst:caba-vocabulary}, lines~1--2 construct the conditioning-set domain, lines~3--4 close CI atoms under endpoint symmetry, and line~5 rejects a simultaneous independence and dependence assertion for the same valid query.
\par\noindent\begin{minipage}{\linewidth}
\begin{aspencoding}[caption={Inherited conditioning-set and CI-consistency rules.},label={lst:caba-vocabulary}]
set(S) :- in(N,S).
set(empty).
indep(X,Y,S) :- indep(Y,X,S), var(X), var(Y), set(S), X!=Y.
dep(X,Y,S) :- dep(Y,X,S), var(X), var(Y), set(S), X!=Y.
:- dep(X,Y,S), indep(X,Y,S), set(S), var(X), var(Y), X!=Y, not in(X,S), not in(Y,S).
\end{aspencoding}
\end{minipage}

The candidate-graph rules in Listing~\ref{lst:caba-graph} are also inherited unchanged. Line~1 chooses a directed edge in either orientation or no adjacency. Lines~2--3 make \texttt{edge/2} the symmetric adjacency relation. Line~4 exploits the fact that a direct edge cannot coexist with a valid conditional independence between its endpoints~\citep{spirtes2000causation}; lines~5--7 keep edge and arrow choices consistent and exclude a two-cycle.
\par\noindent\begin{minipage}{\linewidth}
\begin{aspencoding}[caption={Inherited candidate-DAG and adjacency rules.},label={lst:caba-graph}]
arrow(X,Y) | arrow(Y,X) | not edge(X,Y) :- var(X), var(Y), X!=Y.
edge(X,Y) :- arrow(X,Y), X!=Y, var(X), var(Y).
edge(X,Y) :- edge(Y,X), X!=Y, var(X), var(Y).
:- edge(X,Y), indep(X,Y,S), set(S), var(X), var(Y), X!=Y, not in(X,S), not in(Y,S).
:- arrow(X,Y), not edge(X,Y), var(X), var(Y).
:- edge(X,Y), not arrow(X,Y), not arrow(Y,X), var(X), var(Y).
:- arrow(Y,X), arrow(X,Y), X!=Y, var(X), var(Y).
\end{aspencoding}
\end{minipage}

Finally, lines~1--2 of Listing~\ref{lst:caba-acyclicity} define directed reachability, and line~3 excludes every directed cycle.
\par\noindent\begin{minipage}{\linewidth}
\begin{aspencoding}[caption={Inherited reachability and acyclicity rules.},label={lst:caba-acyclicity}]
dpath(X,Y) :- arrow(X,Y), X!=Y, var(X), var(Y).
dpath(X,Y) :- arrow(X,Z), dpath(Z,Y), var(X).
:- dpath(X,X), var(X).
\end{aspencoding}
\end{minipage}

Before any CI fact is enforced, the graph projections produced by Listings~\ref{lst:caba-graph}--\ref{lst:caba-acyclicity} are exactly all labelled DAGs on the fixed vertex set $\mathbf V$; enforcing CI facts subsequently filters this family. The original Causal ABA encoding continues with instance-generated simple-path witnesses and \texttt{collider}/\texttt{nb} rules. Those path-enumerating rules are precisely the component replaced by $\aspP^{\mathrm{new}}_{\mathrm{BB}}$. They are therefore not included in $\aspP^{\mathrm{CABA}}_{\mathrm{core}}$ or in $\aspP_{\mathrm{CA}}$; $\aspP^{\mathrm{new}}_{\mathrm{BB}}$ derives \texttt{ap3/3} without materialising paths and enforces the selected $\texttt{dep}/\texttt{indep}$ atoms against that result. Thus Listings~\ref{lst:caba-vocabulary}--\ref{lst:caba-acyclicity}, together with Listing~\ref{lst:bayes-ball}, display every rule affecting the graph, Bayes-ball closure, or CI enforcement.

\subsection{Novel Bayes-ball encoding}\label{supp:bb-encoding}
We instantiate Bayes-ball for the input CI queries and their symmetric counterparts: for each objective atom over $(X,Y,\mathbf Z)$, the instance supplies both $\texttt{qpair(X,Y,Z)}$ and $\texttt{qpair(Y,X,Z)}$, independently of whether that atom is selected. This symmetric query domain ensures that the enforcement constraints also check the reversed CI atoms derived by Listing~\ref{lst:caba-vocabulary}. The facts $\texttt{in(N,Z)}$ represent membership $N\in\mathbf Z$. Listing~\ref{lst:bayes-ball} gives the complete Bayes-ball encoding. Lines~2--4 compute whether a node belongs to, or has a descendant in, $\mathbf Z$, and whether it has at least two distinct parents. The predicate \texttt{is\_collider} tests only this two-parent condition; collider status on a particular path remains path-relative. The state $\texttt{bb\_u(X,N,Z)}$ records arrival at $N$ from a child of $N$; $\texttt{bb\_d(X,N,Z)}$ records arrival from a parent. Lines~5--6 initialise both states at the source.
\par\noindent\begin{minipage}{\linewidth}
\begin{aspencoding}[caption={Bayes-ball encoding $\aspP^{\mathrm{new}}_{\mathrm{BB}}$ of OptABA-PC.},label={lst:bayes-ball}]
qpair(X,Y,Z). % generated for queried CI facts
anc_obs(N,Z) :- in(N,Z).
anc_obs(N,Z) :- dpath(N,M), in(M,Z).
is_collider(N) :- arrow(P,N), arrow(Q,N), P<Q.
bb_u(X,X,Z) :- qpair(X,_,Z).
bb_d(X,X,Z) :- qpair(X,_,Z).
bb_u(X,P,Z) :- bb_u(X,N,Z), not in(N,Z), arrow(P,N).
bb_u(X,P,Z) :- bb_d(X,N,Z), is_collider(N), anc_obs(N,Z), arrow(P,N).
bb_d(X,C,Z) :- bb_u(X,N,Z), not in(N,Z), arrow(N,C).
bb_d(X,C,Z) :- bb_d(X,N,Z), not in(N,Z), arrow(N,C).
ap3(X,Y,Z) :- qpair(X,Y,Z), bb_u(X,Y,Z), not in(X,Z), not in(Y,Z).
ap3(X,Y,Z) :- qpair(X,Y,Z), bb_d(X,Y,Z), not in(X,Z), not in(Y,Z).
:- dep(X,Y,Z),   not ap3(X,Y,Z).
:- indep(X,Y,Z),     ap3(X,Y,Z).
\end{aspencoding}
\end{minipage}
The four propagation rules in Listing~\ref{lst:bayes-ball} distinguish both arrival direction and destination. Line~7 sends a ball that arrived from a child through an unobserved $N$ to each parent of $N$, while line~9 sends the same state to each child. Line~8 sends a ball that arrived from a parent out through the parents of a collider that is itself in, or has a descendant in, $\mathbf{Z}$; this is the contracted opened-collider transition justified in Appendix~\ref{supp:bayes} (Lemma~\ref{supp:bb-simulation}). Line~10 continues a parent-arriving ball to each child of an unobserved $N$. All other local cases are blocked and therefore have no propagation rule. Lines~11--12 derive $\texttt{ap3}$ when either state reaches the target, witnessing $d$-connection for a valid query. Line~13 requires this witness for an enforced dependence, while line~14 forbids it for an enforced independence, preserving the Causal ABA correspondence between selected CI facts and graph semantics.

\section{Correctness of the Implemented Bayes-Ball Rules}\label{supp:bayes}

\def\OptABAEncodingSource{Listing~\ref{lst:bayes-ball}}

To prove the Bayes-ball correctness lemma recalled below, fix the objects in its statement: a finite DAG $G=(V,E)$ and a valid query $(X,Y,\mathbf{Z})$ with distinct endpoints $X,Y\notin\mathbf{Z}$. Let \texttt{Z} denote the ASP identifier encoding $\mathbf{Z}$, so that \texttt{in(v,Z)} holds exactly when $v\in\mathbf{Z}$. For any $A\subseteq V$, let
\[
\operatorname{An}_G(A)
  :=\{v\in V\mid v\leadsto_G a\text{ for some }a\in A\},
\]
where $v\leadsto_G a$ denotes a directed path in $G$, including one of length zero. Thus $A\subseteq\operatorname{An}_G(A)$. Once $G$ and $\mathbf{Z}$ are fixed, the recursive definitions of \texttt{dpath/2}, \texttt{anc\_obs/2}, \texttt{bb\_u/3}, \texttt{bb\_d/3}, and \texttt{ap3/3} form finite positive strata and therefore have a least fixed point. Lines~2--3 of \OptABAEncodingSource{} give
\[
\texttt{anc\_obs(v,Z)}\quad\Longleftrightarrow\quad v\in\operatorname{An}_G(\mathbf{Z}),
\]
and line~4 defines $\texttt{is\_collider}(v)$ to hold exactly when $v$ has at least two distinct parents in $G$.

The atom \texttt{bb\_u(X,v,Z)} records arrival at $v$ from a child, and \texttt{bb\_d(X,v,Z)} records arrival from a parent. With $X$ and the identifier \texttt{Z} for $\mathbf{Z}$ fixed, write $u(v)$ for \texttt{bb\_u(X,v,Z)} and $d(v)$ for \texttt{bb\_d(X,v,Z)}. Lines~5--6 of \OptABAEncodingSource{} initialise both states at $X$, and lines~7--10 close them under:
\begin{align}
u(v),\ v\notin\mathbf{Z},\ p\to v &\Longrightarrow u(p),\label{eq:bb-child-to-parent}\\
u(v),\ v\notin\mathbf{Z},\ v\to c &\Longrightarrow d(c),\label{eq:bb-child-to-child}\\
d(v),\ v\notin\mathbf{Z},\ v\to c &\Longrightarrow d(c),\label{eq:bb-parent-to-child}\\
d(v),\ v\in\operatorname{An}_G(\mathbf{Z}),\ \texttt{is\_collider}(v),\ p\to v
 &\Longrightarrow u(p).\label{eq:bb-open-collider}
\end{align}
Equations~\eqref{eq:bb-child-to-parent}, \eqref{eq:bb-child-to-child}, \eqref{eq:bb-parent-to-child}, and~\eqref{eq:bb-open-collider} are, respectively, lines~7, 9, 10, and~8 of \OptABAEncodingSource{}.

\paragraph{Shachter's operational characterisation.} A structured belief network in~\citet{shachter1998bayesball} is $(V,E,F)$, where $F$ is the set of nodes whose variables are deterministic functions of their parents. Such a node need not be a root or a constant; $F=\emptyset$ here means that our DAG query supplies no functional-determinacy annotations. Apply Shachter's Algorithm~2 with source set $J=\{X\}$ and observed set $K=\mathbf{Z}$. Algorithm~2 schedules each node for processing according to whether the ball arrives from a child or a parent; sending a ball to its parents marks its top, while sending a ball to its children marks its bottom~\citep[Alg.~2]{shachter1998bayesball}. These marks are not the ASP states themselves: $u(v)$ and $d(v)$ encode the two arrival directions. For example, at an unobserved $v$, line~7 of \OptABAEncodingSource{} sends $u(v)$ to its parents and corresponds to a top mark, whereas lines~9--10 send $u(v)$ or $d(v)$ to its children and correspond to a bottom mark. Because $F=\emptyset$, either arrival direction at an unobserved node causes a bottom mark. Lemma~\ref{supp:shachter} states the specialisation of Shachter's irrelevance theorem needed below.

\begin{supplemma}[Shachter specialisation]\label{supp:shachter}
For every $v\notin\mathbf{Z}$, the execution of Shachter's Algorithm~2 marks $v$ on the bottom iff there is an active path from $X$ to $v$ given $\mathbf{Z}$, equivalently iff $X$ and $v$ are $d$-connected given $\mathbf{Z}$.
\end{supplemma}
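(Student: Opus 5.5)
This lemma specialises Shachter's irrelevance theorem to $J=\{X\}$, $K=\mathbf{Z}$, $F=\emptyset$. The plan is to prove it by a direct two-directional correspondence between runs of Algorithm~2 and active trails. We do not re-prove Shachter's general result; we only unpack its schedule in this special case. In Algorithm~2 with $F=\emptyset$, a visit to $v$ is a pair (node, arrival direction), either from a child or from a parent. The source is visited as if from a child. The local rules are:
\begin{itemize}
\item an unobserved node visited from a child passes the ball to all parents (top mark) and all children (bottom mark);
\item an unobserved node visited from a parent passes it to all children (bottom mark);
\item an observed node visited from a parent passes it to all parents (top mark);
\item an observed node visited from a child blocks.
\end{itemize}
So, for $v\notin\mathbf{Z}$, $v$ is marked on the bottom iff $v$ is visited at all, since any visit to an unobserved node triggers the bottom mark.

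\emph{Soundness.} We show by induction on the order of visits that, whenever the ball visits $w$ from direction $\delta$, there is a trail $X=v_0,\dots,v_k=w$ whose last edge points in direction $\delta$ and which is \emph{active except possibly at the endpoint}: every non-collider is outside $\mathbf{Z}$ and every collider is in $\mathbf{Z}$. Each local rule extends such a trail by one edge while preserving this invariant. The key case is an observed node reached from a parent and bounced to a parent: it becomes a collider lying in $\mathbf{Z}$, which is open.

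This walk-with-colliders-in-$\mathbf{Z}$ must then be converted into an active path in the paper's sense, which only requires colliders to have a descendant in $\mathbf{Z}$. The standard step is that any such trail contains a simple active path. Repeated vertices are shortcut, and a collider that becomes a non-collider after shortcutting is handled by the usual argument (Koller--Friedman / Shachter) that $d$-connection via trails is equivalent to $d$-connection via simple paths. I would invoke this equivalence rather than redo it.

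\emph{Completeness.} Take an active simple path $X=v_0,\dots,v_k=v$ given $\mathbf{Z}$. Its colliders lie in $\operatorname{An}_G(\mathbf{Z})$, not necessarily in $\mathbf{Z}$. Replace each collider $c\notin\mathbf{Z}$ by the detour $c\to\cdots\to z\leftarrow\cdots\leftarrow c$, where $z\in\mathbf{Z}$ is a nearest descendant of $c$ in $\mathbf{Z}$, so the intermediate nodes are unobserved. The result is a trail whose colliders all lie in $\mathbf{Z}$ and whose non-colliders lie outside $\mathbf{Z}$.

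We then show by induction along this trail that the ball visits each $v_i$ from the direction of the incoming edge. At an unobserved non-collider the ball continues in either direction as required. At the observed collider $z$ it bounces back to parents, and on the return leg the ball arrives at $c$ from a child and continues to $c$'s parents and children, which is allowed since $c\notin\mathbf{Z}$. At the last step, $v\notin\mathbf{Z}$ is visited, so it is bottom-marked.

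\emph{Main obstacle.} The main difficulty is the bookkeeping between trails (which may revisit nodes and use detours) and simple paths. Shachter's algorithm also marks each (node, direction) pair at most once, so the induction must show that the required pair is eventually visited, not that the specific trail is followed. I would handle this by inducting on trail length over the set of visited pairs, noting that Algorithm~2 processes every scheduled pair until closure.
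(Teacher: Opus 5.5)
Your argument is correct, but it takes a different route from the paper. The paper never unpacks Algorithm~2. It instantiates Shachter's Theorem~2 with $J=\{X\}$, $K=\mathbf Z$, $L=\{v\}$, $F=\emptyset$. Step~4 defines $N_i(J\mid K)$ as the nodes not marked on the bottom, Theorem~2 identifies membership of $v$ in that set with irrelevance of $X$ to $v$ given $\mathbf Z$, and Theorem~1 identifies irrelevance with the absence of an active path, which for $F=\emptyset$ is ordinary $d$-separation.

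You instead re-derive the special case directly. For soundness, an invariant over the processing order yields a walk whose colliders lie in $\mathbf Z$ and whose non-colliders lie outside it, and a walk-to-path reduction finishes. For completeness, you splice a shortest directed detour into $\mathbf Z$ at each collider outside $\mathbf Z$ and push the ball along the resulting walk.

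The paper's version is three lines, but it rests entirely on the external theorems and on matching Shachter's irrelevance and D-separation to the paper's notion of active path. Yours is self-contained up to the standard walk--path lemma. Its completeness half is exactly the detour-and-bounce mechanism the paper has to invoke again for the contracted opened-collider rule in Lemma~\ref{supp:bb-simulation}. Running your argument directly on the four ASP propagation rules, instead of on Algorithm~2, would prove Lemma~\ref{lem:bayes-ball} without going through Shachter at all.

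Two small corrections if you write it out.

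In the shortcutting step, the delicate case is a non-collider that becomes a collider, not the reverse. A vertex of $\mathbf Z$ is a collider at every occurrence on your walk, so shortcutting at it keeps both arrowheads and it stays a collider. A vertex $w\notin\mathbf Z$ that was a non-collider at both occurrences but becomes a collider must be shown to lie in $\operatorname{An}_G(\mathbf Z)$. This holds because the walk leaves the first of those occurrences along an out-edge. By acyclicity it must meet a collider, which lies in $\operatorname{An}_G(\mathbf Z)$, along a directed path before returning to $w$.

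Also, Algorithm~2 marks tops and bottoms, not arrival directions. What your completeness induction needs is that each top mark is set exactly when the parents are scheduled (as visits from a child), and each bottom mark exactly when the children are scheduled (as visits from a parent). Hence the set of processed (node, arrival direction) pairs is closed under your four local rules, even though neighbours are scheduled at most once per mark.
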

\begin{proof}
Apply Theorem~2 of~\citet{shachter1998bayesball} with the set-valued parameters $J=\{X\}$, $K=\mathbf{Z}$, and $L=\{v\}$, and with $F=\emptyset$. Step~4 of Algorithm~2 defines $N_i(J\mid K)$ as exactly the nodes not marked on the bottom, while Theorem~2 identifies $L\subseteq N_i(J\mid K)$ with irrelevance of $X_J$ to $X_L$ given $X_K$. Hence $v$ is bottom-marked iff it is not irrelevant to $X$ given $\mathbf{Z}$. Shachter's Theorem~1 identifies irrelevance with the absence of an active path; when $F=\emptyset$, this is ordinary $d$-separation. Thus $v$ is bottom-marked iff an active path exists, equivalently iff $X$ and $v$ are $d$-connected given $\mathbf{Z}$.
\end{proof}

The ASP rules are deliberately a collapsed version of Shachter's schedule. Lemma~\ref{supp:bb-simulation} states the simulation property that must be proved for our implementation.

\begin{supplemma}[ASP--Bayes-ball simulation]\label{supp:bb-simulation}
For every $v\notin\mathbf{Z}$, either $u(v)$ or $d(v)$ belongs to the ASP closure iff the execution of Shachter's Algorithm~2 marks $v$ on the bottom.
\end{supplemma}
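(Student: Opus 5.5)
We prove the equivalence by routing both sides through $d$-connection, using Lemma~\ref{supp:shachter} for the Shachter side. It therefore suffices to show, for every $v\notin\mathbf{Z}$ with $v\neq X$, that $u(v)\lor d(v)$ lies in the least fixed point iff there is an active path from $X$ to $v$ given $\mathbf{Z}$. For $v=X$ both sides hold: lines~5--6 initialise $u(X)$ and $d(X)$, and Shachter's algorithm bottom-marks the source when $X\notin\mathbf{Z}$. We prove the two directions separately, each with a stronger invariant that records the arrival direction.

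\textbf{Soundness (closure $\Rightarrow$ active path).} We induct on the derivation stage of the least fixed point. The invariant is as follows. If $u(v)$ is derived and $v\neq X$, there is a path from $X$ to $v$ whose last edge points into its predecessor, i.e.\ it ends $\cdots v\to w$ traversed backwards. This path is active given $\mathbf{Z}$ in the sense that all its interior nodes satisfy the $d$-separation conditions; the endpoint $v$ itself is unconstrained. If $d(v)$ is derived, there is such a path ending with $w\to v$. The inductive steps follow the rules.
\begin{itemize}
\item Rules \eqref{eq:bb-child-to-parent}, \eqref{eq:bb-child-to-child} and \eqref{eq:bb-parent-to-child} extend the path through $v\notin\mathbf{Z}$. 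In each case $v$ becomes a non-collider interior node, which is allowed because it is outside $\mathbf{Z}$.
\item Rule \eqref{eq:bb-open-collider} applies when $d(v)$ arrives via $w\to v$ and $v\in\operatorname{An}_G(\mathbf{Z})$; it appends $p\to v$ traversed backwards. Now $v$ is a collider whose descendant set meets $\mathbf{Z}$, so it is open. The case $p=w$ would reuse an edge. We avoid this by working with active \emph{trails/walks} throughout and finally invoking the standard fact that an active walk between $X$ and $v$ yields an active path.
\end{itemize}

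The \texttt{is\_collider} guard is harmless in soundness, since it only restricts the rule. The standard walk-to-path reduction (as in Shachter's Theorem~1 or Koller--Friedman) is the step we cite rather than reprove.

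\textbf{Completeness (active path $\Rightarrow$ closure).} Take an active path $X=v_0,v_1,\dots,v_k=v$ and show by induction on $i$ that the state matching the direction of the edge between $v_{i-1}$ and $v_i$ is derived at $v_i$. At an interior non-collider $v_i\notin\mathbf{Z}$, the applicable rule among \eqref{eq:bb-child-to-parent}--\eqref{eq:bb-parent-to-child} depends on the incoming and outgoing directions: child-to-parent, child-to-child, or parent-to-child. The parent-to-parent case is a collider and cannot occur at a non-collider. At an interior collider $v_{i-1}\to v_i\leftarrow v_{i+1}$, activity gives $v_i\in\operatorname{An}_G(\mathbf{Z})$, and $v_i$ has two distinct parents, so \texttt{is\_collider} holds. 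Rule \eqref{eq:bb-open-collider} then yields $u(v_{i+1})$.

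Note that a collider $v_i\in\mathbf{Z}$ is allowed here: rule \eqref{eq:bb-open-collider} has no $v\notin\mathbf{Z}$ guard, matching Shachter's treatment of observed nodes that bounce from parents. This is the ``contracted'' transition, in which Shachter's algorithm instead passes through a descendant marking; we must check that this contraction preserves reachability. We expect this to be the main obstacle: matching marks of $\mathbf{Z}$-nodes and descendants in Algorithm~2 with the shortcut. Our plan is to avoid simulating the schedule step by step and to compare only via $d$-connection as above, which makes the contraction irrelevant.

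Finally, we combine the two directions. For $v\notin\mathbf{Z}$, $u(v)\lor d(v)$ holds iff $X$ and $v$ are $d$-connected given $\mathbf{Z}$, which by Lemma~\ref{supp:shachter} holds iff Shachter's Algorithm~2 bottom-marks $v$.
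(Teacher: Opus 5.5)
Your proof is correct. For $(\Leftarrow)$ it matches the paper's argument: take the simple active path from Lemma~\ref{supp:shachter} and follow it, using \eqref{eq:bb-child-to-parent}--\eqref{eq:bb-parent-to-child} at non-colliders and \eqref{eq:bb-open-collider} at colliders. You should state the first step explicitly: the seed $u(X)$, with $X\notin\mathbf{Z}$, crosses the first edge in either direction via \eqref{eq:bb-child-to-parent} or \eqref{eq:bb-child-to-child}.

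For $(\Rightarrow)$ your route is genuinely different. The paper stays inside Shachter's Algorithm~2. It drops the redundant seed $d(X)$ and identifies the three ordinary rules with Steps~3(c)--(d). It then unfolds the contracted rule \eqref{eq:bb-open-collider} into a descent to the first observed descendant, a bounce, and a return, so every ASP state corresponds to a scheduled visit. You instead witness every derived state by an active walk (non-collider occurrences outside $\mathbf{Z}$, collider occurrences in $\operatorname{An}_G(\mathbf{Z})$), reduce walks to paths, and so pass both directions through $d$-connection.

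What each buys:
\begin{itemize}
\item Your route never touches marks or scheduling, and it absorbs the seed $d(X)$ via the trivial walk.
\item It also shows directly that the closure characterises $d$-connection, which would let Lemma~\ref{lem:bayes-ball} bypass the simulation altogether.
\item The paper's route stays closer to the operational algorithm and needs no walk-to-path lemma.
\end{itemize}

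That lemma is the one load-bearing step you cite rather than prove, so tighten it in three ways.
\begin{itemize}
\item Phrase the soundness invariant for walks from the outset, including the seeds.
\item Note that the walks must allow immediate reversals $w\to v\leftarrow w$, which \eqref{eq:bb-open-collider} produces when $p=w$. So ``walk'', not ``trail'', is the right notion, and Shachter's Theorem~1 is not the relevant reference.
\item Include the short proof. If a vertex $t$ repeats, splice out the segment between its first and last occurrences; if $t$ is an endpoint, simply truncate.
\end{itemize}

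The splice preserves activity at the new junction:
\begin{itemize}
\item If the junction at $t$ is a collider and $t\notin\mathbf{Z}$, either the first occurrence was already a collider, or the walk left $t$ along an out-edge. In the latter case, by acyclicity, it meets a collider occurrence in $\operatorname{An}_G(\mathbf{Z})$ along a directed segment before it can return to $t$. Either way $t\in\operatorname{An}_G(\mathbf{Z})$.
\item If $t\in\mathbf{Z}$, both occurrences are interior (the endpoints are outside $\mathbf{Z}$) and hence colliders, so the junction is a collider too.
\end{itemize}
Each splice shortens the walk, so iteration ends in an active simple path.
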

\begin{proof}
A state $u(v)$ represents Shachter's Algorithm~2 scheduling $v$ as reached from a child, and $d(v)$ represents scheduling it as reached from a parent. At an unobserved probabilistic node, Steps~3(c)(i)--(ii) send a child-arriving ball to parents and children, and Step~3(d)(ii) sends a parent-arriving ball to children. These transitions are exactly Equations~\eqref{eq:bb-child-to-parent}--\eqref{eq:bb-parent-to-child}. The remaining ASP transition, Equation~\eqref{eq:bb-open-collider}, contracts the detour used in the proof of Shachter's Theorem~2: from an ancestor $n$, Algorithm~2 can travel down a directed path to an observed $z\in\mathbf{Z}$, bounce there under Step~3(d)(i), return to $n$, and leave through a parent of $n$. Such a detour exists exactly when $n\in\operatorname{An}_G(\mathbf{Z})$.

\noindent$({\Rightarrow})$ First omit the additional seed $d(X)$ from line~6 of \OptABAEncodingSource{}. Because $X\notin\mathbf Z$, every propagation consequence of this seed already follows from $u(X)$: its child transitions are available from $u(X)$, as are any parent transitions. Omitting the seed therefore does not change which vertices are reached, and $X$ itself remains reached by $u(X)$. For the remaining derivations, induction shows that each reached state can be scheduled by Shachter's Algorithm~2 with the corresponding arrival direction. The base state $u(X)$ matches Algorithm~2, Step~2. Each application of Equations~\eqref{eq:bb-child-to-parent}--\eqref{eq:bb-parent-to-child} is an ordinary transition. Equation~\eqref{eq:bb-open-collider} expands into the directed detour to an observed descendant and back described above, using the first observed node on that path (or bouncing immediately if the current node is observed). Thus every ASP-reached unobserved vertex is scheduled and, since $F=\emptyset$, bottom-marked.

\noindent$({\Leftarrow})$ Suppose Shachter's Algorithm~2 bottom-marks an unobserved vertex $v$. If $v=X$, then $u(X)$ is already initial. Otherwise, Lemma~\ref{supp:shachter} supplies a simple active path from $X$ to $v$. The initial state $u(X)$ can traverse its first edge in either direction because $X\notin\mathbf Z$. Follow the path inductively. At each non-collider, activity gives a vertex outside $\mathbf Z$, and the arrival and departure directions are exactly one of the three ordinary transitions in Equations~\eqref{eq:bb-child-to-parent}--\eqref{eq:bb-parent-to-child}. At each collider, the path enters and leaves along two distinct incoming edges, so \texttt{is\_collider} holds; activity also places the collider in $\operatorname{An}_G(\mathbf Z)$. Equation~\eqref{eq:bb-open-collider} therefore traverses that step. The final step derives either $u(v)$ or $d(v)$. This proves the required equivalence for reached vertices; it does not require the two procedures to generate identical arrival states.
\end{proof}

We now recall the Bayes-ball correctness lemma from the main paper.
\bayesballcorrectness*
\begin{proof}
Because $V$ is finite, grounding the propagation rules in \OptABAEncodingSource{} for the fixed \texttt{arrow}, \texttt{in}, and \texttt{qpair(X,Y,Z)} atoms gives a finite program whose positive recursive strata have the least fixed point described above. The endpoint guards \texttt{not in(X,Z)} and \texttt{not in(Y,Z)} on lines~11--12 hold because validity gives $X,Y\notin\mathbf{Z}$.

\noindent$({\Rightarrow})$ If $\texttt{ap3(X,Y,Z)}$ is derived, one of the endpoint rules on lines~11--12 of \OptABAEncodingSource{} has reached $Y$ in state $u(Y)$ or $d(Y)$. Lemma~\ref{supp:bb-simulation} gives a bottom mark at $Y$ in Shachter's Algorithm~2, and Lemma~\ref{supp:shachter} yields $d$-connection given $\mathbf{Z}$.

\noindent$({\Leftarrow})$ If $X$ and $Y$ are $d$-connected given $\mathbf{Z}$, Lemma~\ref{supp:shachter} gives a bottom mark at $Y$ in Shachter's Algorithm~2. Lemma~\ref{supp:bb-simulation} then derives $u(Y)$ or $d(Y)$, so the corresponding endpoint rule on line~11 or~12 of \OptABAEncodingSource{} derives $\texttt{ap3(X,Y,Z)}$.
\end{proof}

Let $\aspP_{\mathrm{CA}}$ be the finite ground program defined component-wise in Equation~\eqref{eq:pca-decomposition}: the inherited core in Listings~\ref{lst:caba-vocabulary}--\ref{lst:caba-acyclicity}, together with the Bayes-ball encoding in \OptABAEncodingSource{}. The objective-atom choices and weak constraints are not part of $\aspP_{\mathrm{CA}}$; they are introduced by $\enforce$ and $\adorned$. Recall from the main paper that, for a stable model $M$ of a Causal ABA instance on variables $V$, its \emph{graph projection} is
\[
\pi_G(M)=(V,E_M),\qquad
E_M=\{(x,y)\in V^2\mid\texttt{arrow}(x,y)\in M\}.
\]
Unlike $\rho(M)$, which records released objective atoms, $\pi_G(M)$ discards all non-edge atoms.

We now recall the graph-fidelity corollary from the main paper.
\graphfidelity*
\begin{proof}
Fix the finite sets $\mathbf V$ and $\aspO$ and the program $\aspP_{\mathrm{CA}}(\mathbf V,\aspO)$ from Equation~\eqref{eq:pca-decomposition}.

\noindent$({\subseteq})$ Let $M\in AS(\enforce(\aspP_{\mathrm{CA}},T,\aspO))$. Lines~1 and~5--7 of Listing~\ref{lst:caba-graph} choose at most one orientation per unordered vertex pair, and line~3 of Listing~\ref{lst:caba-acyclicity} rejects every directed cycle; hence $\pi_G(M)\in\operatorname{DAG}(\mathbf V)$. For every enforced dependence fact, line~13 of \OptABAEncodingSource{} requires $\texttt{ap3}$; for every enforced independence fact, line~14 forbids $\texttt{ap3}$. Lemma~\ref{lem:bayes-ball} therefore gives $\pi_G(M)\models T$.

\noindent$({\supseteq})$ Fix a DAG $G$ on $\mathbf V$ satisfying $T$. Proposition~3.3 of~\citet{russo2024abacd} and the stable ABA--ASP correspondence~\citep{RapbergerUT24} supply stable choices of the inherited graph generator whose \texttt{arrow} atoms are exactly the edges of $G$. Choose every objective atom in $T$ and close these choices under CI symmetry; no atom outside this symmetric closure needs to be selected. The remaining finite positive or stratified rules have a least closure. Every derived CI fact is either in $T$ or its endpoint reversal, so $G$ satisfies it. The symmetric query domain ensures that Lemma~\ref{lem:bayes-ball} applies to each such fact in the enforcement constraints. Acyclicity holds because $G$ is a DAG, and an enforced independence cannot join adjacent endpoints because a direct edge would be an active path. Thus all graph and CI constraints hold. Constraint filtering~\eqref{eq:constraint-filtering} preserves the resulting stable model after the constraints enforcing $T$ are added, giving some $M\in AS(\enforce(\aspP_{\mathrm{CA}},T,\aspO))$ with $\pi_G(M)=G$.
\end{proof}

\section{Optimal-MCS}\label{supp:projection}

\begin{example}[Optimal MCS]\label{ex:opt-mcs-appendix}
    Let $\aspP$ and $\aspO$ be the program and objective atoms from Ex.~\ref{ex:mus}, respectively. Then $MCS(\aspP,\aspO)=\{M_1,M_2,M_3\}$, with $M_1=\{\texttt{o(1)},\texttt{o(4)}\}$, $M_2=\{\texttt{o(2)},\texttt{o(4)}\}$, and $M_3=\{\texttt{o(3)},\texttt{o(4)}\}$. Let $\aspW=\{(\texttt{o(1)},\texttt{5}),(\texttt{o(2)},\texttt{10}),(\texttt{o(3)},\texttt{15}),(\texttt{o(4)},\texttt{20})\}$. Then $\mathcal{C}(M_1,\aspW)=25$, $\mathcal{C}(M_2,\aspW)=30$, and $\mathcal{C}(M_3,\aspW)=35$, hence $M_1$ is the unique optimal MCS by Def.~\ref{def:opt-mcs}.
\end{example}

\begin{example}[MCSes and Weak Constraints]\label{ex:opt-mcs-wc}
    Let $\aspP$ and $\aspO$ be the program and objective atoms from Ex.~\ref{ex:mus}, respectively, and let $\aspW$ be as in Ex.~\ref{ex:opt-mcs-appendix}. Then $\adorned(\aspP,\aspO,\aspW)$ is:
    \begin{lstlisting}[escapechar=|]
{o(1);o(2);o(3);o(4)}.
a :- not b, o(1).  b :- not c, o(2).  c :- not a, o(3).  :- not d, o(4).
:|$\sim$| not o(1). [5@1,o(1)]   :|$\sim$| not o(2). [10@1,o(2)]
:|$\sim$| not o(3). [15@1,o(3)]  :|$\sim$| not o(4). [20@1,o(4)]
    \end{lstlisting}
    An optimal stable model of $\adorned(\aspP,\aspO,\aspW)$ is $M=\{\texttt{o(2)},\texttt{o(3)},\texttt{c}\}$, making $\aspO\setminus M=\{\texttt{o(1)},\texttt{o(4)}\}$ an optimal MCS for $\aspP$ w.r.t.~$\aspO,\aspW$.
\end{example}

\subsection{Proofs}

Throughout this section, $\aspP$ is a finite ground program, $\aspO$ is a finite set of objective atoms, and $\aspW=\{(o,w_o)\mid o\in\aspO,\ w_o>0\}$ assigns one strictly positive release weight to every objective atom.

Let
\[
A(\aspP,\aspO)=\aspP\cup\{\{o\}.\mid o\in\aspO\}.
\]
For $S\subseteq\aspO$, $\enforce(\aspP,S,\aspO)$ adds to $A(\aspP,\aspO)$ the constraints $\texttt{:- not }o$ for $o\in S$. For a stable model $M$ of $A(\aspP,\aspO)$, its \emph{released-fact projection} is $\rho(M)=\aspO\setminus M$: it records exactly the objective atoms that are false in $M$.

\ifdefined\OptABAIntegratedSupplement
\else
For standalone reading, write $AS(\aspP)$ for the stable models of $\aspP$. A set $U\subseteq\aspO$ is unsatisfiable when $\enforce(\aspP,U,\aspO)$ is incoherent, and is a MUS when it is inclusion-minimal with this property. A set $R\subseteq\aspO$ is a correction set when $\enforce(\aspP,\aspO\setminus R,\aspO)$ is coherent, and is a minimal correction set (MCS) when it is inclusion-minimal. We write $CS(\aspP,\aspO)$, $MUS(\aspP,\aspO)$, and $MCS(\aspP,\aspO)$ for the corresponding families.

Let $\aspW=\{(o,w_o)\mid o\in\aspO,\ w_o>0\}$ and define the release cost $\mathcal C(R,\aspW)=\sum_{o\in R}w_o$. An optimal MCS minimises this cost over $MCS(\aspP,\aspO)$. Finally, $\adorned(\aspP,\aspO,\aspW)$ is $A(\aspP,\aspO)$ together with, for every $(o,w_o)\in\aspW$, the weak constraint
\[
\texttt{:}\!\sim\ \texttt{not }o.\ [w_o@1,o].
\]
\fi

We first recall the positive-cost lemma from the main paper.
\positivecostminimality*
\begin{proof}
Let $R$ be a minimum-cost member of $CS(\aspP,\aspO)$. If a strict subset $R'\subset R$ were also a correction set, then $\mathcal C(R',\aspW)<\mathcal C(R,\aspW)$ because every weight in the nonempty difference $R\setminus R'$ is strictly positive, contradicting the choice of $R$. Thus $R\in MCS(\aspP,\aspO)$. Since $MCS(\aspP,\aspO)\subseteq CS(\aspP,\aspO)$, the minimum cost of $R$ over all correction sets is no greater than that of any MCS; hence $R$ is an optimal MCS with respect to $\aspW$.
\end{proof}
Consequently, whenever $CS(\aspP,\aspO)\neq\emptyset$, minimising over correction sets has the same value as minimising over MCSes.

We use the standard theorem that constraints filter, but do not create, stable models. For every finite ground program $Q$ and finite set of constraints $D$,
\begin{equation}
AS(Q\cup D)=\{M\in AS(Q)\mid M\models D\},
\label{eq:constraint-filtering}
\end{equation}
where $M\models D$ means that $M$ satisfies every constraint in $D$~\citep[Sec.~5.5, Theorem on Constraints]{Lifschitz2019ASP}.

\begin{supplemma}[Exact MCS realisability]\label{supp:exact-realisability}
Let $\aspP$ be a finite ground program, let $\aspO$ be finite, and let $\aspW=\{(o,w_o)\mid o\in\aspO,\ w_o>0\}$. For every $R\in MCS(\aspP,\aspO)$ there exists a stable model $M_R$ of $A(\aspP,\aspO)$ such that $\rho(M_R)=R$. The same $M_R$ is a stable model of $\adorned(\aspP,\aspO,\aspW)$, and its weak-constraint cost is exactly $\mathcal C(R,\aspW)$.
\end{supplemma}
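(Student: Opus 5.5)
The plan is to take $R\in MCS(\aspP,\aspO)$ and put $T=\aspO\setminus R$. Since $R$ is a correction set, $\enforce(\aspP,T,\aspO)$ is coherent, so we can pick a stable model $N$ of it. By constraint filtering~\eqref{eq:constraint-filtering}, applied with $Q=A(\aspP,\aspO)$ and $D=\{\texttt{:- not }o\mid o\in T\}$, $N$ is a stable model of $A(\aspP,\aspO)$ that contains every atom of $T$. Hence $\rho(N)=\aspO\setminus N\subseteq R$.

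The key step is to upgrade this inclusion to equality using minimality of $R$. Put $R'=\rho(N)$. Then $\aspO\setminus R'=\aspO\cap N\supseteq T$, and $N$ satisfies every constraint $\texttt{:- not }o$ for $o\in\aspO\setminus R'$. By~\eqref{eq:constraint-filtering} again, $N\in AS(\enforce(\aspP,\aspO\setminus R',\aspO))$, so $R'$ is a correction set. Since $R'\subseteq R$ and $R$ is an MCS, $R'=R$. We then set $M_R=N$.

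For the second claim, note that weak constraints do not affect which interpretations are stable models; they only assign costs. So the stable models of $\adorned(\aspP,\aspO,\aspW)$ are exactly those of $A(\aspP,\aspO)$, and $M_R$ is among them. Its cost is computed from the semantics in Appendix~\ref{supp:asp}. The weak constraint for $o$ is active in $M_R$ iff $o\notin M_R$, that is, iff $o\in R$. Because each tuple carries the distinct identifier $o$, each active constraint contributes $w_o$ exactly once, with no collapsing of equal weights. All constraints sit at the single priority $1$, so the cost is $\sum_{o\in R}w_o=\mathcal C(R,\aspW)$.

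The main obstacle is the possibility that the chosen stable model makes extra objective atoms true beyond those in $T$, or equivalently releases fewer atoms than $R$. Minimality of $R$ handles this, but only after checking carefully that the constraint-filtering theorem applies in both directions, with the choice rules forming part of $Q$. A second point to verify is that weak constraints leave the set of stable models unchanged under the cited ASP-Core-2 semantics.
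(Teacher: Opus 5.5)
Your proposal is correct and follows the paper's proof. You pick a stable model of $\enforce(\aspP,\aspO\setminus R,\aspO)$, use constraint filtering to transfer it to $A(\aspP,\aspO)$ with released projection contained in $R$, use minimality of $R$ to get equality, and read off the weak-constraint cost; the only cosmetic difference is that the paper argues by contradiction from a single extra atom $r\in R\cap M_R$ (making $R\setminus\{r\}$ a correction set), whereas you show directly that the whole projection $\rho(N)\subseteq R$ is a correction set.
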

\begin{proof}
Because $R$ is a correction set, $\enforce(\aspP,\aspO\setminus R,\aspO)$ is coherent; choose one of its stable models $M_R$. This program is $A(\aspP,\aspO)$ together with the constraints enforcing $\aspO\setminus R$. Constraint filtering~\eqref{eq:constraint-filtering} shows that removing those constraints leaves $M_R$ stable for $A(\aspP,\aspO)$, while the constraints themselves give $\aspO\setminus R\subseteq M_R$.

Suppose some $r\in R$ also belonged to $M_R$. Then $M_R$ would satisfy the additional constraint $\texttt{:- not }r$. Constraint filtering~\eqref{eq:constraint-filtering} shows that adding this satisfied constraint preserves stability. But the resulting program is $\enforce(\aspP,(\aspO\setminus R)\cup\{r\},\aspO)$, so $R\setminus\{r\}$ would be a correction set, contradicting minimality of $R$. Therefore $M_R\cap\aspO=\aspO\setminus R$ and $\rho(M_R)=R$.

Weak constraints do not alter the stable models of $A(\aspP,\aspO)$; they only rank them. Exactly those indexed by atoms in $R$ are active in $M_R$, so its cost is $\sum_{r\in R}w_r=\mathcal C(R,\aspW)$.
\end{proof}

We next recall the main-paper projection theorem.
\projectioncorrespondence*
\begin{proof}
\noindent$({\Rightarrow})$ Let $M$ be an optimal stable model of $\adorned(\aspP,\aspO,\aspW)$. Weak constraints do not alter stable models, so $M$ is stable for $A(\aspP,\aspO)$. Let $S=M\cap\aspO=\aspO\setminus\rho(M)$. The model $M$ satisfies every constraint enforcing an atom in $S$; hence constraint filtering~\eqref{eq:constraint-filtering} makes $M$ stable for $\enforce(\aspP,S,\aspO)$. Therefore $\rho(M)$ is a correction set. Moreover, precisely the weak constraints indexed by $\rho(M)$ are active, so the model cost is $\mathcal C(\rho(M),\aspW)$.

Suppose a correction set $Q$ had lower cost. Since $\aspO$ is finite, $Q$ contains an MCS $R\subseteq Q$. Positivity of the weights gives
\[
\mathcal C(R,\aspW)\leq\mathcal C(Q,\aspW)
 <\mathcal C(\rho(M),\aspW).
\]
Lemma~\ref{supp:exact-realisability} supplies a stable model $M_R$ whose weak-constraint cost is $\mathcal C(R,\aspW)$, contradicting optimality of $M$. Thus $\rho(M)$ is a minimum-cost correction set and, by Lemma~\ref{lem:positive-minimality}, an optimal MCS.

\noindent$({\Leftarrow})$ Let $R$ be an optimal MCS. Lemma~\ref{supp:exact-realisability} supplies a stable model $M_R$ with $\rho(M_R)=R$ and cost $\mathcal C(R,\aspW)$. If $M_R$ were not optimal, finiteness of the ground program would let us choose a globally optimal stable model $N$ of strictly lower cost. The already proved $(\Rightarrow)$ direction now applies to this optimal $N$: $\rho(N)$ is an MCS and its cost equals the cost of $N$, so
\[
\mathcal C(\rho(N),\aspW)<\mathcal C(R,\aspW),
\]
contradicting optimality of $R$. Hence $M_R$ is optimal and realises $R$ exactly.

The two directions establish that $\rho$ maps the optimal stable models onto exactly the optimal MCSes of Definition~\ref{def:opt-mcs}, preserving their costs. If $A(\aspP,\aspO)$ is incoherent, constraint filtering implies that there are no correction sets, so both families are empty. Thus $\rho$ is surjective onto the optimal MCSes in all cases. Injectivity is not guaranteed: for $\aspP=\{\texttt{a :- not b.},\ \texttt{b :- not a.}\}$, $\aspO=\{\texttt{o}\}$, and any $w_{\texttt{o}}>0$, the two optimal stable models $\{\texttt{a},\texttt{o}\}$ and $\{\texttt{b},\texttt{o}\}$ both project to the unique optimal MCS $\emptyset$.
\end{proof}

The optimal-repair result also yields a local justification for every released fact. We recall and prove the main-paper private-conflict proposition.
\privateconflictjustification*
\begin{proof}
Recall the MUS--MCS hitting-set duality stated in the main paper: $MCS(\aspP,\aspO)$ is exactly the family of inclusion-minimal hitting sets of $MUS(\aspP,\aspO)$~\citep[Thm.~5.3]{ALVIANO2023103931}. Let $R\in MCS(\aspP,\aspO)$ and $f\in R$. Since $R\setminus\{f\}$ is not a hitting set, some $U_f\in MUS(\aspP,\aspO)$ is disjoint from $R\setminus\{f\}$. Since $R$ does hit every MUS, it follows that $f\in U_f$ and therefore $U_f\cap R=\{f\}$. This proves the private-conflict condition in the proposition.
\end{proof}

The certificate is local to the contested release: every other member of $U_f$ was retained. It proves that adding $f$ back without a compensating release restores a concrete conflict.

\section{OptABA-PC Guarantees}\label{supp:optaba-guarantees}

We recall the OptABA-PC guarantees theorem from the main paper.
\optabaguarantees*
\begin{proof}
We first prove the existence of an optimal stable model. Proposition~3.3 of~\citet{russo2024abacd} establishes a stable Causal ABA extension for every DAG on the fixed vertex set $\mathbf V$. Apply it to the edgeless DAG $G_{\emptyset}=(\mathbf V,\emptyset)$, which retains all vertices and has no edges. The stable ABA--ASP correspondence~\citep{RapbergerUT24} supplies the corresponding stable choices for the inherited ASP core. With no objective facts selected, the finite positive or stratified reachability and Bayes-ball modules have their least closure, and no CI-enforcement constraint is activated. Hence $A(\aspP_{\mathrm{CA}},\aspO)=\enforce(\aspP_{\mathrm{CA}},\emptyset,\aspO)$ has a stable model. Weak constraints preserve this nonempty feasible set and only rank its members. The program is finite ground, so its attainable costs form a nonempty finite set and have a minimum; therefore $\adorned(\aspP_{\mathrm{CA}},\aspO,\aspW)$ has an optimal stable model.

Let $M$ be any optimal stable model, let $R=\rho(M)$ and $T=\aspO\setminus R=M\cap\aspO$, and abbreviate $\mathcal F_{\mathrm{CA}}=CS(\aspP_{\mathrm{CA}},\aspO)$.

\noindent\emph{1. Graph fidelity.} Weak constraints do not alter stable models, so $M$ is stable for $A(\aspP_{\mathrm{CA}},\aspO)$. Because $T=M\cap\aspO$, $M$ satisfies every constraint enforcing an atom in $T$. Constraint filtering~\eqref{eq:constraint-filtering} therefore gives $M\in AS(\enforce(\aspP_{\mathrm{CA}},T,\aspO))$. Corollary~\ref{cor:graph-fidelity} now yields $\pi_G(M)\in\operatorname{DAG}(\mathbf V)$ and $\pi_G(M)\models T$, which is item~1.

\noindent\emph{2. Repair optimality.} The $(\Rightarrow)$ direction of Theorem~\ref{thm:projection} gives $R\in MCS(\aspP_{\mathrm{CA}},\aspO)$, identifies $\mathcal C(R,\aspW)$ with the weak-constraint cost of $M$, and makes $R$ minimum-cost among all MCSes. By definition, membership in $MCS$ gives subset-minimality. Lemma~\ref{lem:positive-minimality} and the observation following its proof show that the minimum cost over all correction sets equals the minimum cost over MCSes. Therefore
\[
R\in MCS(\aspP_{\mathrm{CA}},\aspO),\qquad
\mathcal C(R,\aspW)=\min_{Q\in\mathcal F_{\mathrm{CA}}}\mathcal C(Q,\aspW),
\]
This proves item~2.

\noindent\emph{3. Conflict justification.} Fix $f\in R$. Since $R$ is an MCS, $R\setminus\{f\}$ is not a correction set. Because
\[
\aspO\setminus(R\setminus\{f\})=T\cup\{f\},
\]
$AS(\enforce(\aspP_{\mathrm{CA}},T\cup\{f\},\aspO))=\emptyset$, which is precisely the incoherence caused by reinstating $f$ while retaining $T$. Proposition~\ref{prop:private-conflict}, proved above by MUS--MCS hitting-set duality, supplies a MUS $U_f$ with $U_f\cap R=\{f\}$. Hence $f\in U_f$, while every other member of $U_f$ lies in $T=\aspO\setminus R$. Thus $U_f$ is a conflict contained in the reinstated set $T\cup\{f\}$ and certifies that incoherence, proving item~3.

Finally, let $R'$ be any optimal MCS. The $(\Leftarrow)$ direction of Theorem~\ref{thm:projection} supplies an optimal stable model $N$ of $\adorned(\aspP_{\mathrm{CA}},\aspO,\aspW)$ such that $\rho(N)=R'$ and whose weak-constraint cost is $\mathcal C(R',\aspW)$. This proves the converse clause of Theorem~\ref{thm:optaba-guarantees}. The witness need not be unique, as stated in Theorem~\ref{thm:projection}.
\end{proof}

\section{Contestability and Redress}\label{supp:contestability}

Let $\mathcal F=CS(\aspP,\aspO)$, and partition it into
\[
K_f=\{R\in\mathcal F:f\notin R\},\qquad
L_f=\{R\in\mathcal F:f\in R\}.
\]
Because a repair lists released facts, $K_f$ contains the repairs retaining $f$, whereas $L_f$ contains those releasing it. Throughout this section, $\aspO$ is finite, all release weights are positive, and $f\in\aspO$.
For hard retention, write
\[
A_f=\adorned_f(\aspP,\aspO,\aspW)
=\adorned(\aspP,\aspO,\aspW)\cup\{\texttt{:- not }f\}.
\]

We recall the two contestability properties and their OptABA-PC guarantee from the main paper.
\releasecostcontestability*
\factenforcementcontestability*
\optabacontestability*
\begin{proof}
Fix the finite sets $\mathbf V$ and $\aspO$ of the OptABA-PC instance, where every member of $\aspO$ is a valid CI fact over $\mathbf V$, let $\aspP=\aspP_{\mathrm{CA}}(\mathbf V,\aspO)$ be the unrestricted-DAG program in Equation~\eqref{eq:pca-decomposition}, let $\aspW$ assign one strictly positive release weight to every objective atom, and fix $f\in\aspO$.

\emph{Release-cost contestability.}
For either positive weight assignment $\overline{\aspW}\in\{\aspW,\aspW'\}$, both directions of Theorem~\ref{thm:projection} identify the returned projections with the minimum-cost MCSes, and Lemma~\ref{lem:positive-minimality} identifies those with all minimum-cost correction sets. Consequently,
\[
\operatorname{Rep}_{\overline{\aspW}}=\arg\min_{R\in\mathcal F}\mathcal C(R,\overline{\aspW}).
\]
Let $\delta=w'_f-w_f>0$, and abbreviate $\mathcal C(R)=\mathcal C(R,\aspW)$ and $\mathcal C'(R)=\mathcal C(R,\aspW')$. Because $\aspW'$ changes only $w_f$, $\mathcal C'(R)=\mathcal C(R)$ for $R\in K_f$, while $\mathcal C'(R)=\mathcal C(R)+\delta$ for $R\in L_f$.
If either $K_f$ or $L_f$ is empty, every repair receives the same cost change: no repair changes cost when $L_f=\emptyset$, and every repair gains $\delta$ when $K_f=\emptyset$. In either case $\operatorname{Rep}_{\aspW'}=\operatorname{Rep}_{\aspW}$, so both required inclusions are immediate. Assume henceforth that both classes are nonempty. Since $\mathcal F$ is finite, define the attained within-class minima and their complete sets of minimisers by
\[
\begin{aligned}
k&=\min_{R\in K_f}\mathcal C(R), & K_f^*&=\arg\min_{R\in K_f}\mathcal C(R),\\
\ell&=\min_{R\in L_f}\mathcal C(R), & L_f^*&=\arg\min_{R\in L_f}\mathcal C(R).
\end{aligned}
\]
The ranking inside $K_f$ is unchanged because none of its costs changes, and the ranking inside $L_f$ is unchanged because the same $\delta$ is added to every one of its costs. Hence $K_f^*$ and $L_f^*$ remain the respective within-class minimisers, while the best class costs change from $k$ and $\ell$ under $\aspW$ to $k$ and $\ell+\delta$ under $\aspW'$. The following five cases exhaust the possible comparisons.
\begin{enumerate}
    \item If $k<\ell$, then $\operatorname{Rep}_{\aspW}=K_f^*$. Since $k<\ell<\ell+\delta$, also $\operatorname{Rep}_{\aspW'}=K_f^*$.
    \item If $k=\ell$, then $\operatorname{Rep}_{\aspW}=K_f^*\cup L_f^*$. Since $k<\ell+\delta$, the increase breaks the tie in favour of retention and $\operatorname{Rep}_{\aspW'}=K_f^*$.
    \item If $\ell<k$ and $\delta<k-\ell$, then $\ell+\delta<k$, so $\operatorname{Rep}_{\aspW}=\operatorname{Rep}_{\aspW'}=L_f^*$.
    \item If $\ell<k$ and $\delta=k-\ell$, then $\ell+\delta=k$. Therefore $\operatorname{Rep}_{\aspW}=L_f^*$, while $\operatorname{Rep}_{\aspW'}=K_f^*\cup L_f^*$: the best retaining repairs join the old releasing optima at a tie.
    \item If $\ell<k$ and $\delta>k-\ell$, then $k<\ell+\delta$. Therefore $\operatorname{Rep}_{\aspW}=L_f^*$ and $\operatorname{Rep}_{\aspW'}=K_f^*$.
\end{enumerate}
Cases~1 and~2 are exactly the cases in which an old optimum retains $f$: every such repair belongs to $K_f^*$ and remains a new optimum. In Cases~3--5 there is no old retaining optimum. Since $K_f=\{R\in\mathcal F:f\notin R\}$, this establishes the first inclusion in Property~\ref{prop:release-cost}:
\[
\{R\in\operatorname{Rep}_{\aspW}:f\notin R\}=\operatorname{Rep}_{\aspW}\cap K_f\subseteq\operatorname{Rep}_{\aspW'}.
\]
Cases~3 and~4 are exactly the cases in which a new optimum releases $f$: every such repair belongs to $L_f^*$ and was already an old optimum. In Cases~1, 2, and~5 there is no new releasing optimum. Since $L_f=\{R\in\mathcal F:f\in R\}$, this establishes the second inclusion:
\[
\{R\in\operatorname{Rep}_{\aspW'}:f\in R\}=\operatorname{Rep}_{\aspW'}\cap L_f\subseteq\operatorname{Rep}_{\aspW}.
\]
Together, the two inclusions prove release-cost contestability at $f$.

\emph{Fact-enforcement contestability.}
By constraint filtering~\eqref{eq:constraint-filtering}, adding $\texttt{:- not }f$ removes precisely the stable models with $f\notin M$, equivalently those with $f\in\rho(M)$, and changes no weak-constraint cost. Thus every surviving projection lies in $K_f$.

\noindent$({\subseteq})$ Let $M$ be an optimal stable model of $A_f$, let $R=\rho(M)$, and let $S=M\cap\aspO$. Constraint filtering~\eqref{eq:constraint-filtering} shows that removing the hard constraint leaves $M$ stable for $A(\aspP,\aspO)$. Applying the same identity when adding the constraints enforcing $S$ shows that $R$ is a correction set; since $f\in M$, $R\in K_f$. Suppose some $Q\in K_f$ were cheaper. Because $\aspO$ is finite, $Q$ contains an MCS $R'\subseteq Q$, which also belongs to $K_f$. Lemma~\ref{supp:exact-realisability} supplies a stable model $M_{R'}$ with projection $R'$ and cost at most $\mathcal C(Q,\aspW)$. Because $f\notin R'$, the model satisfies the new hard constraint, and constraint filtering preserves its stability when that constraint is added. This contradicts optimality of $M$. Hence $R\in\arg\min_{Q\in K_f}\mathcal C(Q,\aspW)$.

\noindent$({\supseteq})$ Let $R\in\arg\min_{Q\in K_f}\mathcal C(Q,\aspW)$. A strict correction subset $R'\subset R$ would also lie in $K_f$ and, by positive weights, would be cheaper; hence $R$ is an MCS. Lemma~\ref{supp:exact-realisability} supplies a stable model $M_R$ with projection $R$ and cost $\mathcal C(R,\aspW)$. Since $f\notin R$, $M_R$ satisfies the hard constraint, and constraint filtering preserves stability after it is added. If a lower-cost stable model $N$ of $A_f$ existed, removing the hard constraint and then adding the constraints enforcing $N\cap\aspO$, using the two corresponding applications of~\eqref{eq:constraint-filtering}, would show that $\rho(N)\in K_f$ is a correction set. Its cost equals the lower model cost, contradicting the choice of $R$. Thus $M_R$ is optimal for $A_f$, proving the reverse inclusion.

\noindent\emph{Totality.} It remains to prove that $K_f$ is nonempty for every valid CI fact $f$. Corollary~\ref{cor:graph-fidelity} shows that $\aspP_{\mathrm{CA}}$ admits every DAG on the fixed vertex set $\mathbf V$ before CI enforcement. If $f=(X\indep Y\given\mathbf{Z})$, the edgeless (fully disconnected) DAG $(\mathbf V,\emptyset)$ satisfies $f$. If $f=(X\notindep Y\given\mathbf{Z})$, the DAG $(\mathbf V,\{(X,Y)\})$, in which every other vertex is isolated, satisfies $f$ because its direct $X\to Y$ path has no internal node and validity gives $X,Y\notin\mathbf{Z}$. In either case, Corollary~\ref{cor:graph-fidelity} with $T=\{f\}$ supplies a stable model enforcing $f$; releasing every other fact therefore gives a correction set in $K_f$. Thus the hard-retention program is coherent. It is finite ground, so its nonempty finite set of attainable costs has a minimum and it has an optimal stable model, proving totality.
\end{proof}

A fact-enforcement contestation gives categorical redress by making $f$ mandatory. A user may instead make a graded contestation, increasing the cost of releasing $f$ while leaving it possible. The operational question is then how much that cost must increase before retaining $f$ becomes at least as good as releasing it. We first define the relevant quantity generally and then prove that it is the exact switching threshold for OptABA-PC.

We recall the contestability-margin definition from the main paper.
\contestabilitymargin*
We now specialise to $\aspP=\aspP_{\mathrm{CA}}$. Proposition~\ref{prop:optaba-contestability} makes $K_f$ nonempty for every valid CI fact, and the all-release correction set belongs to $L_f$. Since $\aspO$ is finite and its release costs are finite, $\mathcal F$ is finite and every repair has finite cost; both category minima are therefore attained, as required by the definition. Proposition~\ref{prop:margin-threshold} below shows exactly how their difference governs OptABA-PC's response.

\begin{suppproposition}[OptABA-PC contestation threshold]\label{prop:margin-threshold}
Let $f$ be a valid CI fact released by at least one optimal OptABA-PC repair under $\aspW$. For $\delta\geq0$, let $\aspW_\delta$ replace $w_f$ by $w_f+\delta$, leaving every other release cost unchanged, and let $\operatorname{Rep}_\delta$ be the repairs returned after re-solving OptABA-PC under $\aspW_\delta$. Then:
\begin{enumerate}
    \item if $\delta<\mu(f)$, $\operatorname{Rep}_\delta\subseteq L_f$: every returned repair still releases $f$;
    \item if $\delta=\mu(f)$, $\operatorname{Rep}_\delta$ contains both a retaining and a releasing repair;
    \item if $\delta>\mu(f)$, $\operatorname{Rep}_\delta\subseteq K_f$: every returned repair retains $f$.
\end{enumerate}
With integer implementation weights, $\mu(f)+1$ is therefore the smallest integer increase that forces every returned repair to retain $f$.
\end{suppproposition}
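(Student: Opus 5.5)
The plan is to reduce the statement to the five-case analysis already carried out in the proof of Proposition~\ref{prop:optaba-contestability}. First, for every $\delta\ge0$ the weights $\aspW_\delta$ are strictly positive. Theorem~\ref{thm:projection} together with Lemma~\ref{lem:positive-minimality} therefore gives
\[
\operatorname{Rep}_\delta=\arg\min_{R\in\mathcal F}\mathcal C(R,\aspW_\delta),
\]
where $\mathcal F=CS(\aspP_{\mathrm{CA}},\aspO)$. Existence of an optimum comes from Theorem~\ref{thm:optaba-guarantees}. Next we check that the margin $\mu(f)$ is defined. The totality clause of Proposition~\ref{prop:optaba-contestability} gives $K_f\neq\emptyset$. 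The all-release set lies in $L_f$, since the edgeless DAG witnesses coherence with nothing enforced. Hence $k=c_{\mathrm{keep}}(f)$ and $\ell=c_{\mathrm{rel}}(f)$ are attained minima over the finite family $\mathcal F$.

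We then use the hypothesis that some optimal repair under $\aspW$ releases $f$. This forces $\ell\le k$, because an $\aspW$-optimal member of $L_f$ has cost equal to the global minimum $\min(k,\ell)$. So $\mu(f)=k-\ell\ge0$.

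Under $\aspW_\delta$, costs in $K_f$ are unchanged and every cost in $L_f$ increases by exactly $\delta$. As in the earlier proof, this preserves the within-class minimiser sets $K_f^*$ and $L_f^*$, and moves the class minima to $k$ and $\ell+\delta$. The global optima are therefore $L_f^*$ when $\ell+\delta<k$, the union $K_f^*\cup L_f^*$ when $\ell+\delta=k$, and $K_f^*$ when $\ell+\delta>k$. Since $\ell+\delta\lessgtr k$ is equivalent to $\delta\lessgtr\mu(f)$, these three cases give items~1--3 directly. In item~2 both $K_f^*$ and $L_f^*$ are nonempty because both classes are nonempty and finite, so a retaining and a releasing repair both appear. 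The case $\delta=0$ with $\mu(f)=0$ also falls under item~2, matching the tie.

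For the integer clause, the weights and hence $\mu(f)$ are integers. By item~3, the increase $\mu(f)+1$ forces retention. Every smaller nonnegative integer $\delta$ satisfies $\delta\le\mu(f)$, and items~1--2 then leave some returned repair that releases $f$. So $\mu(f)+1$ is the least forcing increase.

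The only delicate step is justifying $\ell\le k$ from the hypothesis and confirming that both class minima are attained, so that the margin is a genuine finite quantity. The rest is the case split already proved.
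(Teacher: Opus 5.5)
Your proposal is correct and follows essentially the paper's argument. You identify $\operatorname{Rep}_\delta$ with the minimum-cost correction sets, observe that only the costs of repairs in $L_f$ shift by $\delta$, and compare $c_{\mathrm{rel}}(f)+\delta$ with $c_{\mathrm{keep}}(f)$; your explicit derivation of $c_{\mathrm{rel}}(f)\le c_{\mathrm{keep}}(f)$, and your check that no integer $\delta\le\mu(f)$ forces retention, simply spell out steps the paper leaves implicit.
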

\begin{proof}
Raising only $w_f$ leaves the cost of every retaining repair unchanged and adds $\delta$ to every releasing repair. Thus their respective best costs under $\aspW_\delta$ are
\[
c_{\mathrm{keep}}(f)
\quad\text{and}\quad
c_{\mathrm{rel}}(f)+\delta.
\]
Their difference is $c_{\mathrm{keep}}(f)-[c_{\mathrm{rel}}(f)+\delta]=\mu(f)-\delta$. Lemma~\ref{lem:positive-minimality} and Theorem~\ref{thm:projection} identify the globally minimum-cost correction sets under $\aspW_\delta$ with $\operatorname{Rep}_\delta$. Both $K_f$ and $L_f$ are finite and nonempty, so both displayed category minima are attained.

\noindent\emph{1. Below the margin.} If $\delta<\mu(f)$, then $c_{\mathrm{rel}}(f)+\delta<c_{\mathrm{keep}}(f)$. Hence no retaining repair is globally optimal and $\operatorname{Rep}_\delta\subseteq L_f$.

\noindent\emph{2. At the margin.} If $\delta=\mu(f)$, then $c_{\mathrm{rel}}(f)+\delta=c_{\mathrm{keep}}(f)$. A minimiser in each of $L_f$ and $K_f$ is therefore globally optimal, so $\operatorname{Rep}_\delta$ contains both a releasing and a retaining repair.

\noindent\emph{3. Above the margin.} If $\delta>\mu(f)$, then $c_{\mathrm{keep}}(f)<c_{\mathrm{rel}}(f)+\delta$. Hence no releasing repair is globally optimal and $\operatorname{Rep}_\delta\subseteq K_f$.

\noindent\emph{Integer threshold.} If all weights are integers, both category costs and $\mu(f)$ are integers. The smallest integer increase strictly greater than $\mu(f)$ is therefore $\mu(f)+1$, which by item~3 is the smallest integer increase forcing every returned repair to retain $f$.
\end{proof}

The margin is therefore not a graph-quality measure. It is an actionable threshold: $\mu(f)=0$ exposes an already optimal retaining alternative, $\mu(f)$ is the increase that creates a retaining--releasing tie, and any larger increase forces retention.

Beyond enabling an individual contestation, the margin supports a comparative reading of inconsistency. Logical incoherence remains binary, but $\mu(f)$ quantifies its weighted consequence for a released fact: it is the additional repair cost incurred by resolving the conflict in favour of retaining $f$, rather than releasing it. Thus, within a fixed trace and weighting scheme, larger margins identify releases that are more strongly supported by the competing evidence and more resistant to reinstatement. The normalised margin $\mu(f)/w_f$ gives the proportional increase in $f$'s release cost required to reach a retaining tie, allowing comparisons across facts and traces that use the same weight semantics. It should therefore be interpreted as relative weighted tension within the repair objective, not as a probability that $f$ is false or as an absolute degree of logical inconsistency. Appendix~\ref{supp:evidence-overview} evaluates both hard-retention responses and these margins (Tables~\ref{tab:contestability-margin-comparison} and~\ref{tab:margin-by-dataset}).

These guarantees concern the enforced CI commitments and have a precise consequence for their graph witnesses. Fix an optimal repair $R$ releasing $f$ and put $T=\aspO\setminus R$. By Theorem~\ref{thm:optaba-guarantees}(3), enforcing $T\cup\{f\}$ is incoherent; Corollary~\ref{cor:graph-fidelity} therefore implies that no DAG satisfying $T$ can also satisfy $f$. Hard retention, or a cost increase $\delta>\mu(f)$, returns only graphs satisfying $f$, so it excludes every graph compatible with that releasing repair. A graph belonging to a different, already optimal repair retaining $f$ may remain when $\mu(f)=0$. Below the margin, releasing optima remain optimal; at the margin, retaining and releasing optima coexist. Thus changing the enforced facts can force a change of graph witness, without guaranteeing improved graph accuracy or recovery of the generating DAG.

  \section{Additional Empirical Evidence}
\label{supp:empirical-evidence}
\label{supp:empirical-contestability}

This section complements the primary experiments and formal results.  It begins by instantiating the repair, private-conflict, and contestability constructions on the four-node example from the main paper.  It then gives the full experimental protocol and metric definitions, summarises the fact enforcement comparison across methods and datasets, compares optimal and heuristic ABA repair, reports fact-level redress, distinguishes the repair claim from MPC's procedural use of tests and ASPCR-DAG's different objective, and concludes with the full reconstruction, fact-quality, and computational results.

\subsection{Four-Node Worked Example}\label{supp:four-node-example}

The main paper reports $14$ CI facts and their strengths for the variables $(E,R,O,I)$. We multiply those reported strengths by $1000$ for illustrative integer costs; the empirical weight scaling is given in Table~\ref{tab:final-method-settings}. Exhaustive enumeration of all $543$ labelled DAGs on these variables gives the unique OptABA-PC baseline repair
\[
R=\{\texttt{indep(e,o,\{i\})},\texttt{indep(r,i,\{e\})},\texttt{indep(o,i,\{\})}\}
\]
of cost $2040$. Its unique graph witness has edges $E\to O$, $R\to O$, and $O\to I$. Table~\ref{tab:supp-example} gives one private MUS and the hard-retention cost and margin for every released fact. Each listed MUS intersects $R$ only in the contested fact.

\begin{table}[t]
\caption{Contestation certificates and exact integer margins for the three facts released in the 14-fact four-node trace. The weights are the main-paper strengths multiplied by $1000$; $c_{\rm keep}$ is the minimum repair cost after making the contested fact mandatory, and $\mu(f)=c_{\rm keep}(f)-2040$.}
\label{tab:supp-example}
\centering
\small
\begin{tabular}{@{}lrrrl@{}}
\toprule
Contested fact $f$ & $w_f$ & $c_{\rm keep}$ & $\mu(f)$ & One private MUS $U_f$ \\
\midrule
\texttt{indep(e,o,\{i\})} & 1000 & 2070 & 30 &
\begin{tabular}[t]{@{}l@{}}
\(\{f,\texttt{dep(e,o,\{\})},\)\\
\(\quad\texttt{indep(e,i,\{o\})}\}\)
\end{tabular} \\
\texttt{indep(r,i,\{e\})} & 530 & 2540 & 500 &
\begin{tabular}[t]{@{}l@{}}
\(\{\texttt{dep(r,i,\{\})},f,\)\\
\(\quad\texttt{indep(e,r,\{\})}\}\)
\end{tabular} \\
\texttt{indep(o,i,\{\})} & 510 & 2910 & 870 &
\begin{tabular}[t]{@{}l@{}}
\(\{\texttt{indep(r,i,\{o\})},\)\\
\(\quad\texttt{dep(r,i,\{\})},f\}\)
\end{tabular} \\
\bottomrule
\end{tabular}

\end{table}

The corresponding minimum-cost hard-retention repairs are, in table order,
\begin{align*}
R_1&=\{\texttt{indep(e,i,\{o\})},\texttt{indep(r,i,\{e\})},\texttt{indep(o,i,\{\})},\texttt{indep(e,r,\{\})}\},\\
R_2&=\{\texttt{indep(e,o,\{i\})},\texttt{indep(e,i,\{o\})},\texttt{indep(o,i,\{\})},\texttt{indep(e,r,\{\})}\},\\
R_3&=\{\texttt{dep(e,o,\{\})},\texttt{dep(e,o,\{r\})},\texttt{dep(r,i,\{\})},\texttt{indep(e,i,\{o\})}\}.
\end{align*}
Each differs from $R$ and retains the contested fact. The private MUS proves that reinstatement alone is incoherent. Increasing the contested fact's release weight by $30$, $500$, or $870$, respectively, makes the baseline and hard-retention costs tie; increments of $31$, $501$, or $871$ force every optimum to retain it.
A separate $13$-fact variant, obtained by omitting $\texttt{indep(e,r,\{\})}$ from the trace, is also exhaustively checked. It has $18$ MCSes, a unique optimal released-fact projection of cost $1570$, and four compatible DAGs. Making the released fact $\texttt{indep(e,i,\{o\})}$ mandatory raises the proved optimum to $2040$ and yields a disjoint set of $13$ compatible DAGs. This fact is false for the generating DAG in the main paper, whose direct edge $E\to I$ remains active given $O$; neither graph family contains that DAG. The example demonstrates enforced commitment, not truth restoration.

\subsection{Experimental Protocol and Metrics}\label{supp:experimental-protocol}

\paragraph{Benchmark choice.} Using the benchmark design of Causal ABA~\citep{russo2024abacd}, the fixed-network study includes Cancer ($5$ nodes, $4$ edges), Earthquake ($5,4$), Survey ($6,6$), and Asia ($8,8$) from the bnlearn repository~\citep{Scutari14bnlearn}.  These small, standard Bayesian networks include medical, legal, and policy decision models derived from published studies or expert elicitation.  They are appropriate for this evaluation for three reasons: their known DAGs and conditional probability tables (CPTs) provide ground truth for both CI facts and graph reconstruction; repeated finite samples expose the test errors and conflicts that motivate repair; and reusing the benchmarks and sampling design of~\citep{russo2024abacd} makes the comparison with ABA-PC a controlled extension rather than a change of empirical regime.  We add Erd\H{o}s--R\'enyi (ER)~\citep{erdds1959random} and scale-free (SF)~\citep{barabasi1999emergence} DAGs with five and eight nodes to vary topology beyond four fixed networks and to evaluate the MPC~\citep{colombo2014order} and FGS~\citep{ramsey2017million} reference methods under the same generated samples.

\paragraph{Data generation.} For each dataset we use 50 seeds, 2026--2075, and generate $N=5000$ independent observations per seed.  For a bnlearn benchmark, we load its BIF file, including the published DAG and CPTs, and forward-sample a new data matrix for each seed using the sampling protocol of~\citep{russo2024abacd}.  Thus the BN is fixed and only the finite sample changes.  For a primary synthetic run, the seed determines both a new graph and its data-generating distribution.  An ER graph is formed with $n$ edges and given a random acyclic orientation and node permutation; an SF graph is generated by directed Barab\'asi--Albert preferential attachment with one attachment per added node and then randomly permuted.  Each node is categorical with at least two states; its allowed cardinality is reduced when it participates in a larger parent configuration, using $20$ as the implementation's CPT-enumeration target.  For each node and parent configuration, its probability vector is drawn from a symmetric Dirichlet distribution whose concentration is sampled uniformly from $[1,5]$, after which the BN is forward-sampled.  Categorical states are numerically encoded and columns are aligned with the generating adjacency matrix.  Before CI testing, the bnlearn codes are additionally standardised; this is a one-to-one relabelling and therefore leaves the contingency counts for the $G^2$ CI test~\citep{spirtes2000causation} unchanged.  FGS independently factorises every categorical column into integer levels for its BDeu score.

\paragraph{Methods and matched evidence.} Our proposed OptABA-PC, ABA-PC~\citep{russo2024abacd}, and MPC~\citep{colombo2014order} use the likelihood-ratio $G^2$ conditional-independence test~\citep{wilks1935likelihood,spirtes2000causation} at $\alpha=0.01$.  Within a seed, ABA-PC and OptABA-PC receive exactly the same recorded CI facts and $p$-value-derived release strengths (as in~\citep{russo2024abacd}), as well as the same time limits; their paired comparison therefore changes only the repair rule.  MPC uses the same sample and test configuration but consumes test outcomes procedurally.  FGS~\citep{ramsey2017million} uses Tetrad's native discrete BDeu score on the same categorical sample and returns one score-selected CPDAG; because it exposes no CI trace, it is a graph-reconstruction rather than fact-repair baseline.  ASPCR-DAG~\citep{hyttinen2014ASP} uses its native Bayesian log-weighted CI configuration (prior independence $0.4$ and Bayesian $\alpha$ parameter $20$) on the same saved samples. Its graph comparison and forced-fact sensitivity audit cover Cancer, Earthquake, Survey, ER(5), and SF(5), with 50 DAG-restricted (otherwise it could return partial ancestral graphs under no causal sufficiency assumption, allowing for latent confounders) runs per dataset.

\begin{table}[t]
\caption{Baseline descriptions and final method settings. Shared data, seed, CI, and computational settings are given in the surrounding protocol.}
\label{tab:final-method-settings}
\centering
\small
\begin{tabularx}{\linewidth}{@{}l>{\raggedright\arraybackslash}X>{\raggedright\arraybackslash}X@{}}
\toprule
Method & Role and implementation & Final settings \\
\midrule
MPC & Majority-PC~\citep{colombo2014order}, an order-independent PC variant, learns a skeleton by CI testing, orients colliders by majority evidence across separating sets, and applies Meek propagation to return one CPDAG. & Stable skeleton search; $G^2$ test; $\alpha=0.01$; majority collider rule (implementation \texttt{uc\_rule=5}); collider priority $2$; no conditioning-depth limit. \\
ABA-PC & Causal ABA-PC~\citep{russo2024abacd} receives MPC's CI trace and releases facts from lowest to highest strength until its Bayes-ball Causal ABA program is coherent. & Same MPC trace; no conditioning-set-size multiplier; strength $s(p)=1-p/(2\alpha)$ for $p<\alpha$ and $s(p)=(\alpha-p/2-1/2)/(\alpha-1)$ otherwise; integer weight $\max\{1,\operatorname{round}(9{,}999{,}999s)\}$; incremental Bayes-ball encoding. \\
OptABA-PC & Our method uses the same facts, weights, and candidate-DAG encoding as ABA-PC, replacing sequential release with one minimum-cost correction-set optimisation. & MUS-reified weak constraints; \textit{clingo} branch-and-bound (\texttt{bb}); \texttt{optN} enumeration of optimal stable models; minimum total released weight; four solver threads. \\
FGS & Fast Greedy Search~\citep{ramsey2017million} is the Tetrad/\texttt{py-causal} fast implementation of GES~\citep{chickering2002optimal}; it greedily inserts and removes edges under a score and returns one CPDAG. & BDeu score (\texttt{bdeu-score}); discrete-data interface; sample prior $15$; structure prior $1$; unrestricted maximum degree (\texttt{-1}); faithfulness assumed; symmetric first step disabled. \\
ASPCR-DAG & The ASP method of~\citet{hyttinen2014ASP}, restricted here to causally sufficient DAGs, assigns a cost to every graph--test disagreement and returns one optimal DAG. & Exhaustive conditioning schedule $n-2$; Bayesian CI test; log weights; prior independence $0.4$; Bayesian $\alpha=20$; acyclic sufficient encoding; \textit{clingo} \texttt{crafty} configuration; comparison time limit $300$\,s. \\
\bottomrule
\end{tabularx}

\end{table}

\paragraph{Hyperparameters.} The configuration of parameters in Table~\ref{tab:final-method-settings} was fixed from the standard published implementation or the Causal ABA protocol. In particular, $G^2$ with $\alpha=0.01$ is the Causal ABA setting used by MPC, ABA-PC, and OptABA-PC; MPC uses the standard stable-majority rule; FGS uses Tetrad's native discrete BDeu configuration listed in the table; and ASPCR-DAG uses the authors' Bayesian log-weighted configuration.

\paragraph{Computation.} All ABA repair runs use our proposed Bayes-ball encoding, a 300-second solve limit, a separate 120-second compatible-set evaluation limit, and four \textit{clingo} threads. ASPCR-DAG is omitted above six nodes because preliminary runs exceeded one hour, well beyond the time limit in Table~\ref{tab:final-method-settings}. This is also reported in the original paper~\citep{hyttinen2014ASP}. At eight nodes, the $\binom{n}{2}2^{n-2}=1792$ possible CI constraints define a conceptual space of $2^{1792}$ fact subsets, motivating both the limits and the smaller scope of the exact contestability audit. Tables~\ref{tab:contestability-overview} and~\ref{tab:primary-repair-enforcement} audit all 400 (50 seeds $\times$ 8 datasets) primary $G^2$ traces. The exact decision audit in Table~\ref{tab:contestability-margin-comparison} uses 250 proved ABA instances and makes each released OptABA-PC fact mandatory in turn. The separate ASPCR-DAG sensitivity audit (Table~\ref{tab:contestability-margin-comparison}) re-solves all 4,330 failed facts in 250 proved native outputs. MPC has no consistent DAG extension for 12 endpoint-invalid Survey outputs and one endpoint-invalid ER(8) output, so only the affected DAG metrics display paired sample sizes of 38 and 49.

Table~\ref{tab:bayesball-runtime} isolates the efficiency differences between the original active-path~\citep{russo2024abacd} and our proposed Bayes-ball encodings under the matched primary configuration.  Across $50$ seeds on each of the eight datasets, both implementations receive identical test traces and weights and use the same sequential ABA-PC repair policy; every pair that completes returns the same retained-fact count and weight.  Bayes-ball completes all $400$ runs, while active-path materialisation times out on three ER(8) and two SF(8) runs.  Its mean end-to-end time is significantly lower on Cancer, Survey, Asia, ER(8), and SF(8), indistinguishable on ER(5), and higher on the very small Earthquake and SF(5) instances.  The largest reductions occur at eight nodes: $94.7\%$ on Asia and at least $96.8\%$ on the paired synthetic runs.  Thus the benefit is a marked improvement in scaling rather than a uniform speedup on negligible-runtime instances.  The formal proof and independent exhaustive checker establish that the Bayes-ball rules enforce the same $d$-separation semantics without materialising active paths.

\begin{table}[t]
\caption{Matched encoding-efficiency ablation over seeds 2026--2075, with $N=5000$, $G^2$ tests at $\alpha=0.01$, identical test traces and weights, and the same sequential ABA-PC repair policy. Only the causal-consistency encoding changes. Times are mean $\pm$ standard deviation for completed seed pairs and include program construction and solving; ``BB/original'' is the ratio of those means, so values below one favour Bayes-ball. Completion is over all 50 runs. Bold marks the significantly faster implementation under paired two-sided Wilcoxon tests with BH adjustment across datasets ($^*$, $^{**}$, and $^{***}$ for adjusted $p<0.05$, $0.01$, and $0.001$). Every completed pair returned identical retained-fact counts, weights, and model counts.}
\label{tab:bayesball-runtime}
\centering
\small
\resizebox{\linewidth}{!}{%
\begin{tabular}{@{}lrrrrrr@{}}
\toprule
Dataset & \multicolumn{2}{c}{Completed} & Paired $n$ & \multicolumn{2}{c}{End-to-end time (s)} & BB/original \\
\cmidrule(lr){2-3}\cmidrule(lr){5-6}
 & Active paths & Bayes-ball & & Active paths & Bayes-ball & \\
\midrule
Cancer (5) & 50/50 & 50/50 & 50 & $0.118\pm0.027$ & $\mathbf{0.102\pm0.014}^{***}$ & $0.86\times$\\
Earthquake (5) & 50/50 & 50/50 & 50 & $\mathbf{0.068\pm0.013}^{***}$ & $0.099\pm0.013$ & $1.45\times$\\
Survey (6) & 50/50 & 50/50 & 50 & $0.385\pm0.119$ & $\mathbf{0.148\pm0.058}^{***}$ & $0.38\times$\\
Asia (8) & 50/50 & 50/50 & 50 & $51.66\pm50.27$ & $\mathbf{2.74\pm4.48}^{***}$ & $0.05\times$\\
ER (5) & 50/50 & 50/50 & 50 & $0.078\pm0.029$ & $0.084\pm0.007$ & $1.08\times$\\
ER (8) & 47/50 & 50/50 & 47 & $39.72\pm58.73$ & $\mathbf{1.26\pm4.14}^{***}$ & $0.03\times$\\
SF (5) & 50/50 & 50/50 & 50 & $\mathbf{0.075\pm0.026}^{***}$ & $0.099\pm0.096$ & $1.32\times$\\
SF (8) & 48/50 & 50/50 & 48 & $34.43\pm61.81$ & $\mathbf{0.867\pm2.488}^{***}$ & $0.03\times$\\
\bottomrule
\end{tabular}
}
\end{table}

\paragraph{Fact and graph metrics.} A recorded CI fact is correct when its independence or dependence relation agrees with $d$-separation in the generating DAG. Fact precision treats retained correct and false facts as true and false positives; fact recall additionally counts released correct facts as false negatives; FactF1 is their harmonic mean. Graph F1 is the harmonic mean of edge precision and recall, and Structural Hamming Distance (SHD)~\citep{tsamardinos2006MMHC} counts adjacency and orientation edits relative to the generating graph. Because observational data identify a Markov-equivalence class, we score CPDAGs and, where the method returns a CPDAG, its \emph{consistent DAG extensions}: DAGs in the represented class that preserve its adjacencies and compelled arrowheads while orienting reversible edges. ABA-PC and OptABA-PC are scored over the compatible DAGs saved within the evaluation budget; ASPCR-DAG is scored on its single returned DAG. Average, best, and worst values describe variation within the saved structural set, with the partial-enumeration qualification stated below.

MPC and FGS natively return a CPDAG rather than one oriented DAG.  The facts selected by ABA-PC and OptABA-PC can likewise admit several DAGs, possibly spanning several CPDAGs.  Table~\ref{tab:final-graph-metrics} therefore compares equivalence-class reconstruction and then the relevant DAG outputs: every compatible DAG for ABA-PC/OptABA-PC, every consistent extension of the CPDAG returned by MPC/FGS, and the single DAG returned by ASPCR-DAG.  Tables~\ref{tab:final-cpdag-ranges}--\ref{tab:final-dag-f1-ranges} report SHD and F1 variation across the corresponding CPDAGs and DAGs.  All DAGs in one CPDAG entail the same CI model, so compatibility is class-level; their edge orientations can nevertheless have different SHD and F1 against the generating DAG.

Using PC-based methods, returned structures need not satisfy the CI evidence used during discovery. For ABA-PC and OptABA-PC, the reported compatible CPDAGs and DAGs satisfy every retained post-repair fact. For MPC and ASPCR-DAG, compatibility requires satisfaction of every test in the complete recorded native trace. It is undefined for score-based FGS and is shown as ``--''. Full-trace enforcement permits no released facts, whereas selected-set enforcement tests only the commitments exposed by a repair. We report the seed-level mean $\pm$ standard deviation.

\paragraph{Statistical analysis.}   For every reported method comparison, we align runs by dataset and seed, discard a pair only when that outcome is unavailable for either method, and apply a two-sided Wilcoxon signed-rank test~\citep{wilcoxon1945individual} to the paired values.  Benjamini--Hochberg false-discovery-rate adjustment~\citep{benjamini1995controlling} is performed within each dataset--metric family across the available comparisons; adjusted $p<0.05$ is treated as significant.  An all-zero paired difference is assigned $p=1$. Stars compare OptABA-PC with ABA-PC and daggers compare other methods with OptABA-PC. The only reduced paired samples are the MPC DAG-extension metrics identified above and locally in their tables. \OptABAMainTableRef\ reports within-seed differences and relative effect sizes; Tables~\ref{tab:absolute-results} and~\ref{tab:absolute-cpdag-results} supply their absolute scale and inferential annotations.

\paragraph{Execution environment.} Experiments were run CPU-only on Ubuntu 22.04.5 LTS (Linux 6.8.0-124, x86-64) on one Intel Xeon w5-2455X socket with 12 physical cores, 24 hardware threads, a 3.20\,GHz base frequency and 4.60\,GHz maximum frequency, and 128\,GB RAM (125\,GiB visible to the operating system); no GPU was used. The primary environment used Python 3.12.1, \textit{clingo} 5.8.0, NumPy 1.26.4, pandas 2.2.3, SciPy 1.15.3, scikit-learn 1.6.1, NetworkX 3.4.2, \texttt{causal-learn} 0.1.4.1, pgmpy 1.0.0, \texttt{py-causal} 1.2.1, and OpenJDK 19.0.2. ASPCR-DAG used R 4.1.2 with \texttt{pcalg} 2.7-11, \texttt{graph} 1.72.0, and \texttt{Rgraphviz} 2.38.0.

\begin{table}[t]
\caption{Evidence correspondence, run completion, and repair coverage. A trace is feasible if some DAG satisfies every recorded CI outcome; a full-trace output is the method's returned graph or class satisfying all of them. A timeout incumbent is a valid saved repair whose optimality was not proved before the limit. An explicit repair names excluded facts, and selected-repair enforcement verifies every retained fact against the output. ASPCR-DAG uses its primary matched run set; FGS has no CI trace. A dash denotes an undefined object.}
\label{tab:contestability-overview}
\centering
\small
\setlength{\tabcolsep}{3pt}
\begin{tabularx}{\linewidth}{@{}l>{\raggedright\arraybackslash}p{0.25\linewidth}*{4}{>{\centering\arraybackslash}X}@{}}
\toprule
Method & Run status & Feasible traces & Full-trace matches & Explicit repairs & Selected repair enforced \\
\midrule
OptABA-PC & 400/400 saved: 368 completed, 32 timeout incumbents & 88/400 & 88/400 & 312/312 & 400/400 \\
ABA-PC & 400/400 saved and completed & 88/400 & 88/400 & 312/312 & 400/400 \\
ASPCR-DAG & 250/250 saved; all optima proved & 32/250 & 32/250 & -- & -- \\
MPC & 400/400 saved and completed & 88/400 & 80/400 & -- & -- \\
FGS & 400/400 saved and completed & -- & -- & -- & -- \\
\bottomrule
\end{tabularx}

\end{table}

\subsection{Fact Enforcement Comparison}\label{supp:evidence-overview}

Table~\ref{tab:contestability-overview} reports run completion, input feasibility, output correspondence, and repair coverage. Table~\ref{tab:contestability-margin-comparison} distinguishes the decision evidence each method supplies. Table~\ref{tab:primary-repair-enforcement} then places the 400-trace OptABA-PC--ABA-PC cost comparison beside MPC's output enforcement statistics, while Table~\ref{tab:margin-by-dataset} gives the dataset-level forced-fact margins.

Table~\ref{tab:contestability-overview} states the two relevant counts directly. A \emph{feasible trace} is an input for which at least one DAG satisfies every recorded outcome; a \emph{full-trace output} is the returned graph or Markov class actually doing so. Only 88/400 primary $G^2$ traces are feasible. ABA-PC and OptABA-PC match all 88 whenever an output is present because both expose and enforce a selected repair. MPC returns a graph on every input but matches only 80 traces, leaving eight avoidable failures among the 88 feasible cases. In the primary ASPCR-DAG setting, the proved soft-objective optimum has zero disagreement on 32/250 traces; positive optimum cost establishes that the remainder are infeasible under its DAG encoding. FGS has no CI trace to audit.

Across all 400 runs, 368 completed and 32 stopped at the optimisation limit after saving a valid incumbent: two on Asia, 19 on ER(8), and 11 on SF(8). Compatible-set enumeration is a separate post-solve step: the stored primary ABA/OptABA-PC metadata flags every such enumeration as budget-limited, including all 50 SF(8) instances. The saved set sizes and set-average graph metrics are consequently partial-enumeration summaries; this does not affect the selected facts, saved repair consistency, or the separately proved exact audits below.

\begin{table}[t]
\caption{Decision and forced-fact evidence. OptABA-PC, ABA-PC, and MPC use the exact 250-trace $G^2$ audit; ASPCR-DAG uses a separate 250-run native Bayesian audit. A forced OptABA-PC solve is a user contestation and $\mu(f)/w_f$ is its exact threshold. ASPCR-DAG's forced solve is a post-hoc sensitivity test of a soft graph-disagreement objective, so its $\mathrm{gap}/w_f$ is not numerically comparable. ABA-PC and MPC define neither category optima nor a forced-fact certificate. Brackets give $[Q_1,Q_3]$; a dash means that the object is undefined. Bold marks OptABA-PC's exact certified result.}
\label{tab:contestability-margin-comparison}
\centering
\small
\setlength{\tabcolsep}{3pt}
\begin{tabularx}{\linewidth}{@{}l>{\raggedright\arraybackslash}p{0.09\linewidth}*{3}{>{\raggedright\arraybackslash}X}@{}}
\toprule
Method & Scope & Baseline decision evidence & Forced-fact evidence & Certificate supplied \\
\midrule
OptABA-PC & $G^2$, 250 & Minimum repair on \textbf{111/111} inconsistent traces; lower cost than ABA-PC on \textbf{108}, tied on 3. & \textbf{742/742} proved; zero gaps 0/742; median $\mu/w_f=0.455$ [0.182,0.912]. 11.6\% are $\leq0.1$; 22.5\% are $>1$. & \textbf{Exact user-contestation threshold.} \\
ABA-PC & $G^2$, 250 & Selected repair enforced on 250/250; minimum cost on only 3/111 inconsistent traces (108 higher). & -- & Heuristic repair; no exact response threshold. \\
MPC & $G^2$, 250 & Compatible on 126/139 feasible traces; 13 avoidable gaps; no selected repair. & -- & No repair or contestation object. \\
ASPCR-DAG & native, 250 & Full trace on 23/250; graph optimum proved on 250/250. & 4330/4330 proved; zero gaps 0/4330; median $\mathrm{gap}/w_f=2.828$ [0.890,8.647]. & Post-hoc graph-objective sensitivity. \\
\bottomrule
\end{tabularx}

\end{table}

Table~\ref{tab:contestability-margin-comparison} shifts from availability to the evidence behind a decision. Its $G^2$ rows use the same 250-trace exact ABA/MPC audit, whereas its ASPCR-DAG row uses a separate 250-run native Bayesian audit. The table therefore compares which objects are certified---minimum repair, enforced selected set, or post-hoc sensitivity---but does not treat the two normalised margin columns as paired effect sizes.

\FloatBarrier

\paragraph{Optimal versus heuristic ABA repair}
\label{supp:abapc-optimality}

ABA-PC exposes a retained/released fact set and its returned graphs satisfy every retained fact, but its strength-ordered stopping rule computes neither of the category optima $c_{\mathrm{rel}}(f)$ and $c_{\mathrm{keep}}(f)$. Re-running the heuristic with $f$ made mandatory could enforce that fact, but its cost difference would not be the exact margin or inherit Proposition~\ref{prop:margin-threshold}. Table~\ref{tab:primary-repair-enforcement} compares the paired repair costs before graph accuracy is considered and, in the final columns, contrasts this repair with MPC's enforcement of the input trace.

\begin{table}[t]
\caption{Repair quality and MPC enforcement on the same 400 primary $G^2$ traces. Feasible inputs admit a DAG satisfying the entire trace; all other inputs require a repair. \emph{Lower/inconsistent} counts saved OptABA-PC repairs with lower release cost than ABA-PC among the inconsistent inputs, and \emph{same cost} is the complementary count; ABA-PC is never cheaper. Median saving is conditional on a lower saved cost. Thirty-two OptABA-PC rows are valid timeout incumbents; Table~\ref{tab:contestability-margin-comparison} separately gives the 250-instance exact audit with proved optima. MPC \emph{full trace} counts returned classes satisfying every input test, and \emph{gap} counts feasible traces that MPC nevertheless fails to satisfy.}
\label{tab:primary-repair-enforcement}
\label{tab:abapc-optimality-by-dataset}
\label{tab:mpc-enforcement-gap}
\centering
\small
\resizebox{\linewidth}{!}{%
\begin{tabular}{@{}lrrrrrr@{}}
\toprule
& Input status & \multicolumn{3}{c}{Saved OptABA-PC repair vs. ABA-PC} & \multicolumn{2}{c}{MPC output} \\
\cmidrule(lr){2-2}\cmidrule(lr){3-5}\cmidrule(l){6-7}
Dataset & Feasible / inputs & Lower / inconsistent & Same cost & Median saving & Full trace / inputs & Gap / feasible \\
\midrule
Cancer (5) & 0/50 & 50/50 & 0 & 40.2\% & 0/50 & -- \\
Earthquake (5) & 47/50 & 1/3 & 2 & 61.8\% & 47/50 & 0/47 \\
Survey (6) & 0/50 & 50/50 & 0 & 57.1\% & 0/50 & -- \\
Asia (8) & 0/50 & 50/50 & 0 & 73.0\% & 0/50 & -- \\
ER (5) & 16/50 & 26/34 & 8 & 48.4\% & 11/50 & 5/16 \\
ER (8) & 3/50 & 46/47 & 1 & 66.4\% & 3/50 & 0/3 \\
SF (5) & 20/50 & 23/30 & 7 & 60.9\% & 17/50 & 3/20 \\
SF (8) & 2/50 & 48/48 & 0 & 71.6\% & 2/50 & 0/2 \\
\midrule
\textbf{Total} & 88/400 & \textbf{294/312} & 18 & 63.4\% & 80/400 & \textbf{8/88} \\
\bottomrule
\end{tabular}
}
\end{table}

On the 312 inconsistent primary traces, the saved OptABA-PC repair costs less on 294 and ties on 18; ABA-PC is never cheaper. The conditional median reduction is $63.4\%$, with strict improvements on every inconsistent Cancer, Survey, Asia, and SF(8) trace and on 46/47 ER(8) traces. These full-coverage figures include 32 valid timeout incumbents. The exact 250-instance audit in Table~\ref{tab:contestability-margin-comparison} proves the corresponding optimum on every pair: among its 111 inconsistent traces, OptABA-PC costs less on 108 and ties on three.

\paragraph{OptABA-PC: fact-level redress and robustness}
\label{supp:optaba-contestability}

We instantiated the hard-retention contestation and exact margin from Proposition~\ref{prop:optaba-contestability}, Definition~\ref{def:margin}, and Proposition~\ref{prop:margin-threshold} on the frozen OptABA-PC runs. The exact audit uses Cancer, Earthquake, Survey, ER(5), and SF(5), with 50 seeds (2026--2075) per dataset, $N=5000$, and the matched $G^2$ test at $\alpha=0.01$. We do not pool eight-node margins because not every baseline and hard-retention solve reached proved optimality within the computational budget.

For every fact $f$ in the released-fact projection of a baseline optimum, we re-solved the emitted optimisation program after adding the hard constraint that makes $f$ mandatory.  Because the baseline optimum releases $f$, its proved cost gives $c_{\mathrm{rel}}(f)$; the proved hard-retention optimum gives $c_{\mathrm{keep}}(f)$.  Their difference therefore directly instantiates the margin in Definition~\ref{def:margin}.  We report $\mu(f)/w_f$ and, to relate this fact-specific threshold to the complete repair, compare $\mu(f)$ with the baseline total release cost $c_{\mathrm{rel}}(f)$.

\begin{table}[t]
\caption{Dataset-level forced-fact evidence over 250 validated baselines per method (50 per dataset). Each target is a fact rejected by the baseline optimum; all re-solves prove optimality and none has zero gap. OptABA-PC's $\mu(f)/w_f$ is an exact user-contestation threshold; $c^*$ denotes the complete baseline optimum repair cost. ASPCR-DAG's $\mu_{\rm ASP}(f)/w_f$ is instead a post-hoc sensitivity gap for its soft graph-disagreement objective. These are not paired effect sizes: the methods use different traces, weights, and objectives, and their synthetic audits use different graph densities.}
\label{tab:margin-by-dataset}
\centering
\small
\resizebox{\linewidth}{!}{%
\begin{tabular}{@{}lrrrrr@{}}
\toprule
 & \multicolumn{3}{c}{OptABA-PC contestations} & \multicolumn{2}{c}{ASPCR-DAG sensitivity probes} \\
\cmidrule(lr){2-4}\cmidrule(l){5-6}
Dataset & Proved & Median $\mu/w_f$ $[Q_1,Q_3]$ & Median $\mu/c^*$ & Proved & Median $\mu_{\rm ASP}/w_f$ $[Q_1,Q_3]$ \\
\midrule
Cancer (5) & 207/207 & 0.297 [0.158,0.609] & 0.072 & 959/959 & 4.041 [1.615,7.946] \\
Earthquake (5) & 4/4 & 4.170 [1.628,6.332] & 3.544 & 80/80 & 3.625 [0.990,6.349] \\
Survey (6) & 507/507 & 0.517 [0.200,1.026] & 0.051 & 2094/2094 & 2.376 [0.537,14.074] \\
ER (5) & 0/0 & -- & -- & 503/503 & 2.163 [0.773,5.555] \\
SF (5) & 24/24 & 0.734 [0.308,1.344] & 0.182 & 694/694 & 2.437 [0.958,6.108] \\
\midrule
\textbf{Total} & 742/742 & 0.455 [0.182,0.912] & 0.057 & 4330/4330 & 2.828 [0.890,8.647] \\
\bottomrule
\end{tabular}
}
\end{table}

Table~\ref{tab:margin-by-dataset} records every dataset-level and pooled threshold. All 742 OptABA-PC contestations prove both category optima, with no timeout, infeasible hard-retention instance, or zero margin; ER(5) has no rejected fact and therefore no imputed value. This instantiates Proposition~\ref{prop:optaba-contestability} under its unrestricted-DAG assumptions.

Strict positivity means that no reported release merely selects one member of an already tied retaining/releasing optimum. It also resolves the baseline-witness issue at the fact-selection level: if another baseline optimum retained a contested fact, that fact would have zero margin, while positive weights rule out an optimal strict superset of the reconstructed release. Hence every audited instance has a unique optimal released-fact projection, although it may still have multiple stable-model or graph witnesses.

The margins distinguish releases sensitive to small increases in their cost from those robust to larger increases, holding other costs fixed. For $11.6\%$ of releases, an increase of at most $10\%$ makes retention optimal; $22.5\%$ require more than doubling their release cost (Table~\ref{tab:contestability-margin-comparison}). These thresholds mark the first tie with a retaining optimum; a strictly larger increase forces every optimum to retain the fact (Proposition~\ref{prop:margin-threshold}). The median margin is $45.5\%$ of the contested fact's release cost; normalising instead by the complete baseline repair cost gives a median of $5.7\%$ (Table~\ref{tab:margin-by-dataset}). Cancer, Survey, and SF(5) have appreciably different medians, while Earthquake's large estimate rests on only four targets.
\FloatBarrier

\paragraph{Majority-PC: use of tests versus output enforcement}
\label{supp:baseline-enforcement}

A procedure can use a CI test without constraining its final graph to satisfy that test.  This distinction is central to interpreting MPC.  Within KPC a CI outcome is consumed during skeleton discovery and collider orientation, but is not posted as a global $d$-separation constraint on the returned CPDAG.  Forcing a local test routine would therefore only force that procedural branch; other paths, separating sets, or downstream orientation steps could still yield a Markov class contradicting the same relation.  MPC also exposes neither a retained/released fact set nor a global release-cost objective, so $c_{\mathrm{rel}}(f)$ and $c_{\mathrm{keep}}(f)$ are not defined for it.

The final columns of Table~\ref{tab:primary-repair-enforcement} report the MPC audit on the same 400 primary traces; the exact 250-trace comparison, including its denser ER(5)/SF(5) setting, appears in Table~\ref{tab:contestability-margin-comparison}. Since every DAG in one CPDAG has the same conditional-independence model, one consistent extension suffices for the semantic check; endpoint-invalid outputs remain explicit and count as incompatible.

MPC's raw full-compatibility count is dominated by input inconsistency: only 88/400 primary traces are feasible. Conditional on feasibility, it succeeds on 80/88 and leaves eight avoidable gaps, five on ER(5) and three on SF(5). The separate exact audit gives the same conditional pattern: 126/139 compatible feasible traces and 13 gaps, two on ER(5) and 11 on SF(5). The corresponding success rates, $90.9\%$ and $90.6\%$, show that the change in raw counts reflects trace feasibility and graph density rather than a reversal in method behaviour. MPC can still be a strong graph estimator (Table~\ref{tab:final-graph-metrics}), but it neither identifies a repair on inconsistent inputs nor distinguishes unavoidable contradictions from avoidable output gaps.

\paragraph{ASPCR-DAG: forced-fact sensitivity under a different objective}
\label{supp:aspcr-contestability}

ASPCR-DAG permits a forced-fact sensitivity audit, but not a test of the OptABA-PC contestability guarantee.  Let $\mathcal F_{\mathrm{ASP}}$ be its exhaustive native Bayesian CI trace and $w_i$ the integer log weight of fact $f_i$.  Its DAG-restricted encoding derives $\mathsf{fail}(f_i)$ exactly when a candidate graph contradicts $f_i$ and minimises
\[
  C_{\mathrm{ASP}}(G)
  =\sum_i w_i\,\mathbf 1[G\not\models f_i].
\]
The term $\mathbf 1[\cdot]$ denotes the indicator function, equal to one when its argument is true and zero otherwise.
For every fact $f$ failed by the saved optimum, we perform what we call a \emph{hardening}: as a sensitivity probe, we add the integrity constraint $\leftarrow\mathsf{fail}(f)$ and re-solve.  This turns that one test from a soft, penalised correspondence into a mandatory requirement that the new DAG satisfy it, while every other test remains soft.  Hence
\[
  \mu_{\mathrm{ASP}}(f)
  =\min_{G\models f}C_{\mathrm{ASP}}(G)-\min_G C_{\mathrm{ASP}}(G)
\]
is an exact sensitivity margin for the modified ASPCR-DAG objective: the new DAG is forced to correspond to the targeted test, while its correspondence to every other test remains soft.  The hard constraint is imposed by our audit rather than supplied as a method-level redress operation, and ASPCR-DAG has no counterpart of Proposition~\ref{prop:margin-threshold}.  It returns one optimal DAG rather than the complete structural set supported by an explicit correction set.

Tables~\ref{tab:contestability-margin-comparison} and~\ref{tab:margin-by-dataset} report this diagnostic alongside, but separately from, the OptABA-PC audit. The regenerated ASP programs reproduce the saved proved objective on all 250 ASPCR-DAG baselines. Only 23 native traces have zero graph--test disagreement; the other baselines supply 4,330 failed facts for hardening. Every hardening reaches a proved optimum, with no timeout, infeasible target, enforcement failure, or zero gap. Thus every graph at the original optimum cost contradicts the targeted test; enforcing correspondence to it changes the optimum but leaves all remaining test--graph correspondences soft. This is sensitivity evidence about ASPCR-DAG's objective, not actual redress since no consistent repair is offered.

Within that objective, the pooled median normalised gap is $2.828$, with the dataset variation shown in Table~\ref{tab:margin-by-dataset}. If $g=\mu_{\mathrm{ASP}}(f)/w_f$, hardening removes the target's penalty $w_f$ yet raises the optimum by $g w_f$; the net penalty transferred to other disagreements is therefore $(1+g)w_f$. At the pooled median this is $3.828$ times the target's weight. This is useful sensitivity information, but not evidence that ASPCR-DAG is ``more contestable'': its weights, trace, objective, and operation differ from OptABA-PC's.

The absence of zero gaps agrees only at the level of solver sensitivity: neither saved optimum fails a targeted fact merely because it selected one member of a retaining/failing tie.  It does not transfer OptABA-PC's correction-set semantics, total fact-enforcement result, or exact response guarantee to ASPCR-DAG.  Nor should the normalised values be ranked: ASPCR-DAG uses all $\binom{n}{2}2^{n-2}$ native Bayesian tests and log weights, whereas OptABA-PC uses the matched adaptive G$^2$ trace and corresponding release weights.

\subsection{Additional Empirical Results and Discussion}
\label{app:additional-experiment-details}

Tables~\ref{tab:absolute-results}--\ref{tab:final-runtime-completion} give the complete evidence behind the compact primary comparison for the fixed bnlearn networks and the ER/SF graphs under the setup in the main paper. Tables~\ref{tab:absolute-results} and~\ref{tab:absolute-cpdag-results} report absolute ABA-PC--OptABA-PC fact/time and CPDAG results for all eight datasets, including Earthquake; Table~\ref{tab:final-fact-metrics} gives the fact counts and Table~\ref{tab:final-fact-scores} their precision, recall, and F1; Table~\ref{tab:final-graph-metrics} compares graph-level results; Tables~\ref{tab:final-cpdag-ranges}--\ref{tab:final-dag-f1-ranges} give SHD and F1 reconstruction ranges; and Table~\ref{tab:final-runtime-completion} reports timing, output-set sizes, and CI compatibility. Values are absolute mean $\pm$ standard deviation. Stars compare OptABA-PC with ABA-PC; daggers compare an all other baselines with OptABA-PC. No missing value is imputed. ASPCR-DAG is reported on Cancer, Earthquake, Survey, ER(5), and SF(5), with 50 matched runs on each.

\begin{table}[t]
\caption{Absolute retained-fact quality and runtime for the primary ABA-PC--OptABA-PC comparison (mean $\pm$ standard deviation over 50 matched seeds, $N=5000$, $\alpha=0.01$). Both methods repair the same recorded CI tests. All eight datasets are included, with node counts in brackets. Correct retained counts facts agreeing with $d$-separation in the generating DAG; Fact F1 scores retained-fact classification. Time includes program construction, solving, and compatible-set evaluation. Arrows indicate preferred directions. Only a significant paired ABA-PC--OptABA-PC winner is bold and starred: one, two, and three stars denote BH-adjusted $p<0.05$, $0.01$, and $0.001$ from two-sided Wilcoxon tests. CPDAG evaluation is in Table~\ref{tab:absolute-cpdag-results}.}
\label{tab:absolute-results}
\centering
\small
\setlength{\tabcolsep}{3pt}
\begin{tabular*}{\linewidth}{@{\extracolsep{\fill}}llccc@{}}
\toprule
Dataset & Method & Correct retained $\uparrow$ & Fact F1 $\uparrow$ & Time (s) $\downarrow$\\
\midrule
Cancer (5) & ABA-PC & $24.0\pm8.7$ & $0.904\pm0.046$ & $1.51\pm0.48$\\
 & OptABA-PC & $\mathbf{25.6\pm9.8}^{***}$ & $0.900\pm0.064$ & $\mathbf{1.28\pm0.38}^{***}$\\
\addlinespace[1pt]
Earthquake (5) & ABA-PC & $54.1\pm0.8$ & $0.999\pm0.007$ & $1.83\pm0.83$\\
 & OptABA-PC & $54.2\pm0.8$ & $0.999\pm0.003$ & $\mathbf{1.68\pm0.81}^{***}$\\
\addlinespace[1pt]
Survey (6) & ABA-PC & $24.8\pm9.6$ & $\mathbf{0.789\pm0.075}^{**}$ & $15.7\pm30.2$\\
 & OptABA-PC & $\mathbf{29.7\pm12.3}^{***}$ & $0.742\pm0.107$ & $2.94\pm1.75$\\
\addlinespace[1pt]
Asia (8) & ABA-PC & $105.8\pm8.8$ & $0.891\pm0.018$ & $95.4\pm48.8$\\
 & OptABA-PC & $\mathbf{127.6\pm9.7}^{***}$ & $\mathbf{0.973\pm0.015}^{***}$ & $\mathbf{66.5\pm84.9}^{*}$\\
\addlinespace[1pt]
ER (5) & ABA-PC & $30.9\pm7.3$ & $0.953\pm0.049$ & $1.01\pm0.15$\\
 & OptABA-PC & $\mathbf{32.2\pm7.3}^{***}$ & $\mathbf{0.969\pm0.039}^{**}$ & $\mathbf{0.90\pm0.08}^{***}$\\
\addlinespace[1pt]
ER (8) & ABA-PC & $71.4\pm23.5$ & $0.871\pm0.069$ & $\mathbf{48.0\pm54.3}^{***}$\\
 & OptABA-PC & $\mathbf{86.8\pm25.7}^{***}$ & $\mathbf{0.953\pm0.043}^{***}$ & $139.1\pm139.4$\\
\addlinespace[1pt]
SF (5) & ABA-PC & $29.0\pm7.2$ & $0.953\pm0.056$ & $1.85\pm2.03$\\
 & OptABA-PC & $\mathbf{30.7\pm6.0}^{***}$ & $\mathbf{0.987\pm0.026}^{***}$ & $\mathbf{1.76\pm2.05}^{***}$\\
\addlinespace[1pt]
SF (8) & ABA-PC & $72.1\pm37.1$ & $0.838\pm0.065$ & $\mathbf{42.3\pm53.4}^{**}$\\
 & OptABA-PC & $\mathbf{93.2\pm45.4}^{***}$ & $\mathbf{0.931\pm0.051}^{***}$ & $99.1\pm122.0$\\
\bottomrule
\end{tabular*}
\end{table}

\begin{table}[t]
\caption{Absolute CPDAG evaluation for the primary ABA-PC--OptABA-PC comparison on the same 50 matched seeds per dataset as Table~\ref{tab:absolute-results}. Counts, F1, and SHD are mean $\pm$ standard deviation. Node counts are in brackets. \#CPDAGs counts saved equivalence classes satisfying every retained post-repair fact; F1 and SHD average reconstruction quality over these budget-limited sets. Arrows indicate preferred directions. Only a significant paired ABA-PC--OptABA-PC winner is bold and starred: one, two, and three stars denote BH-adjusted $p<0.05$, $0.01$, and $0.001$ from two-sided Wilcoxon tests.}
\label{tab:absolute-cpdag-results}
\centering
\small
\setlength{\tabcolsep}{3pt}
\begin{tabular*}{\linewidth}{@{\extracolsep{\fill}}llccc@{}}
\toprule
Dataset & Method & \#CPDAGs $\downarrow$ & CPDAG F1 $\uparrow$ & CPDAG SHD $\downarrow$\\
\midrule
Cancer (5) & ABA-PC & $15.1\pm28.8$ & $0.522\pm0.214$ & $2.81\pm1.18$\\
 & OptABA-PC & $\mathbf{1.0\pm0.0}^{***}$ & $\mathbf{0.576\pm0.259}^{*}$ & $\mathbf{1.84\pm1.11}^{***}$\\
\addlinespace[1pt]
Earthquake (5) & ABA-PC & $1.0\pm0.3$ & $0.991\pm0.033$ & $0.09\pm0.30$\\
 & OptABA-PC & $1.0\pm0.0$ & $0.991\pm0.030$ & $0.08\pm0.27$\\
\addlinespace[1pt]
Survey (6) & ABA-PC & $1278.5\pm2807.2$ & $0.438\pm0.088$ & $7.14\pm1.39$\\
 & OptABA-PC & $\mathbf{1.2\pm0.7}^{***}$ & $\mathbf{0.597\pm0.127}^{***}$ & $\mathbf{4.35\pm1.09}^{***}$\\
\addlinespace[1pt]
Asia (8) & ABA-PC & $10822.7\pm7112.3$ & $0.224\pm0.073$ & $14.4\pm3.26$\\
 & OptABA-PC & $\mathbf{1.2\pm0.4}^{***}$ & $\mathbf{0.536\pm0.061}^{***}$ & $\mathbf{3.86\pm0.60}^{***}$\\
\addlinespace[1pt]
ER (5) & ABA-PC & $8.2\pm13.0$ & $0.562\pm0.224$ & $2.88\pm2.01$\\
 & OptABA-PC & $\mathbf{3.0\pm3.4}^{***}$ & $\mathbf{0.589\pm0.213}^{*}$ & $\mathbf{2.09\pm1.71}^{**}$\\
\addlinespace[1pt]
ER (8) & ABA-PC & $3840.0\pm5377.9$ & $0.478\pm0.179$ & $8.19\pm4.21$\\
 & OptABA-PC & $\mathbf{4.7\pm9.5}^{***}$ & $\mathbf{0.627\pm0.167}^{***}$ & $\mathbf{3.53\pm1.89}^{***}$\\
\addlinespace[1pt]
SF (5) & ABA-PC & $88.4\pm202.7$ & $0.670\pm0.176$ & $2.77\pm2.10$\\
 & OptABA-PC & $\mathbf{85.1\pm203.9}^{**}$ & $\mathbf{0.737\pm0.148}^{**}$ & $\mathbf{2.03\pm1.97}^{**}$\\
\addlinespace[1pt]
SF (8) & ABA-PC & $3108.1\pm4287.4$ & $0.498\pm0.154$ & $8.15\pm3.77$\\
 & OptABA-PC & $\mathbf{170.3\pm1023.2}^{***}$ & $\mathbf{0.629\pm0.129}^{***}$ & $\mathbf{4.42\pm2.05}^{***}$\\
\bottomrule
\end{tabular*}
\end{table}

\paragraph{Graph-level findings.} Table~\ref{tab:final-graph-metrics} confirms that optimal repair improves mean CPDAG reconstruction over heuristic repair on seven datasets, while Earthquake is effectively tied. Across the five matched ASPCR-DAG datasets, OptABA-PC has significantly higher mean CPDAG F1 throughout and lower SHD on four; Survey SHD is indistinguishable. Its returned-DAG F1 is likewise higher on all five and SHD on four; SF(5) DAG SHD is indistinguishable. Native discrete FGS is a strong graph estimator: it significantly outperforms OptABA-PC on both mean CPDAG metrics for Cancer, ER(5), ER(8), and SF(5), on F1 for Survey, and on SHD for SF(8). OptABA-PC significantly outperforms FGS on both metrics for Earthquake and on SHD for Asia; the remaining mean CPDAG differences are not significant. This mixed comparison reinforces the distinction between reconstruction and repair: FGS selects a graph by score but provides no retained CI set, compatibility audit, or contestation object. The ASPCR-DAG comparisons use the same saved samples and generating DAG, but their different fact universes, weights, and optimisation costs are not numerically comparable. MPC is stronger on both mean CPDAG metrics for Survey and the four synthetic datasets; Cancer favours MPC only on CPDAG F1, Asia favours OptABA-PC on CPDAG F1, and Earthquake is tied. These are graph-estimation comparisons, not fact-repair comparisons.

The synthetic MPC result is consistent with Majority-PC's specialisation. It returns one targeted CPDAG when structurally valid, whose reversible orientations remain uncommitted, whereas OptABA-PC is evaluated over the saved, budget-limited structural set admitted by its selected coherent fact repair. OptABA-PC admits on average $3.0$, $4.7$, $85.1$, and $170.3$ compatible CPDAGs on ER(5), ER(8), SF(5), and SF(8), respectively (Table~\ref{tab:final-graph-metrics}). MPC beats the OptABA-PC average on all four, but does not significantly beat its best compatible CPDAG on ER(5), ER(8), or SF(5); SF(8) remains a significant MPC advantage (Tables~\ref{tab:final-cpdag-ranges} and~\ref{tab:final-cpdag-f1-ranges}). This narrowing should not be mistaken for a best-DAG win by OptABA-PC. After expanding each valid class, MPC has significantly better best-DAG SHD and F1 on ER(5), SF(5), and SF(8), and better best-DAG SHD on ER(8), where best-DAG F1 is indistinguishable (Tables~\ref{tab:final-dag-ranges} and~\ref{tab:final-dag-f1-ranges}). Thus comparing MPC's single partially oriented class with OptABA-PC's saved output set explains part, but not all, of MPC's average margin. In this broader sparse primary setting, full-trace compatibility holds for only $22\%$, $6\%$, $34\%$, and $4\%$ of the corresponding MPC outputs (Table~\ref{tab:final-runtime-completion}); Table~\ref{tab:contestability-margin-comparison} gives the separate exact 250-trace comparison. Its structural advantage therefore does not establish a fact-repair advantage.

The range tables further show that OptABA-PC significantly improves worst CPDAG and worst DAG SHD/F1 over ABA-PC on seven of eight datasets.  Best-of-set values can favour ABA-PC because its much larger output set has more opportunities to contain one unusually accurate member; average and worst values expose the cost of that ambiguity. Compatibility is stricter than structural validity: across the broader eight-dataset primary setting, only 80 of 400 MPC outputs have a CPDAG satisfying every recorded G$^2$ outcome, while Cancer, Survey, and Asia have none (Table~\ref{tab:final-runtime-completion}). For saved ASPCR-DAG outputs the corresponding native-trace count is 32 of 250: 22 on Earthquake, six on ER(5), and four on SF(5). FGS has no CI trace and is therefore marked \texttt{--} rather than zero.

\paragraph{Fact quality and computation.} Table~\ref{tab:final-fact-metrics} reports retained correct/false facts; Table~\ref{tab:final-fact-scores} reports their precision, recall, and FactF1. Table~\ref{tab:final-runtime-completion} reports timing and structural and CI-compatible output counts. Paired ABA-PC--OptABA-PC timings include program construction, solving, and compatible-set evaluation on a like-for-like basis. Cross-method scopes differ: MPC/FGS time the learner call, while ASPCR-DAG reports its native end-to-end R run. They describe implementation cost, not a controlled speed ranking.

\begin{table}[t]
\caption{Graph reconstruction and CI-compatible equivalence classes on the matched primary experiments (mean $\pm$ standard deviation over seeds). Numbers in brackets after dataset names give the number of nodes. ABA-PC and OptABA-PC CPDAG values average over every equivalence class admitted by the selected repair; MPC and FGS return one CPDAG when their output is structurally valid, and ASPCR-DAG returns one DAG whose equivalence class supplies its CPDAG score. DAG columns average every compatible DAG for ABA-PC/OptABA-PC and every consistent extension of a valid MPC/FGS CPDAG; ASPCR-DAG is scored on its returned DAG. The final column counts CPDAGs satisfying the enforced evidence: retained post-repair facts for ABA-PC/OptABA-PC and every recorded test for MPC/ASPCR-DAG; it is undefined for score-based FGS (`--'). Stars identify the significant ABA-PC--OptABA-PC winner. Daggers on an external method indicate a paired difference from OptABA-PC; bold daggers favour that external method, whereas unbold daggers favour OptABA-PC. One, two, and three symbols denote BH-adjusted $p<0.05$, $0.01$, and $0.001$. Subscripts occur only for MPC DAG-extension metrics when its endpoint output is invalid: $n=38$ on Survey and $n=49$ on ER(8).}
\label{tab:final-graph-metrics}
\centering
\scriptsize
\setlength{\tabcolsep}{3pt}
\resizebox{\linewidth}{!}{%
\begin{tabular}{llccccc}
\toprule
Dataset & Method & CPDAG SHD $\downarrow$ & CPDAG F1 $\uparrow$ & DAG SHD $\downarrow$ & DAG F1 $\uparrow$ & Compat. CPDAGs $\downarrow$\\
\midrule
Cancer (5) & ABA-PC & $2.81\pm1.18$ & $0.522\pm0.214$ & $2.98\pm1.20$ & $0.483\pm0.223$ & $15.1\pm28.8$\\
 & OptABA-PC & $\mathbf{1.84\pm1.11}^{***}$ & $\mathbf{0.576\pm0.259}^{*}$ & $\mathbf{1.89\pm1.14}^{***}$ & $\mathbf{0.563\pm0.262}^{*}$ & $\mathbf{1.0\pm0.0}^{***}$\\
 & MPC & $1.58\pm0.97$ & $\mathbf{0.661\pm0.202}^{\dagger}$ & $1.70\pm1.01$ & $0.623\pm0.222$ & $0.0\pm0.0$\\
 & FGS & $\mathbf{1.12\pm0.96}^{\dagger\dagger}$ & $\mathbf{0.742\pm0.212}^{\dagger\dagger}$ & $\mathbf{1.11\pm0.94}^{\dagger\dagger\dagger}$ & $\mathbf{0.739\pm0.215}^{\dagger\dagger}$ & --\\
 & ASPCR-DAG & $2.24\pm0.56^{\dagger}$ & $0.490\pm0.123^{\dagger}$ & $2.34\pm0.89^{\dagger}$ & $0.462\pm0.225^{\dagger}$ & $0.0\pm0.0$\\
\midrule
Earthquake (5) & ABA-PC & $0.09\pm0.30$ & $0.991\pm0.033$ & $0.10\pm0.34$ & $0.989\pm0.039$ & $1.0\pm0.3$\\
 & OptABA-PC & $0.08\pm0.27$ & $0.991\pm0.030$ & $0.08\pm0.27$ & $0.991\pm0.030$ & $1.0\pm0.0$\\
 & MPC & $0.20\pm0.73$ & $0.991\pm0.030$ & $0.08\pm0.27$ & $0.991\pm0.030$ & $0.9\pm0.2$\\
 & FGS & $4.64\pm1.34^{\dagger\dagger\dagger}$ & $0.422\pm0.143^{\dagger\dagger\dagger}$ & $4.41\pm1.15^{\dagger\dagger\dagger}$ & $0.417\pm0.149^{\dagger\dagger\dagger}$ & --\\
 & ASPCR-DAG & $1.08\pm1.16^{\dagger\dagger\dagger}$ & $0.866\pm0.168^{\dagger\dagger\dagger}$ & $1.06\pm1.20^{\dagger\dagger\dagger}$ & $0.871\pm0.176^{\dagger\dagger\dagger}$ & $0.4\pm0.5$\\
\midrule
Survey (6) & ABA-PC & $7.14\pm1.39$ & $0.438\pm0.088$ & $6.73\pm1.39$ & $0.340\pm0.103$ & $1278.5\pm2807.2$\\
 & OptABA-PC & $\mathbf{4.35\pm1.09}^{***}$ & $\mathbf{0.597\pm0.127}^{***}$ & $\mathbf{3.77\pm0.91}^{***}$ & $\mathbf{0.482\pm0.140}^{***}$ & $\mathbf{1.2\pm0.7}^{***}$\\
 & MPC & $\mathbf{3.68\pm1.49}^{\dagger\dagger}$ & $\mathbf{0.815\pm0.088}^{\dagger\dagger\dagger}$ & $\mathbf{3.17\pm1.49}^{\dagger}_{38}$ & $0.528\pm0.242_{38}$ & $0.0\pm0.0$\\
 & FGS & $3.88\pm1.55$ & $\mathbf{0.708\pm0.187}^{\dagger\dagger\dagger}$ & $3.47\pm1.28$ & $0.465\pm0.218$ & --\\
 & ASPCR-DAG & $4.52\pm1.23$ & $0.496\pm0.154^{\dagger\dagger}$ & $4.16\pm1.23^{\dagger}$ & $0.385\pm0.220^{\dagger\dagger}$ & $0.0\pm0.0$\\
\midrule
Asia (8) & ABA-PC & $14.4\pm3.26$ & $0.224\pm0.073$ & $14.0\pm2.81$ & $0.310\pm0.095$ & $10822.7\pm7112.3$\\
 & OptABA-PC & $\mathbf{3.86\pm0.60}^{***}$ & $\mathbf{0.536\pm0.061}^{***}$ & $\mathbf{4.51\pm0.58}^{***}$ & $\mathbf{0.665\pm0.009}^{***}$ & $\mathbf{1.2\pm0.4}^{***}$\\
 & MPC & $3.82\pm0.44$ & $0.483\pm0.045^{\dagger\dagger\dagger}$ & $\mathbf{3.92\pm0.25}^{\dagger\dagger\dagger}$ & $0.620\pm0.013^{\dagger\dagger\dagger}$ & $0.0\pm0.0$\\
 & FGS & $6.66\pm2.22^{\dagger\dagger\dagger}$ & $0.561\pm0.102$ & $6.01\pm1.70^{\dagger\dagger\dagger}$ & $0.610\pm0.105^{\dagger\dagger}$ & --\\
\midrule
ER (5) & ABA-PC & $2.88\pm2.01$ & $0.562\pm0.224$ & $3.17\pm1.73$ & $0.567\pm0.215$ & $8.2\pm13.0$\\
 & OptABA-PC & $\mathbf{2.09\pm1.71}^{**}$ & $\mathbf{0.589\pm0.213}^{*}$ & $\mathbf{2.49\pm1.35}^{***}$ & $\mathbf{0.620\pm0.188}^{**}$ & $\mathbf{3.0\pm3.4}^{***}$\\
 & MPC & $\mathbf{0.96\pm1.32}^{\dagger\dagger}$ & $\mathbf{0.712\pm0.251}^{\dagger\dagger\dagger}$ & $\mathbf{1.23\pm0.94}^{\dagger\dagger\dagger}$ & $\mathbf{0.768\pm0.177}^{\dagger\dagger\dagger}$ & $0.2\pm0.4$\\
 & FGS & $\mathbf{1.14\pm1.44}^{\dagger\dagger}$ & $\mathbf{0.677\pm0.239}^{\dagger\dagger}$ & $\mathbf{1.58\pm1.11}^{\dagger\dagger\dagger}$ & $\mathbf{0.710\pm0.196}^{\dagger\dagger}$ & --\\
 & ASPCR-DAG & $2.90\pm1.25^{\dagger\dagger}$ & $0.377\pm0.292^{\dagger\dagger\dagger}$ & $3.26\pm1.19^{\dagger\dagger}$ & $0.426\pm0.251^{\dagger\dagger\dagger}$ & $0.1\pm0.3$\\
\midrule
ER (8) & ABA-PC & $8.19\pm4.21$ & $0.478\pm0.179$ & $7.99\pm3.77$ & $0.496\pm0.181$ & $3840.0\pm5377.9$\\
 & OptABA-PC & $\mathbf{3.53\pm1.89}^{***}$ & $\mathbf{0.627\pm0.167}^{***}$ & $\mathbf{3.85\pm1.58}^{***}$ & $\mathbf{0.650\pm0.137}^{***}$ & $\mathbf{4.7\pm9.5}^{***}$\\
 & MPC & $\mathbf{2.12\pm1.79}^{\dagger\dagger\dagger}$ & $\mathbf{0.704\pm0.137}^{\dagger\dagger}$ & $\mathbf{2.38\pm1.21}^{\dagger\dagger\dagger}_{49}$ & $\mathbf{0.727\pm0.140}^{\dagger\dagger}_{49}$ & $0.1\pm0.2$\\
 & FGS & $\mathbf{1.56\pm1.99}^{\dagger\dagger\dagger}$ & $\mathbf{0.744\pm0.157}^{\dagger\dagger\dagger}$ & $\mathbf{2.07\pm1.41}^{\dagger\dagger\dagger}$ & $\mathbf{0.774\pm0.139}^{\dagger\dagger\dagger}$ & --\\
\midrule
SF (5) & ABA-PC & $2.77\pm2.10$ & $0.670\pm0.176$ & $3.07\pm1.94$ & $0.594\pm0.213$ & $88.4\pm202.7$\\
 & OptABA-PC & $\mathbf{2.03\pm1.97}^{**}$ & $\mathbf{0.737\pm0.148}^{**}$ & $\mathbf{2.52\pm1.89}^{**}$ & $\mathbf{0.650\pm0.197}^{**}$ & $\mathbf{85.1\pm203.9}^{**}$\\
 & MPC & $\mathbf{0.34\pm0.80}^{\dagger\dagger\dagger}$ & $\mathbf{0.874\pm0.173}^{\dagger\dagger\dagger}$ & $\mathbf{0.48\pm0.58}^{\dagger\dagger\dagger}$ & $\mathbf{0.893\pm0.120}^{\dagger\dagger\dagger}$ & $0.3\pm0.5$\\
 & FGS & $\mathbf{0.56\pm0.93}^{\dagger\dagger\dagger}$ & $\mathbf{0.854\pm0.173}^{\dagger\dagger\dagger}$ & $\mathbf{0.69\pm0.85}^{\dagger\dagger\dagger}$ & $\mathbf{0.861\pm0.143}^{\dagger\dagger\dagger}$ & --\\
 & ASPCR-DAG & $2.76\pm1.51^{\dagger}$ & $0.448\pm0.303^{\dagger\dagger\dagger}$ & $2.60\pm1.26$ & $0.448\pm0.283^{\dagger\dagger\dagger}$ & $0.1\pm0.3$\\
\midrule
SF (8) & ABA-PC & $8.15\pm3.77$ & $0.498\pm0.154$ & $8.14\pm3.46$ & $0.481\pm0.166$ & $3108.1\pm4287.4$\\
 & OptABA-PC & $\mathbf{4.42\pm2.05}^{***}$ & $\mathbf{0.629\pm0.129}^{***}$ & $\mathbf{4.76\pm1.97}^{***}$ & $\mathbf{0.601\pm0.146}^{***}$ & $\mathbf{170.3\pm1023.2}^{***}$\\
 & MPC & $\mathbf{1.58\pm1.49}^{\dagger\dagger\dagger}$ & $\mathbf{0.774\pm0.163}^{\dagger\dagger\dagger}$ & $\mathbf{1.88\pm1.23}^{\dagger\dagger\dagger}$ & $\mathbf{0.773\pm0.143}^{\dagger\dagger\dagger}$ & $0.0\pm0.2$\\
 & FGS & $\mathbf{2.50\pm1.96}^{\dagger\dagger\dagger}$ & $0.683\pm0.246$ & $\mathbf{2.82\pm1.88}^{\dagger\dagger\dagger}$ & $0.664\pm0.241$ & --\\
\bottomrule
\end{tabular}}
\end{table}

\begin{table}[t]
\caption{CI-fact counts on the matched primary experiments (mean $\pm$ standard deviation over 50 seeds per reported row). Node counts are in brackets. Correct available counts recorded tests agreeing with $d$-separation in the generating DAG; correct/false kept split retained facts. ABA-PC and OptABA-PC repair the identical test set. Only a significant paired ABA-PC--OptABA-PC winner is bold and starred: one, two, and three stars denote BH-adjusted $p<0.05$, $0.01$, and $0.001$ from two-sided Wilcoxon tests. For ASPCR-DAG, \emph{kept} means its returned DAG satisfies a native soft test relation: descriptive graph--test agreement, not an explicit retained CI set. MPC and FGS expose no comparable fact accounting. Precision, recall, and F1 are in Table~\ref{tab:final-fact-scores}.}
\label{tab:final-fact-metrics}
\centering
\small
\setlength{\tabcolsep}{3pt}
\begin{tabular*}{\linewidth}{@{\extracolsep{\fill}}llccc@{}}
\toprule
Dataset & Method & Correct available & Correct kept & False kept\\
\midrule
Cancer (5) & ABA-PC & $27.2\pm10.1$ & $24.0\pm8.7$ & $\mathbf{1.6\pm1.5}^{***}$\\
 & OptABA-PC & $27.2\pm10.1$ & $\mathbf{25.6\pm9.8}^{***}$ & $3.3\pm2.2$\\
 & ASPCR-DAG & $53.7\pm3.4$ & $48.0\pm7.2$ & $12.9\pm8.5$\\
\midrule
Earthquake (5) & ABA-PC & $54.2\pm0.9$ & $54.1\pm0.8$ & $0.1\pm0.3$\\
 & OptABA-PC & $54.2\pm0.9$ & $54.2\pm0.8$ & $0.1\pm0.3$\\
 & ASPCR-DAG & $75.6\pm2.9$ & $74.7\pm3.8$ & $3.7\pm3.0$\\
\midrule
Survey (6) & ABA-PC & $34.9\pm13.2$ & $24.8\pm9.6$ & $\mathbf{2.4\pm1.5}^{***}$\\
 & OptABA-PC & $34.9\pm13.2$ & $\mathbf{29.7\pm12.3}^{***}$ & $13.4\pm4.7$\\
 & ASPCR-DAG & $135.0\pm11.2$ & $116.1\pm12.8$ & $82.0\pm18.5$\\
\midrule
Asia (8) & ABA-PC & $129.2\pm9.6$ & $105.8\pm8.8$ & $\mathbf{2.4\pm0.9}^{***}$\\
 & OptABA-PC & $129.2\pm9.6$ & $\mathbf{127.6\pm9.7}^{***}$ & $5.4\pm3.1$\\
\midrule
ER (5) & ABA-PC & $32.7\pm7.3$ & $30.9\pm7.3$ & $\mathbf{1.2\pm1.6}^{**}$\\
 & OptABA-PC & $32.7\pm7.3$ & $\mathbf{32.2\pm7.3}^{***}$ & $1.5\pm1.9$\\
 & ASPCR-DAG & $52.1\pm9.9$ & $47.3\pm10.1$ & $22.3\pm11.8$\\
\midrule
ER (8) & ABA-PC & $89.2\pm26.2$ & $71.4\pm23.5$ & $\mathbf{2.7\pm2.6}^{***}$\\
 & OptABA-PC & $89.2\pm26.2$ & $\mathbf{86.8\pm25.7}^{***}$ & $6.0\pm5.8$\\
\midrule
SF (5) & ABA-PC & $30.9\pm5.8$ & $29.0\pm7.2$ & $0.5\pm0.6$\\
 & OptABA-PC & $30.9\pm5.8$ & $\mathbf{30.7\pm6.0}^{***}$ & $0.5\pm0.9$\\
 & ASPCR-DAG & $49.0\pm9.7$ & $42.9\pm9.4$ & $27.0\pm13.2$\\
\midrule
SF (8) & ABA-PC & $96.6\pm45.5$ & $72.1\pm37.1$ & $\mathbf{2.6\pm2.8}^{***}$\\
 & OptABA-PC & $96.6\pm45.5$ & $\mathbf{93.2\pm45.4}^{***}$ & $9.0\pm7.3$\\
\bottomrule
\end{tabular*}
\end{table}

\begin{table}[t]
\caption{CI-fact precision, recall, and F1 on the matched primary experiments (mean $\pm$ standard deviation over 50 seeds per reported row; higher is better). Node counts are in brackets. Retained correct tests are true positives, retained false tests are false positives, and released correct tests are false negatives; counts are in Table~\ref{tab:final-fact-metrics}. For ASPCR-DAG, retention denotes satisfaction of native soft tests by its returned DAG, not an explicit retained CI set. MPC and FGS expose no comparable fact accounting. Only a significant paired ABA-PC--OptABA-PC winner is bold and starred: one, two, and three stars denote BH-adjusted $p<0.05$, $0.01$, and $0.001$ from two-sided Wilcoxon tests.}
\label{tab:final-fact-scores}
\centering
\small
\setlength{\tabcolsep}{3pt}
\begin{tabular*}{\linewidth}{@{\extracolsep{\fill}}llccc@{}}
\toprule
Dataset & Method & Precision & Recall & Fact F1\\
\midrule
Cancer (5) & ABA-PC & $\mathbf{0.927\pm0.069}^{***}$ & $0.887\pm0.058$ & $0.904\pm0.046$\\
 & OptABA-PC & $0.866\pm0.091$ & $\mathbf{0.941\pm0.045}^{***}$ & $0.900\pm0.064$\\
 & ASPCR-DAG & $0.792\pm0.132$ & $0.892\pm0.114$ & $0.837\pm0.120$\\
\midrule
Earthquake (5) & ABA-PC & $0.999\pm0.005$ & $0.999\pm0.010$ & $0.999\pm0.007$\\
 & OptABA-PC & $0.999\pm0.005$ & $1.000\pm0.002$ & $0.999\pm0.003$\\
 & ASPCR-DAG & $0.952\pm0.039$ & $0.987\pm0.018$ & $0.969\pm0.026$\\
\midrule
Survey (6) & ABA-PC & $\mathbf{0.894\pm0.088}^{***}$ & $0.716\pm0.105$ & $\mathbf{0.789\pm0.075}^{**}$\\
 & OptABA-PC & $0.669\pm0.126$ & $\mathbf{0.842\pm0.082}^{***}$ & $0.742\pm0.107$\\
 & ASPCR-DAG & $0.589\pm0.077$ & $0.862\pm0.082$ & $0.697\pm0.070$\\
\midrule
Asia (8) & ABA-PC & $\mathbf{0.978\pm0.008}^{***}$ & $0.819\pm0.032$ & $0.891\pm0.018$\\
 & OptABA-PC & $0.959\pm0.023$ & $\mathbf{0.988\pm0.009}^{***}$ & $\mathbf{0.973\pm0.015}^{***}$\\
\midrule
ER (5) & ABA-PC & $\mathbf{0.963\pm0.051}^{*}$ & $0.946\pm0.064$ & $0.953\pm0.049$\\
 & OptABA-PC & $0.955\pm0.056$ & $\mathbf{0.984\pm0.027}^{***}$ & $\mathbf{0.969\pm0.039}^{**}$\\
 & ASPCR-DAG & $0.684\pm0.155$ & $0.907\pm0.085$ & $0.770\pm0.116$\\
\midrule
ER (8) & ABA-PC & $\mathbf{0.962\pm0.031}^{***}$ & $0.800\pm0.104$ & $0.871\pm0.069$\\
 & OptABA-PC & $0.934\pm0.055$ & $\mathbf{0.973\pm0.033}^{***}$ & $\mathbf{0.953\pm0.043}^{***}$\\
\midrule
SF (5) & ABA-PC & $0.982\pm0.025$ & $0.929\pm0.090$ & $0.953\pm0.056$\\
 & OptABA-PC & $0.982\pm0.032$ & $\mathbf{0.992\pm0.022}^{***}$ & $\mathbf{0.987\pm0.026}^{***}$\\
 & ASPCR-DAG & $0.623\pm0.169$ & $0.880\pm0.108$ & $0.717\pm0.128$\\
\midrule
SF (8) & ABA-PC & $\mathbf{0.957\pm0.045}^{***}$ & $0.752\pm0.106$ & $0.838\pm0.065$\\
 & OptABA-PC & $0.903\pm0.066$ & $\mathbf{0.962\pm0.043}^{***}$ & $\mathbf{0.931\pm0.051}^{***}$\\
\bottomrule
\end{tabular*}
\end{table}

\begin{table}[t]
\caption{Average, best, and worst CPDAG SHD within each run of the matched primary experiments (mean $\pm$ standard deviation over seeds; lower is better). ABA-PC and OptABA-PC are scored over the saved classes satisfying their selected facts. MPC and FGS return one class when structurally valid; ASPCR-DAG's DAG supplies one class, so their average, best, and worst values coincide within a run. Stars mark the significant ABA-PC--OptABA-PC winner. Daggers compare an external method with OptABA-PC; bold daggers favour that method, unbold daggers favour OptABA-PC. One, two, and three symbols denote BH-adjusted $p<0.05$, $0.01$, and $0.001$. Compatibility counts are in Table~\ref{tab:final-runtime-completion}.}
\label{tab:final-cpdag-ranges}
\centering
\small
\setlength{\tabcolsep}{3pt}
\begin{tabular*}{\linewidth}{@{\extracolsep{\fill}}llccc@{}}
\toprule
Dataset & Method & SHD avg & SHD best & SHD worst\\
\midrule
Cancer (5) & ABA-PC & $2.81\pm1.18$ & $1.58\pm1.18$ & $3.84\pm1.80$\\
 & OptABA-PC & $\mathbf{1.84\pm1.11}^{***}$ & $1.84\pm1.11$ & $\mathbf{1.84\pm1.11}^{***}$\\
 & MPC & $1.58\pm0.97$ & $1.58\pm0.97$ & $1.58\pm0.97$\\
 & FGS & $\mathbf{1.12\pm0.96}^{\dagger\dagger}$ & $\mathbf{1.12\pm0.96}^{\dagger\dagger}$ & $\mathbf{1.12\pm0.96}^{\dagger\dagger}$\\
 & ASPCR-DAG & $2.24\pm0.56^{\dagger}$ & $2.24\pm0.56^{\dagger}$ & $2.24\pm0.56^{\dagger}$\\
\midrule
Earthquake (5) & ABA-PC & $0.09\pm0.30$ & $0.08\pm0.27$ & $0.10\pm0.36$\\
 & OptABA-PC & $0.08\pm0.27$ & $0.08\pm0.27$ & $0.08\pm0.27$\\
 & MPC & $0.20\pm0.73$ & $0.20\pm0.73$ & $0.20\pm0.73$\\
 & FGS & $4.64\pm1.34^{\dagger\dagger\dagger}$ & $4.64\pm1.34^{\dagger\dagger\dagger}$ & $4.64\pm1.34^{\dagger\dagger\dagger}$\\
 & ASPCR-DAG & $1.08\pm1.16^{\dagger\dagger\dagger}$ & $1.08\pm1.16^{\dagger\dagger\dagger}$ & $1.08\pm1.16^{\dagger\dagger\dagger}$\\
\midrule
Survey (6) & ABA-PC & $7.14\pm1.39$ & $\mathbf{3.18\pm1.66}^{**}$ & $9.96\pm2.34$\\
 & OptABA-PC & $\mathbf{4.35\pm1.09}^{***}$ & $4.26\pm1.08$ & $\mathbf{4.44\pm1.16}^{***}$\\
 & MPC & $\mathbf{3.68\pm1.49}^{\dagger\dagger}$ & $\mathbf{3.68\pm1.49}^{\dagger}$ & $\mathbf{3.68\pm1.49}^{\dagger\dagger}$\\
 & FGS & $3.88\pm1.55$ & $3.88\pm1.55$ & $3.88\pm1.55$\\
 & ASPCR-DAG & $4.52\pm1.23$ & $4.52\pm1.23$ & $4.52\pm1.23$\\
\midrule
Asia (8) & ABA-PC & $14.4\pm3.26$ & $8.26\pm2.59$ & $20.0\pm4.20$\\
 & OptABA-PC & $\mathbf{3.86\pm0.60}^{***}$ & $\mathbf{3.76\pm0.69}^{***}$ & $\mathbf{3.96\pm0.57}^{***}$\\
 & MPC & $3.82\pm0.44$ & $3.82\pm0.44$ & $3.82\pm0.44$\\
 & FGS & $6.66\pm2.22^{\dagger\dagger\dagger}$ & $6.66\pm2.22^{\dagger\dagger\dagger}$ & $6.66\pm2.22^{\dagger\dagger\dagger}$\\
\midrule
ER (5) & ABA-PC & $2.88\pm2.01$ & $1.68\pm1.96$ & $3.88\pm2.48$\\
 & OptABA-PC & $\mathbf{2.09\pm1.71}^{**}$ & $1.34\pm1.60$ & $\mathbf{2.72\pm2.14}^{**}$\\
 & MPC & $\mathbf{0.96\pm1.32}^{\dagger\dagger}$ & $0.96\pm1.32$ & $\mathbf{0.96\pm1.32}^{\dagger\dagger\dagger}$\\
 & FGS & $\mathbf{1.14\pm1.44}^{\dagger\dagger}$ & $1.14\pm1.44$ & $\mathbf{1.14\pm1.44}^{\dagger\dagger\dagger}$\\
 & ASPCR-DAG & $2.90\pm1.25^{\dagger\dagger}$ & $2.90\pm1.25^{\dagger\dagger\dagger}$ & $2.90\pm1.25$\\
\midrule
ER (8) & ABA-PC & $8.19\pm4.21$ & $3.72\pm2.88$ & $12.2\pm6.21$\\
 & OptABA-PC & $\mathbf{3.53\pm1.89}^{***}$ & $\mathbf{2.74\pm1.98}^{*}$ & $\mathbf{4.26\pm2.16}^{***}$\\
 & MPC & $\mathbf{2.12\pm1.79}^{\dagger\dagger\dagger}$ & $2.12\pm1.79$ & $\mathbf{2.12\pm1.79}^{\dagger\dagger\dagger}$\\
 & FGS & $\mathbf{1.56\pm1.99}^{\dagger\dagger\dagger}$ & $\mathbf{1.56\pm1.99}^{\dagger\dagger}$ & $\mathbf{1.56\pm1.99}^{\dagger\dagger\dagger}$\\
\midrule
SF (5) & ABA-PC & $2.77\pm2.10$ & $1.04\pm1.80$ & $4.56\pm3.36$\\
 & OptABA-PC & $\mathbf{2.03\pm1.97}^{**}$ & $\mathbf{0.60\pm1.46}^{*}$ & $\mathbf{3.54\pm3.39}^{**}$\\
 & MPC & $\mathbf{0.34\pm0.80}^{\dagger\dagger\dagger}$ & $0.34\pm0.80$ & $\mathbf{0.34\pm0.80}^{\dagger\dagger\dagger}$\\
 & FGS & $\mathbf{0.56\pm0.93}^{\dagger\dagger\dagger}$ & $0.56\pm0.93$ & $\mathbf{0.56\pm0.93}^{\dagger\dagger\dagger}$\\
 & ASPCR-DAG & $2.76\pm1.51^{\dagger}$ & $2.76\pm1.51^{\dagger\dagger\dagger}$ & $2.76\pm1.51$\\
\midrule
SF (8) & ABA-PC & $8.15\pm3.77$ & $3.40\pm2.81$ & $12.8\pm5.18$\\
 & OptABA-PC & $\mathbf{4.42\pm2.05}^{***}$ & $3.20\pm1.81$ & $\mathbf{5.54\pm2.82}^{***}$\\
 & MPC & $\mathbf{1.58\pm1.49}^{\dagger\dagger\dagger}$ & $\mathbf{1.58\pm1.49}^{\dagger\dagger\dagger}$ & $\mathbf{1.58\pm1.49}^{\dagger\dagger\dagger}$\\
 & FGS & $\mathbf{2.50\pm1.96}^{\dagger\dagger\dagger}$ & $2.50\pm1.96$ & $\mathbf{2.50\pm1.96}^{\dagger\dagger\dagger}$\\
\bottomrule
\end{tabular*}
\end{table}

\begin{table}[t]
\caption{Average, best, and worst CPDAG F1 within each run of the matched primary experiments (mean $\pm$ standard deviation over seeds; higher is better). ABA-PC and OptABA-PC are scored over the saved classes satisfying their selected facts. MPC and FGS return one class when structurally valid; ASPCR-DAG's DAG supplies one class, so their average, best, and worst values coincide within a run. Stars mark the significant ABA-PC--OptABA-PC winner. Daggers compare an external method with OptABA-PC; bold daggers favour that method, unbold daggers favour OptABA-PC. One, two, and three symbols denote BH-adjusted $p<0.05$, $0.01$, and $0.001$. Compatibility counts are in Table~\ref{tab:final-runtime-completion}.}
\label{tab:final-cpdag-f1-ranges}
\centering
\small
\setlength{\tabcolsep}{3pt}
\begin{tabular*}{\linewidth}{@{\extracolsep{\fill}}llccc@{}}
\toprule
Dataset & Method & F1 avg & F1 best & F1 worst\\
\midrule
Cancer (5) & ABA-PC & $0.522\pm0.214$ & $0.646\pm0.262$ & $0.440\pm0.222$\\
 & OptABA-PC & $\mathbf{0.576\pm0.259}^{*}$ & $0.576\pm0.259$ & $\mathbf{0.576\pm0.259}^{**}$\\
 & MPC & $\mathbf{0.661\pm0.202}^{\dagger}$ & $\mathbf{0.661\pm0.202}^{\dagger}$ & $\mathbf{0.661\pm0.202}^{\dagger}$\\
 & FGS & $\mathbf{0.742\pm0.212}^{\dagger\dagger}$ & $\mathbf{0.742\pm0.212}^{\dagger\dagger}$ & $\mathbf{0.742\pm0.212}^{\dagger\dagger}$\\
 & ASPCR-DAG & $0.490\pm0.123^{\dagger}$ & $0.490\pm0.123^{\dagger}$ & $0.490\pm0.123^{\dagger}$\\
\midrule
Earthquake (5) & ABA-PC & $0.991\pm0.033$ & $0.991\pm0.030$ & $0.989\pm0.038$\\
 & OptABA-PC & $0.991\pm0.030$ & $0.991\pm0.030$ & $0.991\pm0.030$\\
 & MPC & $0.991\pm0.030$ & $0.991\pm0.030$ & $0.991\pm0.030$\\
 & FGS & $0.422\pm0.143^{\dagger\dagger\dagger}$ & $0.422\pm0.143^{\dagger\dagger\dagger}$ & $0.422\pm0.143^{\dagger\dagger\dagger}$\\
 & ASPCR-DAG & $0.866\pm0.168^{\dagger\dagger\dagger}$ & $0.866\pm0.168^{\dagger\dagger\dagger}$ & $0.866\pm0.168^{\dagger\dagger\dagger}$\\
\midrule
Survey (6) & ABA-PC & $0.438\pm0.088$ & $\mathbf{0.702\pm0.154}^{**}$ & $0.228\pm0.120$\\
 & OptABA-PC & $\mathbf{0.597\pm0.127}^{***}$ & $0.602\pm0.130$ & $\mathbf{0.593\pm0.126}^{***}$\\
 & MPC & $\mathbf{0.815\pm0.088}^{\dagger\dagger\dagger}$ & $\mathbf{0.815\pm0.088}^{\dagger\dagger\dagger}$ & $\mathbf{0.815\pm0.088}^{\dagger\dagger\dagger}$\\
 & FGS & $\mathbf{0.708\pm0.187}^{\dagger\dagger\dagger}$ & $\mathbf{0.708\pm0.187}^{\dagger\dagger}$ & $\mathbf{0.708\pm0.187}^{\dagger\dagger\dagger}$\\
 & ASPCR-DAG & $0.496\pm0.154^{\dagger\dagger}$ & $0.496\pm0.154^{\dagger\dagger}$ & $0.496\pm0.154^{\dagger\dagger}$\\
\midrule
Asia (8) & ABA-PC & $0.224\pm0.073$ & $0.448\pm0.083$ & $0.111\pm0.080$\\
 & OptABA-PC & $\mathbf{0.536\pm0.061}^{***}$ & $\mathbf{0.540\pm0.063}^{***}$ & $\mathbf{0.533\pm0.060}^{***}$\\
 & MPC & $0.483\pm0.045^{\dagger\dagger\dagger}$ & $0.483\pm0.045^{\dagger\dagger\dagger}$ & $0.483\pm0.045^{\dagger\dagger\dagger}$\\
 & FGS & $0.561\pm0.102$ & $0.561\pm0.102$ & $0.561\pm0.102$\\
\midrule
ER (5) & ABA-PC & $0.562\pm0.224$ & $0.672\pm0.230$ & $0.475\pm0.252$\\
 & OptABA-PC & $\mathbf{0.589\pm0.213}^{*}$ & $0.657\pm0.227$ & $\mathbf{0.535\pm0.235}^{**}$\\
 & MPC & $\mathbf{0.712\pm0.251}^{\dagger\dagger\dagger}$ & $0.712\pm0.251$ & $\mathbf{0.712\pm0.251}^{\dagger\dagger\dagger}$\\
 & FGS & $\mathbf{0.677\pm0.239}^{\dagger\dagger}$ & $0.677\pm0.239$ & $\mathbf{0.677\pm0.239}^{\dagger\dagger\dagger}$\\
 & ASPCR-DAG & $0.377\pm0.292^{\dagger\dagger\dagger}$ & $0.377\pm0.292^{\dagger\dagger\dagger}$ & $0.377\pm0.292^{\dagger\dagger}$\\
\midrule
ER (8) & ABA-PC & $0.478\pm0.179$ & $0.687\pm0.149$ & $0.309\pm0.234$\\
 & OptABA-PC & $\mathbf{0.627\pm0.167}^{***}$ & $0.670\pm0.172$ & $\mathbf{0.586\pm0.182}^{***}$\\
 & MPC & $\mathbf{0.704\pm0.137}^{\dagger\dagger}$ & $0.704\pm0.137$ & $\mathbf{0.704\pm0.137}^{\dagger\dagger\dagger}$\\
 & FGS & $\mathbf{0.744\pm0.157}^{\dagger\dagger\dagger}$ & $\mathbf{0.744\pm0.157}^{\dagger}$ & $\mathbf{0.744\pm0.157}^{\dagger\dagger\dagger}$\\
\midrule
SF (5) & ABA-PC & $0.670\pm0.176$ & $0.813\pm0.193$ & $0.560\pm0.221$\\
 & OptABA-PC & $\mathbf{0.737\pm0.148}^{**}$ & $0.851\pm0.160$ & $\mathbf{0.646\pm0.201}^{**}$\\
 & MPC & $\mathbf{0.874\pm0.173}^{\dagger\dagger\dagger}$ & $0.874\pm0.173$ & $\mathbf{0.874\pm0.173}^{\dagger\dagger\dagger}$\\
 & FGS & $\mathbf{0.854\pm0.173}^{\dagger\dagger\dagger}$ & $0.854\pm0.173$ & $\mathbf{0.854\pm0.173}^{\dagger\dagger\dagger}$\\
 & ASPCR-DAG & $0.448\pm0.303^{\dagger\dagger\dagger}$ & $0.448\pm0.303^{\dagger\dagger\dagger}$ & $0.448\pm0.303^{\dagger\dagger\dagger}$\\
\midrule
SF (8) & ABA-PC & $0.498\pm0.154$ & $0.739\pm0.148$ & $0.286\pm0.185$\\
 & OptABA-PC & $\mathbf{0.629\pm0.129}^{***}$ & $0.690\pm0.142$ & $\mathbf{0.581\pm0.144}^{***}$\\
 & MPC & $\mathbf{0.774\pm0.163}^{\dagger\dagger\dagger}$ & $\mathbf{0.774\pm0.163}^{\dagger\dagger}$ & $\mathbf{0.774\pm0.163}^{\dagger\dagger\dagger}$\\
 & FGS & $0.683\pm0.246$ & $0.683\pm0.246$ & $\mathbf{0.683\pm0.246}^{\dagger\dagger}$\\
\bottomrule
\end{tabular*}
\end{table}

\begin{table}[t]
\caption{Average, best, and worst DAG SHD within each run of the matched primary experiments (mean $\pm$ standard deviation over seeds; lower is better). ABA-PC and OptABA-PC are scored over saved compatible DAGs; MPC and FGS over consistent extensions of each valid returned CPDAG; ASPCR-DAG on its single DAG, so its average, best, and worst coincide. Subscripts mark MPC's reduced samples: 38 on Survey and 49 on ER(8). Stars mark the significant ABA-PC--OptABA-PC winner. Daggers compare an external method with OptABA-PC; bold daggers favour that method, unbold daggers favour OptABA-PC. One, two, and three symbols denote BH-adjusted $p<0.05$, $0.01$, and $0.001$. Compatibility counts are in Table~\ref{tab:final-runtime-completion}.}
\label{tab:final-dag-ranges}
\centering
\small
\setlength{\tabcolsep}{3pt}
\begin{tabular*}{\linewidth}{@{\extracolsep{\fill}}llccc@{}}
\toprule
Dataset & Method & SHD avg & SHD best & SHD worst\\
\midrule
Cancer (5) & ABA-PC & $2.98\pm1.20$ & $1.44\pm1.18$ & $4.18\pm1.96$\\
 & OptABA-PC & $\mathbf{1.89\pm1.14}^{***}$ & $1.70\pm1.15$ & $\mathbf{2.00\pm1.21}^{***}$\\
 & MPC & $1.70\pm1.01$ & $\mathbf{0.86\pm0.61}^{\dagger\dagger\dagger}$ & $2.28\pm1.37$\\
 & FGS & $\mathbf{1.11\pm0.94}^{\dagger\dagger\dagger}$ & $\mathbf{0.90\pm0.81}^{\dagger\dagger\dagger}$ & $\mathbf{1.30\pm1.18}^{\dagger\dagger}$\\
 & ASPCR-DAG & $2.34\pm0.89^{\dagger}$ & $2.34\pm0.89^{\dagger\dagger}$ & $2.34\pm0.89$\\
\midrule
Earthquake (5) & ABA-PC & $0.10\pm0.34$ & $0.08\pm0.27$ & $0.12\pm0.48$\\
 & OptABA-PC & $0.08\pm0.27$ & $0.08\pm0.27$ & $0.08\pm0.27$\\
 & MPC & $0.08\pm0.27$ & $0.08\pm0.27$ & $0.08\pm0.27$\\
 & FGS & $4.41\pm1.15^{\dagger\dagger\dagger}$ & $3.16\pm1.11^{\dagger\dagger\dagger}$ & $5.32\pm1.38^{\dagger\dagger\dagger}$\\
 & ASPCR-DAG & $1.06\pm1.20^{\dagger\dagger\dagger}$ & $1.06\pm1.20^{\dagger\dagger\dagger}$ & $1.06\pm1.20^{\dagger\dagger\dagger}$\\
\midrule
Survey (6) & ABA-PC & $6.73\pm1.39$ & $\mathbf{2.56\pm1.47}^{*}$ & $10.3\pm2.70$\\
 & OptABA-PC & $\mathbf{3.77\pm0.91}^{***}$ & $3.18\pm0.92$ & $\mathbf{4.38\pm1.14}^{***}$\\
 & MPC & $\mathbf{3.17\pm1.49}^{\dagger}_{38}$ & $\mathbf{2.21\pm0.93}^{\dagger\dagger\dagger}_{38}$ & $4.11\pm2.10_{38}$\\
 & FGS & $3.47\pm1.28$ & $\mathbf{2.44\pm1.33}^{\dagger\dagger}$ & $4.52\pm1.53$\\
 & ASPCR-DAG & $4.16\pm1.23^{\dagger}$ & $4.16\pm1.23^{\dagger\dagger\dagger}$ & $4.16\pm1.23$\\
\midrule
Asia (8) & ABA-PC & $14.0\pm2.81$ & $8.02\pm2.27$ & $19.5\pm3.45$\\
 & OptABA-PC & $\mathbf{4.51\pm0.58}^{***}$ & $\mathbf{3.72\pm0.57}^{***}$ & $\mathbf{5.02\pm0.82}^{***}$\\
 & MPC & $\mathbf{3.92\pm0.25}^{\dagger\dagger\dagger}$ & $\mathbf{2.82\pm0.44}^{\dagger\dagger\dagger}$ & $5.02\pm0.14$\\
 & FGS & $6.01\pm1.70^{\dagger\dagger\dagger}$ & $5.16\pm1.92^{\dagger\dagger\dagger}$ & $6.76\pm1.61^{\dagger\dagger\dagger}$\\
\midrule
ER (5) & ABA-PC & $3.17\pm1.73$ & $1.34\pm1.67$ & $4.80\pm2.26$\\
 & OptABA-PC & $\mathbf{2.49\pm1.35}^{***}$ & $1.04\pm1.26$ & $\mathbf{3.76\pm1.78}^{***}$\\
 & MPC & $\mathbf{1.23\pm0.94}^{\dagger\dagger\dagger}$ & $\mathbf{0.46\pm0.93}^{\dagger}$ & $\mathbf{1.94\pm1.22}^{\dagger\dagger\dagger}$\\
 & FGS & $\mathbf{1.58\pm1.11}^{\dagger\dagger\dagger}$ & $0.72\pm1.20$ & $\mathbf{2.38\pm1.34}^{\dagger\dagger\dagger}$\\
 & ASPCR-DAG & $3.26\pm1.19^{\dagger\dagger}$ & $3.26\pm1.19^{\dagger\dagger\dagger}$ & $3.26\pm1.19$\\
\midrule
ER (8) & ABA-PC & $7.99\pm3.77$ & $3.20\pm2.49$ & $12.5\pm5.64$\\
 & OptABA-PC & $\mathbf{3.85\pm1.58}^{***}$ & $\mathbf{2.14\pm1.67}^{**}$ & $\mathbf{5.44\pm2.07}^{***}$\\
 & MPC & $\mathbf{2.38\pm1.21}^{\dagger\dagger\dagger}_{49}$ & $\mathbf{1.37\pm1.30}^{\dagger\dagger}_{49}$ & $\mathbf{3.33\pm1.46}^{\dagger\dagger\dagger}_{49}$\\
 & FGS & $\mathbf{2.07\pm1.41}^{\dagger\dagger\dagger}$ & $\mathbf{0.96\pm1.54}^{\dagger\dagger}$ & $\mathbf{3.08\pm1.58}^{\dagger\dagger\dagger}$\\
\midrule
SF (5) & ABA-PC & $3.07\pm1.94$ & $0.86\pm1.47$ & $4.88\pm3.11$\\
 & OptABA-PC & $\mathbf{2.52\pm1.89}^{**}$ & $\mathbf{0.56\pm1.28}^{*}$ & $\mathbf{4.08\pm3.10}^{**}$\\
 & MPC & $\mathbf{0.48\pm0.58}^{\dagger\dagger\dagger}$ & $\mathbf{0.18\pm0.44}^{\dagger}$ & $\mathbf{0.78\pm0.82}^{\dagger\dagger\dagger}$\\
 & FGS & $\mathbf{0.69\pm0.85}^{\dagger\dagger\dagger}$ & $0.38\pm0.81$ & $\mathbf{1.00\pm0.99}^{\dagger\dagger\dagger}$\\
 & ASPCR-DAG & $2.60\pm1.26$ & $2.60\pm1.26^{\dagger\dagger\dagger}$ & $\mathbf{2.60\pm1.26}^{\dagger}$\\
\midrule
SF (8) & ABA-PC & $8.14\pm3.46$ & $3.20\pm2.69$ & $12.7\pm4.83$\\
 & OptABA-PC & $\mathbf{4.76\pm1.97}^{***}$ & $2.98\pm1.73$ & $\mathbf{6.10\pm2.67}^{***}$\\
 & MPC & $\mathbf{1.88\pm1.23}^{\dagger\dagger\dagger}$ & $\mathbf{1.24\pm1.20}^{\dagger\dagger\dagger}$ & $\mathbf{2.52\pm1.40}^{\dagger\dagger\dagger}$\\
 & FGS & $\mathbf{2.82\pm1.88}^{\dagger\dagger\dagger}$ & $\mathbf{2.08\pm1.77}^{\dagger}$ & $\mathbf{3.56\pm2.10}^{\dagger\dagger\dagger}$\\
\bottomrule
\end{tabular*}
\end{table}

\begin{table}[t]
\caption{Average, best, and worst DAG F1 within each run of the matched primary experiments (mean $\pm$ standard deviation over seeds; higher is better). ABA-PC and OptABA-PC are scored over saved compatible DAGs; MPC and FGS over consistent extensions of each valid returned CPDAG; ASPCR-DAG on its single DAG, so its average, best, and worst coincide. Subscripts mark MPC's reduced samples: 38 on Survey and 49 on ER(8). Stars mark the significant ABA-PC--OptABA-PC winner. Daggers compare an external method with OptABA-PC; bold daggers favour that method, unbold daggers favour OptABA-PC. One, two, and three symbols denote BH-adjusted $p<0.05$, $0.01$, and $0.001$. Compatibility counts are in Table~\ref{tab:final-runtime-completion}.}
\label{tab:final-dag-f1-ranges}
\centering
\small
\setlength{\tabcolsep}{3pt}
\begin{tabular*}{\linewidth}{@{\extracolsep{\fill}}llccc@{}}
\toprule
Dataset & Method & F1 avg & F1 best & F1 worst\\
\midrule
Cancer (5) & ABA-PC & $0.483\pm0.223$ & $0.688\pm0.270$ & $0.346\pm0.264$\\
 & OptABA-PC & $\mathbf{0.563\pm0.262}^{*}$ & $0.616\pm0.277$ & $\mathbf{0.531\pm0.281}^{***}$\\
 & MPC & $0.623\pm0.222$ & $\mathbf{0.856\pm0.113}^{\dagger\dagger\dagger}$ & $0.462\pm0.326$\\
 & FGS & $\mathbf{0.739\pm0.215}^{\dagger\dagger}$ & $\mathbf{0.794\pm0.179}^{\dagger\dagger\dagger}$ & $\mathbf{0.691\pm0.282}^{\dagger}$\\
 & ASPCR-DAG & $0.462\pm0.225^{\dagger}$ & $0.462\pm0.225^{\dagger\dagger}$ & $0.462\pm0.225$\\
\midrule
Earthquake (5) & ABA-PC & $0.989\pm0.039$ & $0.991\pm0.030$ & $0.985\pm0.062$\\
 & OptABA-PC & $0.991\pm0.030$ & $0.991\pm0.030$ & $0.991\pm0.030$\\
 & MPC & $0.991\pm0.030$ & $0.991\pm0.030$ & $0.991\pm0.030$\\
 & FGS & $0.417\pm0.149^{\dagger\dagger\dagger}$ & $0.652\pm0.145^{\dagger\dagger\dagger}$ & $0.246\pm0.194^{\dagger\dagger\dagger}$\\
 & ASPCR-DAG & $0.871\pm0.176^{\dagger\dagger\dagger}$ & $0.871\pm0.176^{\dagger\dagger\dagger}$ & $0.871\pm0.176^{\dagger\dagger\dagger}$\\
\midrule
Survey (6) & ABA-PC & $0.340\pm0.103$ & $\mathbf{0.677\pm0.180}^{**}$ & $0.101\pm0.135$\\
 & OptABA-PC & $\mathbf{0.482\pm0.140}^{***}$ & $0.586\pm0.130$ & $\mathbf{0.373\pm0.204}^{***}$\\
 & MPC & $0.528\pm0.242_{38}$ & $\mathbf{0.720\pm0.134}^{\dagger\dagger\dagger}_{38}$ & $0.342\pm0.367_{38}$\\
 & FGS & $0.465\pm0.218$ & $\mathbf{0.660\pm0.236}^{\dagger}$ & $0.268\pm0.270^{\dagger}$\\
 & ASPCR-DAG & $0.385\pm0.220^{\dagger\dagger}$ & $0.385\pm0.220^{\dagger\dagger\dagger}$ & $0.385\pm0.220$\\
\midrule
Asia (8) & ABA-PC & $0.310\pm0.095$ & $0.565\pm0.105$ & $0.109\pm0.107$\\
 & OptABA-PC & $\mathbf{0.665\pm0.009}^{***}$ & $\mathbf{0.754\pm0.045}^{***}$ & $\mathbf{0.613\pm0.032}^{***}$\\
 & MPC & $0.620\pm0.013^{\dagger\dagger\dagger}$ & $\mathbf{0.786\pm0.037}^{\dagger\dagger\dagger}$ & $0.454\pm0.014^{\dagger\dagger\dagger}$\\
 & FGS & $0.610\pm0.105^{\dagger\dagger}$ & $0.696\pm0.132^{\dagger\dagger}$ & $0.533\pm0.094^{\dagger\dagger\dagger}$\\
\midrule
ER (5) & ABA-PC & $0.567\pm0.215$ & $0.817\pm0.229$ & $0.348\pm0.264$\\
 & OptABA-PC & $\mathbf{0.620\pm0.188}^{**}$ & $0.847\pm0.187$ & $\mathbf{0.417\pm0.248}^{**}$\\
 & MPC & $\mathbf{0.768\pm0.177}^{\dagger\dagger\dagger}$ & $\mathbf{0.924\pm0.170}^{\dagger}$ & $\mathbf{0.623\pm0.242}^{\dagger\dagger\dagger}$\\
 & FGS & $\mathbf{0.710\pm0.196}^{\dagger\dagger}$ & $0.886\pm0.205$ & $\mathbf{0.547\pm0.255}^{\dagger\dagger}$\\
 & ASPCR-DAG & $0.426\pm0.251^{\dagger\dagger\dagger}$ & $0.426\pm0.251^{\dagger\dagger\dagger}$ & $0.426\pm0.251$\\
\midrule
ER (8) & ABA-PC & $0.496\pm0.181$ & $0.773\pm0.167$ & $0.261\pm0.216$\\
 & OptABA-PC & $\mathbf{0.650\pm0.137}^{***}$ & $0.814\pm0.154$ & $\mathbf{0.494\pm0.177}^{***}$\\
 & MPC & $\mathbf{0.727\pm0.140}^{\dagger\dagger}_{49}$ & $0.858\pm0.150_{49}$ & $\mathbf{0.606\pm0.175}^{\dagger\dagger\dagger}_{49}$\\
 & FGS & $\mathbf{0.774\pm0.139}^{\dagger\dagger\dagger}$ & $\mathbf{0.916\pm0.146}^{\dagger\dagger\dagger}$ & $\mathbf{0.643\pm0.175}^{\dagger\dagger\dagger}$\\
\midrule
SF (5) & ABA-PC & $0.594\pm0.213$ & $0.879\pm0.201$ & $0.370\pm0.289$\\
 & OptABA-PC & $\mathbf{0.650\pm0.197}^{**}$ & $\mathbf{0.915\pm0.181}^{*}$ & $\mathbf{0.440\pm0.273}^{**}$\\
 & MPC & $\mathbf{0.893\pm0.120}^{\dagger\dagger\dagger}$ & $\mathbf{0.968\pm0.080}^{\dagger}$ & $\mathbf{0.818\pm0.184}^{\dagger\dagger\dagger}$\\
 & FGS & $\mathbf{0.861\pm0.143}^{\dagger\dagger\dagger}$ & $0.939\pm0.127$ & $\mathbf{0.784\pm0.193}^{\dagger\dagger\dagger}$\\
 & ASPCR-DAG & $0.448\pm0.283^{\dagger\dagger\dagger}$ & $0.448\pm0.283^{\dagger\dagger\dagger}$ & $0.448\pm0.283$\\
\midrule
SF (8) & ABA-PC & $0.481\pm0.166$ & $0.769\pm0.179$ & $0.235\pm0.197$\\
 & OptABA-PC & $\mathbf{0.601\pm0.146}^{***}$ & $0.733\pm0.153$ & $\mathbf{0.499\pm0.182}^{***}$\\
 & MPC & $\mathbf{0.773\pm0.143}^{\dagger\dagger\dagger}$ & $\mathbf{0.871\pm0.130}^{\dagger\dagger\dagger}$ & $\mathbf{0.674\pm0.181}^{\dagger\dagger\dagger}$\\
 & FGS & $0.664\pm0.241$ & $0.784\pm0.212$ & $0.542\pm0.291$\\
\bottomrule
\end{tabular*}
\end{table}

\begin{table}[t]
\caption{Timing, structural output size, and CI compatibility for the matched primary experiments (mean $\pm$ standard deviation over 50 seeds for each reported method--dataset pair). Returned CPDAGs and DAGs are structural output/extension counts. CI-compatible counts additionally require all retained post-repair facts for ABA-PC/OptABA-PC or every recorded native test for MPC/ASPCR-DAG; compatibility is undefined for score-based FGS (`--'). ABA-PC and OptABA-PC times include construction, solving, and compatible-set evaluation and are directly paired. MPC/FGS time the learner call, while ASPCR-DAG reports its native end-to-end R run; their values and paired daggers describe implementations with different timing scopes and are not a controlled cross-family speed ranking. Stars identify the significant ABA-PC--OptABA-PC winner. Bold daggers favour the external method and unbold daggers favour OptABA-PC. One, two, and three symbols denote BH-adjusted $p<0.05$, $0.01$, and $0.001$.}
\label{tab:final-runtime-completion}
\centering
\scriptsize
\setlength{\tabcolsep}{3pt}
\resizebox{\linewidth}{!}{%
\begin{tabular}{llccccc}
\toprule
Dataset & Method & Time (s) & Returned CPDAGs & Returned DAGs & CI-compat. CPDAGs & CI-compat. DAGs\\
\midrule
Cancer (5) & ABA-PC & $1.51\pm0.48$ & $15.1\pm28.8$ & $31.9\pm57.1$ & $15.1\pm28.8$ & $31.9\pm57.1$\\
 & OptABA-PC & $\mathbf{1.28\pm0.38}^{***}$ & $1.0\pm0.0$ & $1.5\pm1.1$ & $\mathbf{1.0\pm0.0}^{***}$ & $\mathbf{1.5\pm1.1}^{***}$\\
 & MPC & $\mathbf{0.01\pm0.00}^{\dagger\dagger\dagger}$ & $1.0\pm0.0$ & $3.3\pm1.6$ & $0.0\pm0.0$ & $0.0\pm0.0$\\
 & FGS & $\mathbf{0.08\pm0.09}^{\dagger\dagger\dagger}$ & $1.0\pm0.0$ & $1.8\pm1.5$ & -- & --\\
 & ASPCR-DAG & $12.8\pm0.09^{\dagger\dagger\dagger}$ & $1.0\pm0.0$ & $1.0\pm0.0$ & $0.0\pm0.0$ & $0.0\pm0.0$\\
\midrule
Earthquake (5) & ABA-PC & $1.83\pm0.83$ & $1.0\pm0.3$ & $1.1\pm0.5$ & $1.0\pm0.3$ & $1.1\pm0.5$\\
 & OptABA-PC & $\mathbf{1.68\pm0.81}^{***}$ & $1.0\pm0.0$ & $1.1\pm0.2$ & $1.0\pm0.0$ & $1.1\pm0.2$\\
 & MPC & $\mathbf{0.01\pm0.00}^{\dagger\dagger\dagger}$ & $1.0\pm0.0$ & $1.1\pm0.2$ & $0.9\pm0.2$ & $1.0\pm0.3$\\
 & FGS & $\mathbf{0.10\pm0.09}^{\dagger\dagger\dagger}$ & $1.0\pm0.0$ & $7.6\pm6.2$ & -- & --\\
 & ASPCR-DAG & $12.7\pm0.08^{\dagger\dagger\dagger}$ & $1.0\pm0.0$ & $1.0\pm0.0$ & $0.4\pm0.5$ & $0.4\pm0.5$\\
\midrule
Survey (6) & ABA-PC & $15.7\pm30.2$ & $1278.5\pm2807.2$ & $4725.3\pm9759.0$ & $1278.5\pm2807.2$ & $4725.3\pm9759.0$\\
 & OptABA-PC & $2.94\pm1.75$ & $1.2\pm0.7$ & $2.9\pm2.0$ & $\mathbf{1.2\pm0.7}^{***}$ & $\mathbf{2.9\pm2.0}^{***}$\\
 & MPC & $\mathbf{0.02\pm0.00}^{\dagger\dagger\dagger}$ & $0.8\pm0.4$ & $2.9\pm2.4$ & $0.0\pm0.0$ & $0.0\pm0.0$\\
 & FGS & $\mathbf{0.08\pm0.09}^{\dagger\dagger\dagger}$ & $1.0\pm0.0$ & $4.2\pm1.6$ & -- & --\\
 & ASPCR-DAG & $39.2\pm0.72^{\dagger\dagger\dagger}$ & $1.0\pm0.0$ & $1.0\pm0.0$ & $0.0\pm0.0$ & $0.0\pm0.0$\\
\midrule
Asia (8) & ABA-PC & $95.4\pm48.8$ & $10822.7\pm7112.3$ & $27191.5\pm16181.0$ & $10822.7\pm7112.3$ & $27191.5\pm16181.0$\\
 & OptABA-PC & $\mathbf{66.5\pm84.9}^{*}$ & $1.2\pm0.4$ & $4.6\pm3.8$ & $\mathbf{1.2\pm0.4}^{***}$ & $\mathbf{4.6\pm3.8}^{***}$\\
 & MPC & $\mathbf{0.04\pm0.01}^{\dagger\dagger\dagger}$ & $1.0\pm0.0$ & $4.8\pm1.6$ & $0.0\pm0.0$ & $0.0\pm0.0$\\
 & FGS & $\mathbf{0.08\pm0.08}^{\dagger\dagger\dagger}$ & $1.0\pm0.0$ & $4.8\pm1.6$ & -- & --\\
\midrule
ER (5) & ABA-PC & $1.01\pm0.15$ & $8.2\pm13.0$ & $32.1\pm61.1$ & $8.2\pm13.0$ & $32.1\pm61.1$\\
 & OptABA-PC & $\mathbf{0.90\pm0.08}^{***}$ & $3.0\pm3.4$ & $11.6\pm15.6$ & $\mathbf{3.0\pm3.4}^{***}$ & $\mathbf{11.6\pm15.6}^{**}$\\
 & MPC & $\mathbf{0.01\pm0.00}^{\dagger\dagger\dagger}$ & $1.0\pm0.0$ & $3.3\pm2.5$ & $0.2\pm0.4$ & $0.6\pm1.3$\\
 & FGS & $\mathbf{0.06\pm0.07}^{\dagger\dagger\dagger}$ & $1.0\pm0.0$ & $3.6\pm2.6$ & -- & --\\
 & ASPCR-DAG & $15.5\pm0.23^{\dagger\dagger\dagger}$ & $1.0\pm0.0$ & $1.0\pm0.0$ & $0.1\pm0.3$ & $0.1\pm0.3$\\
\midrule
ER (8) & ABA-PC & $\mathbf{48.0\pm54.3}^{***}$ & $3840.0\pm5377.9$ & $11539.7\pm15313.7$ & $3840.0\pm5377.9$ & $11539.7\pm15313.7$\\
 & OptABA-PC & $139.1\pm139.4$ & $4.7\pm9.5$ & $32.0\pm64.1$ & $\mathbf{4.7\pm9.5}^{***}$ & $\mathbf{32.0\pm64.1}^{***}$\\
 & MPC & $\mathbf{0.02\pm0.01}^{\dagger\dagger\dagger}$ & $1.0\pm0.1$ & $5.2\pm4.2$ & $0.1\pm0.2$ & $0.2\pm1.0$\\
 & FGS & $\mathbf{0.07\pm0.07}^{\dagger\dagger\dagger}$ & $1.0\pm0.0$ & $5.8\pm4.5$ & -- & --\\
\midrule
SF (5) & ABA-PC & $1.85\pm2.03$ & $88.4\pm202.7$ & $372.6\pm856.2$ & $88.4\pm202.7$ & $372.6\pm856.2$\\
 & OptABA-PC & $\mathbf{1.76\pm2.05}^{***}$ & $85.1\pm203.9$ & $357.6\pm861.0$ & $\mathbf{85.1\pm203.9}^{**}$ & $\mathbf{357.6\pm861.0}^{**}$\\
 & MPC & $\mathbf{0.01\pm0.00}^{\dagger\dagger\dagger}$ & $1.0\pm0.0$ & $1.7\pm0.9$ & $0.3\pm0.5$ & $0.5\pm0.9$\\
 & FGS & $\mathbf{0.06\pm0.08}^{\dagger\dagger\dagger}$ & $1.0\pm0.0$ & $1.7\pm1.1$ & -- & --\\
 & ASPCR-DAG & $15.4\pm0.25^{\dagger\dagger\dagger}$ & $1.0\pm0.0$ & $1.0\pm0.0$ & $0.1\pm0.3$ & $0.1\pm0.3$\\
\midrule
SF (8) & ABA-PC & $\mathbf{42.3\pm53.4}^{**}$ & $3108.1\pm4287.4$ & $8716.5\pm11926.2$ & $3108.1\pm4287.4$ & $8716.5\pm11926.2$\\
 & OptABA-PC & $99.1\pm122.0$ & $170.3\pm1023.2$ & $969.8\pm6017.6$ & $\mathbf{170.3\pm1023.2}^{***}$ & $\mathbf{969.8\pm6017.6}^{***}$\\
 & MPC & $\mathbf{0.02\pm0.01}^{\dagger\dagger\dagger}$ & $1.0\pm0.0$ & $2.8\pm1.4$ & $0.0\pm0.2$ & $0.1\pm0.5$\\
 & FGS & $\mathbf{0.06\pm0.07}^{\dagger\dagger\dagger}$ & $1.0\pm0.0$ & $3.3\pm1.8$ & -- & --\\
\bottomrule
\end{tabular}
}%
\end{table}

\fi

\end{document}